\documentclass{article}

\usepackage[utf8]{inputenc}
\usepackage{hyperref}
\usepackage[square, numbers]{natbib}
\usepackage[english]{babel}
\usepackage{booktabs} 
\usepackage[ruled]{algorithm2e} 

\usepackage{fullpage}
\usepackage{url}
\usepackage{graphicx}
\usepackage{amsmath,amssymb,dsfont,xcolor,amssymb}
\usepackage{amsthm}
\usepackage{bm}
\usepackage{mathrsfs}
\usepackage{appendix}
\usepackage{graphicx}
\usepackage{pifont}
\usepackage{xcolor}
\usepackage{xcolor}
\usepackage{algpseudocode}
\usepackage{bbm}
\usepackage{thm-restate}

\newtheorem{thm}{Theorem}
\newtheorem{lemma}{Lemma}
\newtheorem{definition}{Definition}
\newtheorem{cor}{Corollary}

\newtheorem{fact}{Fact}
\newtheorem{clm}{Claim}

\DeclareMathOperator*{\argmax}{argmax}
\DeclareMathOperator*{\argmin}{argmin}
\newcommand{\Hs}{\mathcal{H}}
\newcommand{\X}{\mathcal{X}}
\newcommand{\Y}{\mathcal{Y}}
\newcommand{\Z}{\mathcal{Z}}
\newcommand{\R}{\mathbb{R}}
\newcommand{\E}{\mathbb{E}}
\newcommand{\F}{\mathcal{F}}
\newcommand{\epsvec}{\vec{\epsilon}}
\newcommand{\opt}{\text{OPT}}
\newcommand{\1}{\mathds{1}}
\newcommand{\Ps}{\mathcal{P}}


\title{Lexicographically Fair Learning:\\ Algorithms and Generalization\footnote{Author emails: ediana@wharton.upenn.edu, wesgill@seas.upenn.edu, igh@seas.upenn.edu, mkearns@cis.upenn.edu, aaroth@cis.upenn.edu, saeedsh@wharton.upenn.edu.}}
\author{Emily Diana \; \and Wesley Gill \; \and Ira Globus-Harris \; \and Michael Kearns \; \and Aaron Roth \; \and Saeed Sharifi-Malvajerdi}

\date{\textit{University of Pennsylvania}}

\begin{document}

\maketitle
\begin{abstract}
We extend the notion of minimax fairness in supervised learning problems to its natural conclusion: \emph{lexicographic} minimax fairness (or
\emph{lexifairness} for short). Informally, given a collection of demographic groups of interest, minimax fairness asks that the error of the group with the \emph{highest} error be minimized. Lexifairness goes further and asks that amongst all minimax fair solutions, the error of the group with the second highest error should be minimized, and amongst all of \emph{those} solutions, the error of the group with the third highest error should be minimized, and so on. Despite its naturalness, correctly defining lexifairness is considerably more subtle than minimax fairness, because of inherent sensitivity to approximation error. We give a notion of approximate lexifairness that avoids this issue, and then derive oracle-efficient algorithms for finding approximately lexifair solutions in a very general setting. When the underlying empirical risk minimization problem absent fairness constraints is convex (as it is, for example, with linear and logistic regression), our algorithms are provably efficient even in the worst case. Finally, we show generalization bounds---approximate lexifairness on the training sample implies approximate lexifairness on the true distribution with high probability. Our ability to prove generalization bounds depends on our choosing definitions that avoid the instability of naive definitions. 
\end{abstract}

 \thispagestyle{empty} \setcounter{page}{0}
 \clearpage

\section{Introduction}
\label{sec:intro}

Most notions of statistical group fairness ask that a model approximately equalize some error statistic across demographic groups. Often this is motivated as a tradeoff: the goal is to lower the error of the most disadvantaged group, and if doing so requires increasing the error on some more advantaged group, so be it---this is a cost that we are willing to pay in the name of equity. But solutions which equalize group errors do \emph{not} in general mediate a clean tradeoff in which losses in accuracy on more advantaged groups result in increases in accuracy on less advantaged groups: instead, generically (i.e. except in the very special case in which the Bayes optimal error is identical for all groups), a constraint of equalizing group error rates may  require \emph{artificially increasing} the error on at least one group, without any corresponding benefit to any other group. 

A partial answer to this criticism of standard notions of group fairness is the classical notion of \emph{minimax fairness}, recently studied by \cite{minimax1,minimax2} in the context of supervised learning. Minimax fairness asks for a model which minimizes the error of the group \emph{most disadvantaged} by the model---i.e. the group with maximum group error. In doing so, it realizes the promise of equal error solutions in that it trades off higher error on populations more advantaged by the model for lower error on populations less advantaged by the model when this is possible---but without artificially increasing the error of any group when doing so.  Indeed, it is not hard to see that a minimax model necessarily \emph{weakly Pareto dominates} an equal error rate model, in the sense that group errors are only lower in the minimax solution \emph{simultaneously for all groups}. 

This narrative is most sensible if there are only two demographic groups of interest. If there are more than two groups, there may be many different minimax optimal models that have very different error profiles for groups other than the max error group. How should we choose amongst these? Prior work \cite{minimax2} has broken ties by optimizing for overall classification accuracy. But why should we entirely give up on the goal of optimizing for the most disadvantaged, partially enunciated in the motivation of minimax fairness, once we have fixed the error of only one of many groups? 

In this paper we propose the natural continuation of this idea, which we call \emph{lexicographic minimax fairness}. Informally speaking, this notion recurses on the idea that we wish to minimize the cost of the least well off. A model that satisfies lexicographic fairness, which we call a \emph{lexifair} model, will minimize the maximum error $\gamma_1$  on any group, amongst all possible models (i.e. a lexifair model is a also a minimax model). Further, amongst the set of all minimax models, a lexifair model must minimize the error of the group with the second highest error $\gamma_2$. Amongst all of these models, it further minimizes the error of the group with the third highest error $\gamma_3$, and so on.\footnote{It is easy to see that there are cases in which a lexifair model may have arbitrarily smaller errors than a minimax model on all but the worst-off group.}

\subsection{Our Contributions}
Our first contribution is a definition of (approximate) lexicographic minimax fairness. Correctly defining an actionable notion of lexicographic minimax fairness is surprisingly subtle. For standard computational and statistical reasons, it will not be possible to exactly match the distributional lexicographically optimal error rates $\gamma_1,\gamma_2,\gamma_3,$ etc. But as we will observe, these lexicographically optimal error rates can be arbitrarily unstable, in the sense that amongst the set of models that have minimax error larger than $\gamma_1$ by even an arbitrarily small margin, the value of the optimal lexifair error on the third highest error group $\gamma_3'$ can be arbitrarily larger than $\gamma_3$ (See our example in Section \ref{sec:stability}). An implication of this is that the vectors of errors $\gamma$, $\gamma'$ representing exact lexifair solutions in and out of sample can be entirely incomparable and arbitrarily different from one another. Hence we need a definition of approximate lexifairness that accounts for this instability, and allows for sensible statements about approximation and generalization. 

Another challenge arises in the interaction between our definitions and our (desired) algorithms.  A constraint on the \emph{highest} error amongst all groups, which arises in defining minimax error, is convex, and hence amenable to algorithmic optimization. However, naive specifications of lexifairness involve constraints on the second highest group errors, the third highest group errors, and more generally $k$th highest errors. These are non-convex constraints when taken in isolation. However, as it turns out, a constraint on the second highest error becomes convex when we restrict attention to minimax optimal classifiers, and more generally, a constraint on the $k$th highest error becomes convex once the values of the lower order group errors are constrained to their lexifair values. We show this by giving a clearly convex variant of our lexifair definition, specified by exponentially many \emph{linear constraints}, which  replace constraints on the $k$'th highest error groups  with constraints on the \emph{sums} of all $k$-tuples of group errors. We then show that our definition of ``convex lexifairness'' is equivalent to our original notion of lexifairness, at least in the exact case (absent approximation). 
We give our formal definitions in Section \ref{sec:gooddef}. 

With our notion of approximate lexifairness in hand and our convexified constraints, we give \emph{oracle-efficient} algorithms for finding approximate lexifair models in both the regression and classification case. This means that our algorithms are efficient reductions to the problem of unconstrained (that is, standard non-fair) learning over the same model class.  Despite the worst-case intractability of most natural learning problems even absent fairness considerations, a desirable feature of oracle-efficient algorithms is that they can be implemented using any of the common and practical heursitics for non-fair learning, often with good empirical success~\cite{kearns2019empirical,kearns2019average, jung2019elicitation, agarwal2018reductions}.

Our algorithms are based on solving the corresponding constrained optimization problem by recasting it  as a (Lagrangian) minmax optimization problem, and using no-regret dynamics. Because our ``convexified'' lexifairness constraints are exponentially numerous, the ``constraint player'' in our formulation has exponentially many strategies --- but as we show, we can efficiently optimize over her strategy space using an efficient separation oracle. Hence the constraint player can always play according to a ``best response'' strategy in our simulated dynamics.  When our base model class is continuous and our loss function convex (as it is with e.g. linear regression), then the ``learner'' in our dynamics can play gradient descent over parameter space. In this case, our oracle efficient-algorithms are in fact fully polynomial time algorithms because our reduction to weighted learning problems involves only \emph{non-negative} weights, which preserves convexity. In the classification case, when our loss function is \emph{non-convex}, we can  convexify it by considering the set of all probability distributions over base models. Here the parameters we optimize over become the weights of the probability distribution, and our loss function (i.e. the expected loss over the choice of a random model) becomes linear in our (enormous) parameter space. In this case, we are effectively solving a linear program that has both exponentially many variables and exponentially many constraints --- but we are nevertheless able to do so in an oracle-efficient manner by making appropriate use of the Follow the Perturbed Leader algorithm \cite{KALAI2005291} for no-regret learning. 

Finally, we prove a generalization theorem, showing that if we have a dataset $S$  (sampled i.i.d. from an underlying distribution) that has sufficiently many samples from each group, and if we have a model that is approximately lexifair for $S$, then the model is also approximately lexifair on the underlying distribution. This is significantly more involved than just a standard uniform convergence argument --- which would simply state that our in and out of sample errors on each group are close to one another --- because approximate lexifairness additionally depends on the precise \emph{relationship} between these group errors. Nevertheless, we show that uniform convergence is a sufficient condition to guarantee that in-sample lexifairness bounds correspond to out of sample lexifairness bounds. 

\subsection{Related Work}
There are many notions of group or statistical fairness that are studied in the fair machine learning literature, which are generally concerned with \emph{equalizing} various measures of error across protected groups; see e.g. \cite{berk2018fairness,mitchell2021algorithmic} for surveys of many such metrics. 

Minimax solutions are a classical approach to fairness that have been used in many contexts including scheduling, fair division, and clustering (see e.g. \cite{scheduling, fairdivision, samadi2018price, chen, cotter}). A number of these works employ techniques for solving two-player zero-sum games as part of their algorithmic solution \cite{cotter, chen}. This is the same general algorithmic framework that we use. More recently, minimax group error has been proposed as a fairness solution concept for classification problems in machine learning \cite{minimax1,minimax2,lahoti2020fairness}. These works generally do not specify how to choose between multiple minimax solutions, with the exception of \cite{minimax2}, which gives algorithms for choosing the solution with smallest overall classification error subject to the minimax constraint.

Lexicographic minimax fairness has been studied in the fair division literature for tasks such as quota allocation in mobile networks, load balancing, and network design \cite{7590424, 10.1145/3011282, Nace2006ATO, adHoc, 4177725, 4346554, 4469905, RePEc:hin:jnljam:340913, ogryczak}. As far as we know, we are the first to study lexicographic fairness in a learning context in which the quantities of interest must be \emph{estimated}, and hence the first to identify the sensitivity issues that arise when defining \emph{approximate} notions of lexicographic fairness.

An alternative approach to learning one classifier for all groups is to learn \emph{decoupled classifiers} \cite{decoupled1,decoupled2}, i.e. a separate classifier for each group. The decoupling of error rates across all groups eliminates tradeoffs between groups, and hence results in classifiers that are lexicographically fair (within the class of decoupled classifiers). But there are at least three important reasons one might want to learn a single classifier (the approach we take) rather than a separate classifier for each group. The first is that learning separate classifiers for each group requires that the groups be \emph{disjoint}, which is not needed in our approach. For example, we could divide the population into groups according to race, gender, and age---despite the fact that individuals will fall into multiple groups simultaneously. In other words, our algorithms can be used to obtain \emph{subgroup} or \emph{intersectional} fairness \cite{kearns2018preventing,kearns2019empirical,hebert2018multicalibration,kim2019multiaccuracy,jung2020moment,gupta2021online}. Second, learning separate classifiers for each group requires that protected group membership be used explicitly at classification time, which can be undesirable or illegal in important applications. Finally, learning a single classifier allows for the possibility of transfer learning, whereby a small sample from some group can be partially made up for by larger quantities of data from other (nevertheless related) groups. 


\section{Model and Definitions}
\label{sec:model}
Let $\Z = \X \times \Y$ be an arbitrary data domain. Each data point in our setting is a pair $z = (x,y)$ where $x \in \X$ is the feature vector and $y \in \Y$ is the response variable (i.e. the label). Let $\X$ consist of points belonging to $K$ (not necessarily disjoint) groups $\mathcal{G}_1, \ldots, \mathcal{G}_K$, so we can write $\X = \cup_{k=1}^K \mathcal{G}_k.$ We write $\Ps$ to denote an arbitrary distribution over $\Z$,  and $\Ps_k$ to denote the marginal distribution induced by $\Ps$ on the $k$th group $\mathcal{G}_k \times \Y$. Let $S = \{z_i\}_{i=1}^n$ be a data set of size $n$, which for the purposes of proving generalization bounds, we will take to consist of $n$ data points drawn i.i.d. from $\Ps$.  Denote the points in S that are contained in $\mathcal{G}_k$ by $G_k$, so we can write $S = \cup_{k=1}^K G_k$.

Let $\Hs \subseteq \left\{ h: \X \to \Y \right\}$ be the model class of interest, and let $L: \Hs \times \Z \to \R_{+}$ be a loss function that takes a data point $z$ and a model $h$ as inputs, and outputs the loss of $h$ on $z$. For instance, in the case of classification and zero-one loss, we have $L(h,z) = \1 \left[ h(x) \neq y \right]$. We will abuse notation and write $L_z (\cdot)$ for $L(\cdot, z)$ for any data point $z$. Throughout the paper, for any distribution $\Ps$, we write the expected loss of a model $h$ over $\Ps$ as:
\[
L_{\Ps} (h) \triangleq L(h, \Ps) \triangleq \E_{z \sim \Ps} \left[ L_z ( h )\right]
\]
We slightly abuse notation and write $L_S(h)$ to denote the empirical loss on a dataset $S$. Here and throughout the paper when $S$ plays the role of a distribution, we interpret that as the \emph{uniform distribution} over the points in $S$, and accordingly, $z \sim S$ as a point sampled \emph{uniformly at random} from $S$. 

Until Section \ref{sec:generalization}, we will work exclusively with sample quantities, and so for simplicity of notation, let us define $L_k (h) \triangleq L_{G_k} (h)$ to denote the \emph{sample} loss of a model $h$ on the $k$'th group. When necessary, we will write $L_k \left(h, \Ps \right)$ to denote $L_{\Ps_k} \left(h \right)$, the corresponding \emph{distributional} loss of $h$ on the $k$'th group. For any model $h$ and any data set $S = \cup_k \{ G_k \}$, let $\bar{h}_S$ be the ordering induced on the groups $\{ G_k \}_{k=1}^K$ by the loss of $h$, breaking ties arbitrarily. In other words, $\bar{h}_S: [K] \to [K]$ is any bijection such that the following condition holds:
$
L_{\bar{h}_S(1)} (h) \ge L_{\bar{h}_S(2)} (h) \ge \ldots \ge L_{\bar{h}_S(K)} (h)
$. The corresponding distributional ordering of the groups by any model $h$ is defined similarly: for any model $h$ and any distribution $\Ps$ over $\Z$, let $\bar{h}_\Ps: [K] \to [K]$ be the ordering induced on the groups $\{ \mathcal{G}_k \}_{k=1}^K$ by the expected loss of $h$, breaking ties arbitrarily. In other words, $\bar{h}_\Ps$ is any bijection such that the following condition holds:
$
L_{\bar{h}_\Ps (1)} (h, \Ps) \ge L_{\bar{h}_\Ps (2)} (h, \Ps) \ge \ldots \ge L_{\bar{h}_\Ps (K)} (h, \Ps)
$. When the distribution (data set) is clear from context, we elide the dependence on the distribution (data set) and simply write $\bar{h}$ for $\bar{h}_\Ps$ ($\bar{h}_S$). 

Our definition of lexifairness will be given recursively. At the base level, we  define $\Hs_{(0)} = \Hs$ to be the set of all models in our class. Then recursively for all $1 \le j \le K$, we define:
\[
\gamma_j \triangleq \min_{h \in \Hs_{(j-1)}} L_{\bar{h}(j)}(h), \quad \Hs_{(j)} \triangleq \left\{ h \in \Hs_{(j-1)}: L_{\bar{h}(j)}(h) = \gamma_j \right\}
\]

In words, $\gamma_j$ is the smallest error that any model in $\Hs_{(j-1)}$ obtains on the group that has the $j$th highest error, and $\Hs_{(j)}$ is the set of \emph{all} models in $\Hs_{(j-1)}$ that attain this minimum --- i.e. that have $j$th highest error equal to $\gamma_j$. Thus, $\gamma_1$ is the minimax error --- i.e. the highest group error for the model that is chosen to \emph{minimize} the maximum group error. Similarly, $\gamma_2$ is the error of the second highest group for all minimax optimal models that further minimize the error of the second highest group, and so on. With this notation in hand, we can define exact lexifairness as follows:

\begin{definition}[Exact Lexicographic Fairness]\label{def:exactlexifair}
    Let $1 \le \ell \le K$. We say a model $h \in \Hs$ satisfies level-$\ell$ (exact) lexicographic fairness (lexifairness) if for all $j \le \ell$, $L_{\bar{h}(j)} (h) \le \gamma_j$.
\end{definition}

Minimax fairness corresponds to level-1 lexifairness. This is a definition of \emph{exact} lexifairness, in that it permits no approximation to the error rates --- i.e. we require $L_{\bar{h}(j)} (h) \le \gamma_j$ for all $j$, and hence $L_{\bar{h}(j)} (h) = \gamma_j$ for all $j$. For a variety of reasons, we will need definitions that tolerate approximation. For example, because we inevitably have to train on a fixed dataset, but want our guarantees to generalize to new datasets drawn from the same distribution, we will need to accommodate statistical approximation. The optimization techniques we will bring to bear will also only be able to \emph{approximate} lexifairness, even in sample. But it turns out that defining a sensible approximate notion of lexifairness is more subtle than it first appears.

\subsection{Approximate Lexifairness: Stability and Convexity}

We begin with the ``obvious'' but ultimately flawed definition of approximate lexifairness (Definition~\ref{def:failedlexifair}), and then explain why it is lacking in stability. This will lead us to the definitions we finally adopt: Definition \ref{def:lexifair} and its \emph{convexified} version (Definition~\ref{def:convexlexifair}), 
which we show is equivalent (Claim \ref{clm:relationship}), and
for which we can develop efficient algorithms.

\subsubsection{The Challenge of Stability}
\label{sec:stability}
The most natural seeming definition of approximate lexifairness begins with our notion of exact lexifairness (Definition \ref{def:exactlexifair}), and adds slack to all of the inequalities contained within. In other words, we attempt to find a model that has sorted group errors $\gamma_1',\gamma_2',\ldots,\gamma_K'$ that pointwise approximate the optimal lexifair vector of sorted group errors $\gamma_1,\ldots,\gamma_K$. 

\begin{definition}[A Flawed Definition]\label{def:failedlexifair}
    Let $1 \le \ell \le K$ and $\alpha \ge 0$. We say a model $h \in \Hs$ satisfies $(\ell, \alpha)$-lexicographic fairness if for all $j \le \ell$, $L_{\bar{h}(j)} (h) \le \gamma_j + \alpha$.
\end{definition}

To see the problem with the above definition, consider a setting with three groups, and a model class $\Hs$ that contains all distributions (or randomized classifiers) over two pure classifiers $ \{h_1,h_2\}$. Imagine that $h_1$ induces the (unsorted) vector of group error rates $\langle 0.5, 0.5, 0 \rangle$, and $h_2$ induces the  (unsorted) vector of  group error rates $\langle 0.5 + 2\alpha, 0, 0.5 \rangle$, for some arbitrarily small $\alpha > 0$. Note that it is easy to construct distributions over labeled instances with exactly these group error vectors by simply arranging each classifier to disagree with the labels on the specified fraction of a group. So, for simplicity we abstract away the data and directly discuss the error vectors. 

The minimax group error for this model class is $\gamma_1 = 0.5$, and is achieved only by $h_1$ which has error 0.5 on the first and second groups. Since the largest group error of $h_2$ is also on the first group with value $0.5 + 2\alpha > 0.5$, any distribution over $\{h_1,h_2\}$ that places a non-zero probability on $h_2$ will therefore violate the (exact) minimax constraint. This in turn implies that $\Hs_{(1)} = \{ h_1 \}$. Therefore, the only exact lexifair model is $h_1$ and thus $\gamma_1 = 0.5$, $\gamma_2 = 0.5$, $\gamma_3 = 0$.

However, imagine that because of estimation error (as is inevitable if we are learning based on a finite sample) or optimization error (since we generally don't have access to exact optimization oracles in learning settings), we slightly misestimate the minimax group error $\gamma_1$ to be $\gamma_1' = 0.5 + \alpha$. If we now optimize, allowing the largest group error to be as much as $\gamma_1' = 0.5 +\alpha$, we may now find randomized classifiers which put  weight as large as 0.5 on $h_2$. The uniform distribution over $\{ h_1, h_2\}$ induces the unsorted vector of group errors  $\langle 0.5 + \alpha, 0.25, 0.25 \rangle$. The induced error on the second group (which is now also the group with second largest error) of 0.25 is considerably \emph{smaller} than $\gamma_2 = 0.5$. So far this appears to be all right, since $\gamma'_2 < \gamma_2$. But if we now attempt to optimize the error of the third highest error $\gamma'_3$, \emph{subject to the constraint} that the largest group error is (close to) $\gamma'_1$ and the second largest group error is (close to) $\gamma_2'$, we now find that we are forced  to settle for third highest group error $\gamma'_3 \approx 0.25$, which is considerably \emph{larger} than the value of the third highest group's error of $\gamma_3 = 0$ in the exact lexifair solution.

This example highlights a fundamental \emph{instability} of our first (flawed) attempt at defining approximate lexifairness: even arbitrarily small estimation (or optimization) error introduced to the minimax error rate $\gamma_1$ can result in large, non-monotonic effects for later group errors --- enforcing even a valid \emph{upper bound} on $\gamma_1$ can cause $\gamma_3$ to increase substantially, and these effects compound even further if we have more than three groups. 

\subsubsection{A Stable and Convex Definition}
\label{sec:gooddef}
With the proceeding example of the instability inherent in our (flawed) Definition \ref{def:failedlexifair}, we now give the definition of approximate lexifairness that we begin with:
\begin{definition}[Approximate Lexicographic Fairness]\label{def:lexifair}
    Fix a distribution $\Ps$. Let $1 \le \ell \le K$ and $\alpha \ge 0$. For any sequence of mappings $\epsvec = \left( \epsilon_1, \epsilon_2, \ldots, \epsilon_\ell \right)$ where $\epsilon_j \in \R^{\Hs}$, define $\Hs^{\epsvec}_{(0)} (\Ps) \triangleq \Hs$, and recursively for all $1 \le j \le \ell$ define:
    \[
    \Hs^{\epsvec}_{(j)} (\Ps) \triangleq \left\{h \in \Hs^{\epsvec}_{(j-1)} (\Ps): L_{\bar{h}(j)} (h, \Ps) \le \min_{g \in \Hs^{\epsvec}_{(j-1)} (\Ps)} L_{\bar{g}(j)} (g, \Ps) + \epsilon_j (h) \right\}
    \]
    and let $\Vert \epsvec \Vert_\infty = \max_{1 \le j \le \ell} \max_{h \in \Hs} \epsilon_j (h)$. We say a model $h \in \Hs$ satisfies $(\ell, \alpha)$-lexicographic fairness (``lexifairness'') with respect to $\Ps$ if there exists $\epsvec$ with $\Vert \epsvec \Vert_\infty \le \alpha$ such that for all $j \le \ell$:
    \[
    L_{\bar{h}(j)} (h, \Ps) \le \min_{g \in \Hs^{\epsvec}_{(j-1)} (\Ps)} L_{\bar{g}(j)} (g, \Ps) + \epsilon_j (h) + \alpha
    \]
    When we prove bounds on empirical lexifairness, we simply take the distribution to be the uniform distribution over the data set $S$. When the distribution is clear from context, we will write $ \Hs_{(j)}^{\epsvec}$ and elide the dependence on the distribution.
\end{definition}

Note that there are two distinctions between Definition \ref{def:lexifair} and Definition \ref{def:failedlexifair}. First, the recursively defined sets $\Hs^{\epsvec}_{(j)}$ now incorporate some $\epsilon_j (\cdot)$ slack in their parameterization which will help capture statistical (or optimization) error. Second (and crucially), we now call a solution $(\ell,\alpha)$-approximately lexifair if it satisfies our requirements for \emph{some} sequence of relaxations $\epsvec$ that is component-wise less than $\alpha$ for all models $h$. It is this second point that avoids the instability and non-monotonicity that arises from Definition \ref{def:failedlexifair}. We observe that Definition \ref{def:lexifair} is a strict weakening of Definition \ref{def:failedlexifair}:

\begin{clm}
Definition~\ref{def:lexifair} is a relaxation of Definition~\ref{def:failedlexifair}: if a model satisfies $(\ell,\alpha)$-lexicographic fairness according to Definition~\ref{def:failedlexifair}, then it also satisfies $(\ell,\alpha)$-lexicographic fairness according to Definition~\ref{def:lexifair}.
\end{clm}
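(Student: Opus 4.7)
The plan is to show that choosing $\epsvec$ to be identically zero already witnesses Definition~\ref{def:lexifair} whenever Definition~\ref{def:failedlexifair} holds. This works because with zero slack the recursive sets $\Hs^{\epsvec}_{(j)}$ collapse to the exact sets $\Hs_{(j)}$ from the exact-lexifairness definition, and the minima appearing in Definition~\ref{def:lexifair} become exactly the $\gamma_j$'s used in Definition~\ref{def:failedlexifair}.

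First I would set $\epsilon_j(h) = 0$ for all $j \le \ell$ and all $h \in \Hs$, so $\Vert\epsvec\Vert_\infty = 0 \le \alpha$ as required. Next I would argue by induction on $j$ that $\Hs^{\epsvec}_{(j)} = \Hs_{(j)}$. The base case $j=0$ is immediate since both are defined to be $\Hs$. For the inductive step, by hypothesis $\Hs^{\epsvec}_{(j-1)} = \Hs_{(j-1)}$, so
\[
\min_{g \in \Hs^{\epsvec}_{(j-1)}} L_{\bar{g}(j)}(g) = \min_{g \in \Hs_{(j-1)}} L_{\bar{g}(j)}(g) = \gamma_j,
\]
and therefore $\Hs^{\epsvec}_{(j)} = \{h \in \Hs_{(j-1)} : L_{\bar{h}(j)}(h) \le \gamma_j\}$. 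Since the minimum is attained, the inequality is equivalent to equality with $\gamma_j$, which matches the definition of $\Hs_{(j)}$ exactly.

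Finally, assuming $h$ satisfies Definition~\ref{def:failedlexifair}, we have $L_{\bar{h}(j)}(h) \le \gamma_j + \alpha$ for every $j \le \ell$. Substituting the identifications from the previous paragraph, this becomes
\[
L_{\bar{h}(j)}(h) \le \min_{g \in \Hs^{\epsvec}_{(j-1)}} L_{\bar{g}(j)}(g) + \epsilon_j(h) + \alpha,
\]
which is precisely the condition required by Definition~\ref{def:lexifair} with the chosen $\epsvec$. There is no real obstacle here beyond unwinding definitions; the only subtlety worth double-checking is that the minima in the recursive definitions are attained (so that the ``$\le$'' version of the constraint with zero slack really does give the same set as the ``$=$'' version in Definition~\ref{def:exactlexifair}), which holds under the same implicit assumptions already used to define $\gamma_j$.
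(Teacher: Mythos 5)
Your proposal is correct and takes exactly the same route as the paper, whose entire proof is the one-line observation that taking $\epsvec = \vec{0}$ witnesses Definition~\ref{def:lexifair}; your inductive argument that $\Hs^{\vec{0}}_{(j)} = \Hs_{(j)}$ (so the minima reduce to the $\gamma_j$'s) simply spells out the details the paper leaves implicit. No gaps here.
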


\begin{proof}
If a model satisfies $(\ell,\alpha)$-lexicographic fairness according to Definition~\ref{def:failedlexifair}, then by taking $\epsvec = \vec{0}$, it also meets the conditions of Definition~\ref{def:lexifair}.
\end{proof}

We now face another definitional challenge. A priori, Definition \ref{def:lexifair} appears to be highly non-convex, because it constrains the second highest group error, the the third highest group error, etc.\footnote{E.g., if we have two groups and two models which induce group errors   $(0.5,0)$ and $(0, 0.5)$ respectively, both solutions have a second-highest error of 0 --- but  convex combinations have a second highest error strictly greater than 0. So absent other structure, upper bounding the second highest group error of a model corresponds to a non-convex constraint. But note that in this two-group example, the non-convexity dissapears if we restrict attention to minimax optimal models. This is what we will take advantage of more generally.} This is in contrast to standard equal-error notions of fairness, or minimax fairness (which constrains only the highest group error) that \emph{are} convex in the sense that a distribution over fair models remains fair. Without convexity of this sort, the algorithmic problem of finding a fair model becomes much more challenging. But in fact (at least for $\alpha = 0$), Definition \ref{def:lexifair} \emph{does} give a convex constraint. To see this, we first introduce an alternative notion of \emph{convex lexifairness}, and then show that it actually represents the exact same constraint as lexifairness when the approximation parameter $\alpha = 0$. 

\begin{definition}[Convex Lexicographic Fairness]\label{def:convexlexifair}
    Fix a distribution $\Ps$. Let $1 \le \ell \le K$ and $\alpha \ge 0$. For any sequence of mappings $\epsvec = \left( \epsilon_1, \epsilon_2, \ldots, \epsilon_\ell \right)$ where $\epsilon_j \in \R^{\Hs}$, define $\F^{\epsvec}_{(0)} (\Ps) \triangleq \Hs$, and recursively for all $1 \le j \le \ell$ define:
    \[
    \F_{(j)}^{\epsvec} (\Ps) \triangleq \left\{ h \in \F_{(j-1)}^{\epsvec}(\Ps): \max_{ \left\{ i_1, \ldots, i_j \right\} \subseteq [K] } \sum_{r=1}^j L_{i_r} (h, \Ps) \le \min_{g \in \F_{(j-1)}^{\epsvec}(\Ps) } \max_{\left\{ i_1, \ldots, i_j \right\} \subseteq [K]} \sum_{r=1}^j L_{i_r} (g, \Ps) + \epsilon_j (h) \right\}
    \]
    and let $\Vert \epsvec \Vert_\infty = \max_{1 \le j \le \ell} \max_{h \in \Hs} \epsilon_j (h)$. We say a model $h \in \Hs$ satisfies $(\ell, \alpha)$-convex lexicographic fairness with respect to $\Ps$ if there exists $\epsvec$ with $\Vert \epsvec \Vert_\infty \le \alpha$ such that for all  $j \le \ell$:
    \[
    \max_{ \left\{ i_1, \ldots, i_j \right\} \subseteq [K] } \sum_{r=1}^j L_{i_r} (h, \Ps) \le \min_{g \in \F_{(j-1)}^{\epsvec}(\Ps) } \max_{\left\{ i_1, \ldots, i_j \right\} \subseteq [K]} \sum_{r=1}^j L_{i_r} (g, \Ps) + \epsilon_j (h) + \alpha.
    \]
    When we prove bounds on empirical convex lexifairness, we simply take the distribution to be the uniform distribution over the data set $S$. When the distribution is clear from context, we will write $ \F_{(j)}^{\epsvec}$ and elide the dependence on the distribution. 
\end{definition}

Here, we have replaced constraints on the $j$'th highest group error with constraints on the \emph{sum} of group errors over all $\approx K^j$ subsets of groups of size $j$. This has replaced a single constraint with many constraints, but each is convex, and hence the resulting set of constraints defined by $\F_{(j)}^{\epsvec}$ is convex. We will formally prove this in the following claim.

\begin{restatable}[Convexity of $\F_{(j)}^{\epsvec}$]{clm}{convexity}\label{clm:convexity}
Let $L_z: \Hs \to \R_{\ge 0}$ be a convex loss function. If the initial model class $\Hs$ is convex, then for all $j$ and all $\epsvec$ such that the mappings $\epsilon_j \in \R^\Hs$ are concave, the set $\F_{(j)}^{\epsvec}$ is convex.
\end{restatable}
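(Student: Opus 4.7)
The plan is to prove the claim by induction on $j$, using the observation that the right-hand side of the constraint defining $\F_{(j)}^{\epsvec}$ is the sum of a constant (in $h$) and a concave function of $h$, while the left-hand side is a convex function of $h$. The base case $j=0$ is immediate since $\F_{(0)}^{\epsvec} = \Hs$ is convex by assumption.

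For the inductive step, suppose $\F_{(j-1)}^{\epsvec}$ is convex. Fix any $j$-subset $\{i_1,\ldots,i_j\} \subseteq [K]$. Since $L_z(\cdot)$ is convex on $\Hs$ for every fixed $z$, the group loss
\[
h \mapsto L_{i_r}(h,\Ps) = \E_{z \sim \Ps_{i_r}}[L_z(h)]
\]
is convex (expectation preserves convexity), and hence so is the finite sum $h \mapsto \sum_{r=1}^j L_{i_r}(h,\Ps)$. Taking the pointwise maximum over the finitely many $j$-subsets preserves convexity, so
\[
\Phi_j(h) \;\triangleq\; \max_{\{i_1,\ldots,i_j\}\subseteq [K]} \sum_{r=1}^j L_{i_r}(h,\Ps)
\]
is a convex function of $h$. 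Crucially, on the right-hand side of the defining inequality of $\F_{(j)}^{\epsvec}$, the term $\min_{g \in \F_{(j-1)}^{\epsvec}} \Phi_j(g)$ does not depend on $h$ at all; call this constant $C_j$. Therefore the constraint defining $\F_{(j)}^{\epsvec}$ can be rewritten as
\[
\Phi_j(h) - \epsilon_j(h) \;\le\; C_j.
\]

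Because $\epsilon_j$ is concave by hypothesis, the function $-\epsilon_j$ is convex, and so $\Phi_j - \epsilon_j$ is a sum of two convex functions, hence convex. Its sublevel set $\{h : \Phi_j(h) - \epsilon_j(h) \le C_j\}$ is therefore convex. Finally, $\F_{(j)}^{\epsvec}$ is the intersection of this sublevel set with $\F_{(j-1)}^{\epsvec}$, which is convex by the inductive hypothesis; intersections of convex sets are convex, completing the induction.

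The only conceptual subtlety — which is the reason the claim is not entirely obvious from the definition — is recognizing that the minimum on the right-hand side of the defining inequality is taken over a dummy variable $g$ and therefore contributes only a constant to the constraint on $h$. Once that is observed, everything reduces to standard closure properties of convex functions (expectation, finite sum, pointwise maximum, addition of a convex function) and of convex sets (sublevel sets of convex functions, finite intersections). No appeal to the specific combinatorial structure of the $j$-subsets is needed beyond the fact that the collection is finite, which ensures the pointwise maximum is well-behaved.
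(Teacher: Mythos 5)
Your proof is correct and takes essentially the same route as the paper's: both argue by induction on $j$, using convexity of $h \mapsto \max_{\{i_1,\ldots,i_j\}} \sum_r L_{i_r}(h)$, concavity of $\epsilon_j$, the fact that the minimum over $g \in \F_{(j-1)}^{\epsvec}$ is a constant in $h$, and intersection with the (inductively convex) set $\F_{(j-1)}^{\epsvec}$. The only difference is presentational --- the paper verifies directly that a convex combination $\alpha h_1 + (1-\alpha)h_2$ satisfies the defining inequality, which is precisely the unpacked version of your sublevel-set argument for the convex function $\Phi_j - \epsilon_j$.
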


The proof can be found in Appendix \ref{app:clm}, and proceeds by straightforward induction. We note that while some classes of models naturally satisfy the convexity conditions of the above claim with respect to their corresponding parameters (e.g. linear and logistic regression), this claim will apply to arbitrary classification models with zero-one loss as well. In these settings, we will convexify the class of models by considering the set of all probability distributions over deterministic models. The loss of a distribution (i.e. a randomized model) is then defined as the \emph{expected} loss, when the model is sampled from the corresponding distribution. Hence, by linearity of expectation, our loss functions will be convex (linear) in the parameters --- i.e.~the weights --- of these distributions.

It turns out that our notion of \emph{convex} lexifairness is identical to our notion of lexifairness (and so our original definition in fact specified a convex set of constraints), at least when the approximation parameter $\alpha = 0$. We prove this in the following claim:
\begin{restatable}[Relationship between $\F_{(j)}^{\epsvec}$ and $\Hs_{(j)}^{\epsvec}$ when $\epsvec = \vec{0}$]{clm}{relationship}\label{clm:relationship}
    For all $j$, and $\epsvec = \vec{0}$, we have $\F_{(j)}^{\epsvec} = \Hs_{(j)}^{\epsvec}$.
\end{restatable}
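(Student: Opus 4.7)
The plan is to induct on $j$, with the base case $j=0$ being immediate since $\F^{\vec{0}}_{(0)} = \Hs = \Hs^{\vec{0}}_{(0)}$ by definition. For the inductive step, I assume $\F^{\vec{0}}_{(j-1)} = \Hs^{\vec{0}}_{(j-1)}$ and call this common set $\mathcal{M}$.

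The key observation is the combinatorial identity
\[
\max_{\{i_1,\ldots,i_j\} \subseteq [K]} \sum_{r=1}^j L_{i_r}(h, \Ps) \;=\; \sum_{r=1}^j L_{\bar{h}(r)}(h, \Ps),
\]
i.e., the maximum sum over all size-$j$ subsets of groups is just the sum of the top-$j$ group losses, by the definition of the ordering $\bar{h}$. This reduces the convex lexifairness constraint at level $j$ to a constraint on $\sum_{r=1}^j L_{\bar{h}(r)}(h)$.

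Next I would establish (by a small side induction, which follows by unrolling the nested definition of $\Hs^{\vec{0}}_{(\cdot)}$) that every $h \in \mathcal{M}$ satisfies $L_{\bar{h}(r)}(h, \Ps) = \gamma_r$ for all $r = 1, \ldots, j-1$: since $\Hs^{\vec{0}}_{(r)} \subseteq \Hs^{\vec{0}}_{(r-1)}$ along the whole chain up to $j-1$, each $h \in \mathcal{M}$ attains the minimum at every previous level. Consequently, for any $h \in \mathcal{M}$,
\[
\sum_{r=1}^j L_{\bar{h}(r)}(h, \Ps) \;=\; \sum_{r=1}^{j-1} \gamma_r \;+\; L_{\bar{h}(j)}(h, \Ps),
\]
so minimizing the left-hand side over $\mathcal{M}$ is equivalent to minimizing $L_{\bar{h}(j)}(h,\Ps)$ over $\mathcal{M}$. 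The common minimum value is therefore $\sum_{r=1}^{j-1} \gamma_r + \gamma_j$, and the set of minimizers is exactly $\{h \in \mathcal{M} : L_{\bar{h}(j)}(h, \Ps) = \gamma_j\} = \Hs^{\vec{0}}_{(j)}$. Combined with the identity above, this shows $\F^{\vec{0}}_{(j)} = \Hs^{\vec{0}}_{(j)}$, closing the induction.

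I expect the main obstacle to be the bookkeeping in the auxiliary claim that the top $j-1$ losses are pointwise constant on $\mathcal{M}$; once that is in hand, the rest is an algebraic rearrangement. Everything else is routine — no convexity or continuity is required because we are working with $\epsvec = \vec{0}$, which turns every approximation inequality into an equality at the level-specific optimum.
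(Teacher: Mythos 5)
Your proposal is correct and follows essentially the same route as the paper's proof: induction on $j$, rewriting the max over size-$j$ subsets as the sum of the top-$j$ sorted group losses, and using the key fact that every model in the common level-$(j-1)$ set has its $r$'th highest loss pinned to $\gamma_r$ for all $r \le j-1$, so the top-$j$ sum decomposes into a constant plus $L_{\bar{h}(j)}(h)$. The paper packages this as two separate set containments while you identify the minimizer sets in one step, but the decomposition and key lemma are the same.
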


 The intuition for the claim is the following. The sets $\Hs_{(j)}$ in Definition \ref{def:lexifair} constrain the error of the group that has the $j$'th highest error. In contrast, the sets $\F_{(j)}$ from Definition \ref{def:convexlexifair} constrain the \emph{sum} of the errors for all possible $j$-tuples of groups. Amongst all of these constraints, the binding one will be the constraint corresponding to the $j$ groups that have the \emph{largest} errors. But because (inductively) the errors of the top $j-1$ error groups have already been appropriately constrained in $\F_{(j-1)}$, this reduces to a constraint on the $j$'th highest error group, as desired. These constraints are numerous, but each is convex, and so the resulting set of constraints can be seen to be convex. See Appendix \ref{app:clm} for the full proof of Claim \ref{clm:relationship}, which proceeds by induction. 

We emphasize that despite the complexity of our final Definition \ref{def:convexlexifair}, what we have shown is that it is in fact a relaxation of our initial, natural definition of exact lexifairness (Definition \ref{def:exactlexifair}) --- and in particular Definitions \ref{def:exactlexifair}, \ref{def:lexifair}, and \ref{def:convexlexifair} coincide exactly when $\alpha = 0$. 
We do not know the precise relationship between our definitions of approximate lexifairness and approximate convex lexifairness for $\alpha > 0$ --- but because both are smooth relaxations of the same base definition, both should be viewed as capturing the same intuition as Definition \ref{def:exactlexifair} (exact lexifairness) when $\alpha$ is small. 
\section{Game Theory and No-Regret Learning Preliminaries}
\subsection{No-Regret Dynamics}\label{subsec:noregret}
In this subsection, we briefly review the seminal result of Freund and Schapire \cite{fs1996}: Under certain conditions, two-player zero-sum games can be (approximately) solved by having access to a no-regret online learning algorithm for one of the players.

Suppose in this subsection that $S_1$ and $S_2$ are two vector spaces over the field of real numbers. Consider a zero-sum game with two players: a player with strategies in $S_1$ (the minimization player) and another player with strategies in $S_2$ (the maximization player). Let $U: S_1 \times S_2 \to \R_{\ge 0}$ be the payoff function of this game. For every strategy $s_1 \in S_1$ of player one and every strategy $s_2 \in S_2$ of player two, the first player gets utility $-U(s_1, s_2)$ and the second player gets utility $U(s_1, s_2)$.
\begin{definition}[Approximate Equilibrium]\label{def:nuapprox}
A pair of strategies $(s_1, s_2) \in S_1 \times S_2$ is said to be a $\nu$-approximate minimax equilibrium of the game if the following conditions hold:
\[
 U(s_1, s_2) - \min_{s'_1 \in S_1}  U(s'_1, s_2)  \le \nu,
\quad
\max_{s'_2 \in S_2}  U(s_1, s'_2) -  U(s_1, s_2)  \le \nu
\]
\end{definition}

In other words, $(s_1, s_2)$ is a $\nu$-approximate equilibrium of the game if neither player can gain more than $\nu$ by deviating from their strategies.

Freund and Schapire \cite{fs1996} proposed an efficient framework for approximately solving the game: In an iterative fashion, have one of the players play according to a no-regret learning algorithm, and let the second player (approximately) best respond to the play of the first player. The empirical average of each player's actions over a sufficiently long sequence of such play will form an approximate equilibrium of the game. The formal statement is given in the following theorem.

\begin{thm}[No-Regret Dynamics \cite{fs1996}]\label{thm:noregret}
    Let $S_1$ and $S_2$ be convex, and suppose the utility function $U$ is convex-concave: $U(\cdot, s_2): S_1 \to \R_{\ge 0}$ is convex for all $s_2 \in S_2$, and $U(s_1, \cdot): S_2 \to \R_{\ge 0}$ is concave for all $s_1 \in S_1$. Let $(s_1^1, s_1^2, \ldots, s_1^T)$ be the sequence of  play for the first player, and let $(s_2^1, s_2^2, \ldots, s_2^T)$ be the sequence of play for the second player. Suppose for $\nu_1,\nu_2 \ge 0$, the regret of the players jointly satisfies
    \[
    \sum_{t=1}^T U(s_1^t, s_2^t) - \min_{s_1 \in S_1} \sum_{t=1}^T U(s_1, s_2^t) \le \nu_1 T,
    \quad
    \max_{s_2 \in S_2} \sum_{t=1}^T U(s_1^t, s_2) - \sum_{t=1}^T U(s_1^t, s_2^t) \le \nu_2 T
    \]
    Let $\bar{s}_1 = \frac{1}{T}\sum_{t=1}^T s_1^t \in S_1$ and $\bar{s}_2 = \frac{1}{T}\sum_{t=1}^T s_2^t \in S_2$ be the empirical average play of the players. We have that the pair $(\bar{s}_1, \bar{s}_2)$ is a $(\nu_1+\nu_2)$-approximate equilibrium of the game. 
\end{thm}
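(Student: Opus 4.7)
The plan is to reduce the equilibrium guarantee to two one-sided inequalities relating the time-averaged utility $\bar{U} = \frac{1}{T} \sum_{t=1}^T U(s_1^t, s_2^t)$ to the utility at the average pair $(\bar{s}_1, \bar{s}_2)$. First I would use the regret assumptions to sandwich $\bar{U}$ between two quantities that, after invoking convexity/concavity of $U$, can be converted into statements about the average strategies.

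Concretely, the first step is to apply convexity of $U(\cdot, s_2)$ to the second regret bound. Jensen's inequality gives $U(\bar{s}_1, s_2) \le \frac{1}{T} \sum_{t=1}^T U(s_1^t, s_2)$ for every fixed $s_2 \in S_2$, so taking a maximum over $s_2$ and combining with the player-2 regret bound yields
\[
\max_{s_2' \in S_2} U(\bar{s}_1, s_2') \;\le\; \max_{s_2' \in S_2} \frac{1}{T} \sum_{t=1}^T U(s_1^t, s_2') \;\le\; \bar{U} + \nu_2.
\]
Symmetrically, concavity of $U(s_1, \cdot)$ gives $U(s_1, \bar{s}_2) \ge \frac{1}{T} \sum_{t=1}^T U(s_1, s_2^t)$ for every fixed $s_1 \in S_1$, and combining with the player-1 regret bound produces
\[
\min_{s_1' \in S_1} U(s_1', \bar{s}_2) \;\ge\; \min_{s_1' \in S_1} \frac{1}{T} \sum_{t=1}^T U(s_1', s_2^t) \;\ge\; \bar{U} - \nu_1.
\]

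Next I would note that trivially $\min_{s_1'} U(s_1', \bar{s}_2) \le U(\bar{s}_1, \bar{s}_2) \le \max_{s_2'} U(\bar{s}_1, s_2')$, so chaining with the two displays above gives the sandwich $\bar{U} - \nu_1 \le U(\bar{s}_1, \bar{s}_2) \le \bar{U} + \nu_2$. Subtracting, the maximization player's gain from deviating is at most $(\bar{U}+\nu_2) - (\bar{U}-\nu_1) = \nu_1 + \nu_2$, and the minimization player's gain from deviating is bounded by the same quantity, which is exactly the $(\nu_1+\nu_2)$-approximate equilibrium condition in Definition~\ref{def:nuapprox}.

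There is no real obstacle here; the only thing to be careful about is the direction in which convexity and concavity are applied, since swapping them would produce inequalities that point the wrong way and fail to combine with the regret guarantees. Convexity/concavity is exactly what lets the per-round regret bounds, which concern $\frac{1}{T} \sum_t U(s_1^t, s_2^t)$, be transferred to the average iterates $(\bar{s}_1, \bar{s}_2)$; the convexity of $S_1$ and $S_2$ is used implicitly to ensure that these averages lie in the strategy spaces so that the relevant evaluations of $U$ are well-defined.
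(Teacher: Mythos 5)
Your proof is correct: the Jensen-based sandwich $\bar{U}-\nu_1 \le \min_{s_1'} U(s_1',\bar{s}_2) \le U(\bar{s}_1,\bar{s}_2) \le \max_{s_2'} U(\bar{s}_1,s_2') \le \bar{U}+\nu_2$ yields both deviation bounds, and you correctly note where convexity of $S_1,S_2$ and the convex-concave structure of $U$ enter. The paper itself gives no proof of this theorem --- it imports it by citation from Freund and Schapire \cite{fs1996} --- and your argument is precisely the standard one establishing that result, so there is nothing to reconcile.
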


\emph{No regret} online learning algorithms are algorithms that can guarantee the conditions of Theorem \ref{thm:noregret} against arbitrary adversaries. We will use two no-regret online learning algorithms: \emph{Online Projected Gradient Descent}, which we will use in regression settings in which models are represented by parameters in a Euclidean space, and \emph{Follow the Perturbed Leader (FTPL)}, which we will use in binary classification settings. We will make use of these no-regret learning algorithms in our proposed algorithm for learning a lexifair model; full explanations and pseudocode for both are in Appendix \ref{app:gd+ftpl}.

\section{Finding Lexifair Models}
\label{sec:finding}
In this section we focus on developing the tools required to prove the following (informally stated) theorem. The formal claims are provided in Theorems~\ref{thm:lexifair-reg} and \ref{thm:lexifair-clf}.
\begin{thm}[Informal]
Suppose the model class $\Hs$ is convex and compact, and that the loss function $L_z: \Hs \to \R_{\ge 0}$ is convex for all data points $z \in \Z$. There exists an efficient algorithm that returns a model which is $(\ell, \alpha)$-convex lexicographic fair (according to Definition~\ref{def:convexlexifair}), for any given $\ell$ and $\alpha$. 
\end{thm}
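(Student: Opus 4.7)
The plan is to proceed level by level: for $j = 1, 2, \ldots, \ell$ we solve a constrained convex program that finds an approximate minimizer of the level-$j$ objective (the max over $j$-subsets of summed group losses) subject to the level-$r$ constraints for all $r < j$ having been (approximately) satisfied at previously-computed thresholds $\gamma_r^{\star}$. Each such subproblem has exponentially many constraints, one per subset of $[K]$ of size at most $j$, so writing the program down explicitly is infeasible. Instead I would recast each subproblem as a Lagrangian two-player zero-sum game in the style of Section~3 and invoke Theorem~\ref{thm:noregret}.

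Concretely, at level $j$ I would work with the payoff
\[
U_j(h,(S_{\mathrm{obj}},\lambda)) \triangleq \sum_{i \in S_{\mathrm{obj}}} L_i(h) + \sum_{r<j} \sum_{\substack{S \subseteq [K]\\ |S|=r}} \lambda_{r,S} \Bigl( \sum_{i \in S} L_i(h) - \gamma_r^{\star} - \alpha_r \Bigr),
\]
where the \emph{learner} plays $h \in \Hs$ to minimize and the \emph{auditor} plays a size-$j$ subset $S_{\mathrm{obj}}$ together with dual weights $\lambda \ge 0$ constrained by $\|\lambda\|_1 \le B$ to maximize. Because each $L_i(\cdot)$ is convex in $h$ and the auditor's contribution is linear in $\lambda$ and in the convex hull of subsets, $U_j$ is convex-concave, so Theorem~\ref{thm:noregret} applies. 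The learner can run online projected gradient descent over $\Hs$, while the auditor best-responds via an efficient separation oracle that simply sorts the $L_i(h)$ and picks the top-$r$ subset for whichever level-$r$ constraint is most violated (plus the top-$j$ subset for the objective term)---this is the oracle alluded to in the introduction. In the classification setting with $0/1$ loss I would convexify $\Hs$ to distributions over deterministic classifiers and have the learner run Follow the Perturbed Leader, so that each round reduces to a single weighted ERM call with \emph{nonnegative} weights.

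After $T$ rounds, Theorem~\ref{thm:noregret} gives that the average play $\bar h_j$ is part of a $\nu_T$-approximate equilibrium with $\nu_T = O(1/\sqrt{T})$. A standard Lagrangian argument, using $B$ large enough and a Slater-type margin on $\F^{\epsvec}_{(j-1)}$, then yields the two conclusions we need: $\max_{|S|=r} \sum_{i \in S} L_i(\bar h_j) \le \gamma_r^{\star} + \alpha_r + O(\nu_T)$ for every $r < j$ (approximate feasibility), and $\max_{|S|=j} \sum_{i \in S} L_i(\bar h_j) \le \gamma_j^{\star} + O(\nu_T)$, where $\gamma_j^{\star}$ is the true optimum on $\F^{\epsvec}_{(j-1)}$. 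I then declare $\gamma_j^{\star}$ equal to the measured top-$j$ sum at $\bar h_j$, add this value as the level-$j$ constraint, and recurse. The final output $h = \bar h_\ell$ is certified to satisfy $(\ell,\alpha)$-convex lexifairness by exhibiting an $\epsvec$ that exactly absorbs the per-level deviations: set $\epsilon_j(\bar h_\ell)$ equal to the measured gap at level $j$, which by construction sits below $\alpha$ once $T$ is chosen polynomially in $K$, $\ell$, $1/\alpha$, and the Slater margin.

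The main obstacle I expect is the interlocking bookkeeping rather than any individual step: bounding the dual radius $B$ requires an a priori Slater-style interior assumption on each $\F^{\epsvec}_{(j-1)}$; one must control how the per-level optimization error $\nu_T$ feeds into the threshold used at the next level so that the total slack grows at most linearly in $\ell$; and one must verify that the per-model function $\epsilon_j(\cdot)$ built into Definition~\ref{def:convexlexifair} is exactly the right device to absorb these accumulated errors without ever violating $\|\epsvec\|_\infty \le \alpha$. Convexity of the feasible sets $\F^{\epsvec}_{(j-1)}$, guaranteed by Claim~\ref{clm:convexity}, is what makes the Lagrangian recipe legitimate at every level, and the whole construction is oracle-efficient because the auditor's subset choice is a sort and the learner is either GD or an FTPL-ERM reduction.
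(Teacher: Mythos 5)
Your architecture matches the paper's almost exactly: an outer induction over levels $j$, each level recast as a Lagrangian zero-sum game between a Learner running online projected gradient descent (or FTPL over $\Delta \Hs$ in the classification case) and an Auditor who best-responds by sorting group errors and putting its whole budget on the most violated constraint, with the accumulated per-level errors absorbed into the mappings $\epsvec$ of Definition~\ref{def:convexlexifair}. The one genuine gap is the step you yourself flag as the main obstacle: you invoke ``a standard Lagrangian argument, using $B$ large enough and a Slater-type margin on $\F^{\epsvec}_{(j-1)}$'' to convert the approximate equilibrium into approximate feasibility and approximate optimality. No such margin is available here: it is not among the theorem's hypotheses, and it fails generically for exactly these sets, because $\F^{\epsvec}_{(j-1)}$ is carved out by constraints that the recursion makes (nearly) tight --- at a lexifair optimum the level-$r$ constraints are binding, so the feasible region can have empty interior, and any dual bound derived from an interiority margin degenerates precisely in the cases the theorem is about.

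The paper's proof (Theorem~\ref{thm:dunno}, via Lemma~\ref{lem:aux}) shows that no such assumption is needed, and the same idea is available in your formulation. The paper makes the level-$j$ objective a \emph{bounded primal variable} $\eta_j \in [0, j L_M]$ rather than part of the Auditor's strategy. Then (i) for any feasible point $(h,\eta_j)$ of problem~\eqref{eq:opt-problem} one has $\mathcal{L}_j\left((h,\eta_j),\hat{\lambda}\right) \le \eta_j$, since every dual term is nonpositive; and (ii) because the Auditor's best response places mass $B$ on the most violated constraint, the equilibrium payoff is at least $\hat{\eta}_j + B \cdot (\text{max violation})_{+} - \nu$. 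Chaining (i), (ii) and the Learner's $\nu$-optimality gives both $\hat{\eta}_j \le \opt_j + 2\nu$ and a violation bound of $(\eta_j - \hat{\eta}_j + 2\nu)/B \le (j L_M + 2\nu)/B$, using only that $\eta_j, \hat{\eta}_j \in [0, j L_M]$; the explicit choices $\nu = \alpha/2$ and $B = (\alpha + j L_M)/\alpha$ then finish the argument, with no reference to strong duality or interior points. In your game the objective term $\sum_{i \in S_{\mathrm{obj}}} L_i(h)$ is likewise nonnegative and bounded by $j L_M$, so the identical argument goes through verbatim; replacing your Slater appeal with it closes the gap, after which your proposal is essentially the paper's proof.
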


We will propose algorithms for both classification and regression settings. The algorithms we propose proceed inductively to solve the minimax problems defined recursively by our convex lexifair definition. The first minimax problem is the one that minimizes the maximum group error rate: $\min_{h \in \Hs} \max_{k \in [K]} L_k (h)$. Let us denote the estimated value (computed by the first phase of our algorithm) for this minimax problem by $\eta_1$. The second minimax problem is minimizing the maximum sum of any two group error rates subject to the constraint that all group error rates are at most $\eta_1$: the estimated value for this minimax problem  is called $\eta_2$. The rest of the minimax problems are defined in a similar inductive fashion: suppose at round $j \le \ell$, we are given some estimates $(\eta_1, \ldots, \eta_{j-1})$ for the first $j-1$ minimax values.  Now using these estimates, the new minimax problem for the sum of any $j$ group error rates can be stated as follows.
\begin{equation}\label{eq:opt-problem-old}
\min_{\overset{h \in \Hs:}{\overset{\forall r \le j-1, \, \forall \{i_1, \ldots, i_r\} \subseteq [K]}{L_{i_1} (h) + \ldots + L_{i_r} (h) \le \eta_r}}} \left\{ \max_{\left\{ i_1, \ldots, i_j \right\} \subseteq [K]} \sum_{r=1}^j L_{i_r} (h) \right\}
\end{equation}
We can reformulate this optimization problem by calling the objective $\max_{\left\{ i_1, \ldots, i_j \right\} \subseteq [K]} \sum_{r=1}^j L_{i_r} (h) := \eta_j$ and introducing a new set of constraints which require that any sum of $j$ group error rates must be at most $\eta_j$. Note that this new formulation introduces a new variable, $\eta_j$, to the optimization problem. We therefore have that the optimization problem~\eqref{eq:opt-problem-old} is equivalent to
\begin{equation}\label{eq:opt-problem}
\min_{\overset{h \in \Hs, \eta_j \in [0,j \cdot L_M]:}{\overset{\forall r \le j, \, \forall \{i_1, \ldots, i_r\} \subseteq [K]}{L_{i_1} (h) + \ldots + L_{i_r} (h) \le \eta_r}}} \eta_j
\triangleq \opt_j \left( \eta_1, \ldots, \eta_{j-1} \right)
\end{equation}

which is a constrained convex optimization problem given that the model class $\Hs$ and the loss function $L$ are convex. Here $L_M = \max_{z,h} L_z(h)$ is an upper bound on the loss function which identifies the range of feasible values for $\eta_j$: $[0, j \cdot L_M]$. Recall that in this round, $(\eta_1, \ldots, \eta_{j-1})$ are given from the previous rounds, and $\eta_j$ is a variable in the optimization problem. We denote the optimal value of the optimization problem~\eqref{eq:opt-problem} by $\opt_j \left( \eta_1, \ldots, \eta_{j-1} \right)$.

\subsection{Formulation as a Two-Player Zero-Sum Game}
Optimization problem \eqref{eq:opt-problem} is written as a constrained optimization problem, but we can express it equally well as an unconstrained minimax problem via Lagrangian duality. The corresponding Lagrangian can be written as:
\begin{equation}\label{eq:lagrangian}
\mathcal{L}_j \left( (h, \eta_j), \lambda \right) = \eta_j + \sum_{r=1}^j \sum_{\{i_1, \ldots, i_r\} \subseteq [K]} \lambda_{\{i_1, i_2, \ldots, i_r\}} \cdot \left( L_{i_1} (h) + \ldots + L_{i_r} (h) - \eta_r \right)
\end{equation}
where we introduce one dual variable $\lambda$ for every inequality constraint in the optimization problem~\eqref{eq:opt-problem}, and index the dual variables by their corresponding constraint. Therefore, there are $q_j =\sum_{r=1}^j \binom{K}{r}$ dual variables in this round. Solving optimization problem~\eqref{eq:opt-problem} is equivalent to solving the following minimax problem:
\begin{equation}\label{eq:minimax}
    \min_{h \in \Hs, \eta_j \in [0,j \cdot L_M]} \max_{\lambda \in \R^{q_j}_{\ge 0}} \mathcal{L}_j \left( (h, \eta_j), \lambda \right) = \max_{\lambda \in \R^{q_j}_{\ge 0}} \min_{h \in \Hs, \eta_j \in [0,j \cdot L_M]} \mathcal{L}_j \left( (h, \eta_j), \lambda \right)
\end{equation}
where the minimax theorem holds because 1) the range of the primal variables, i.e. $\Hs$ and $[0, j \cdot L_M]$, is convex and compact, the range for the dual variable ($\R^q_{\ge 0}$) is convex, and 2) $\mathcal{L}_j \left( (h, \eta_j), \lambda \right)$ is convex in its primal variables $(h, \eta_j)$ and concave in the dual variable $\lambda$. Therefore we focus on solving the minimax problem \eqref{eq:minimax} which can be seen as solving a two-player zero-sum game with payoff function $\mathcal{L}_j \left( (h, \eta_j), \lambda \right)$. Using the no-regret dynamics of \cite{fs1996} (see Section \ref{subsec:noregret}), we will have the primal player (or \emph{Learner}) with strategies $(h, \eta_j) \in \Hs \times [0, j \cdot L_M]$ play a no-regret learning algorithm and let the dual player (or \emph{Auditor}) with strategies $\lambda \in \Lambda_j = \{\lambda \in \R_{\ge 0}^{q_j} : \Vert \lambda \Vert_1 \le B \}$ best respond. Here we place an upper bound $B$ on the $\ell_1$-norm of the dual variable to guarantee convergence of our algorithms. This nuisance parameter  will be set optimally in our algorithms, and we note that the minimax theorem continues to hold in the presence of this upper bound on $\lambda$. We will first analyze the best response problem for both players --- i.e. the problem of optimizing the Lagrangian for one of the players \emph{fixing} the strategy of the other player.

\subsection{The Auditor's Best Response}
Fixing the $(h, \eta_j)$ variables of the Learner and the estimated values $(\eta_1, \ldots, \eta_{j-1})$ from previous rounds, the Auditor can best respond by solving
\[
\argmax_{\lambda \in \Lambda_j}  \mathcal{L}_j \left( (h, \eta_j), \lambda \right) \equiv \argmax_{\lambda \in \Lambda_j} \sum_{r=1}^{j} \sum_{\{i_1,\ldots,i_r\} \subseteq [K]} \lambda_{\{i_1,i_2, \ldots,i_r\}} \cdot \left( L_{i_1} (h) + \ldots + L_{i_r} (h) - \eta_r \right)
\]
Since the objective is linear in the dual variables $\lambda$, the Auditor can without loss of generality best respond by putting all its mass $B$ on the variable $\lambda_{\{i_1,i_2, \ldots,i_r\}}$ corresponding to the most violated constraint, if one exists. In particular, given any model $h \in \Hs$ and any ordering $\bar{h}$ induced by $h$ on the groups, we have that the Auditor's best response $\lambda_\text{best} (h, \eta_j)$ is
\[
\lambda_\text{best} (h, \eta_j) =
\begin{cases}
0 \in \R^{q_j} & \text{if }\forall r \le j: \, L_{\bar{h}(1)} (h) + \ldots + L_{\bar{h}(r)} (h) \le \eta_r \\
\lambda^\star \in \R^{q_j} & \text{if }\exists r \le j: \, L_{\bar{h}(1)} (h) + \ldots + L_{\bar{h}(r)} (h) > \eta_r
\end{cases}
\]
where the entries of $\lambda^\star$ are defined as follows.
\begin{equation}\label{eq:lambdastar}
\lambda^\star_{\{i_1,i_2,\ldots,i_r\}} =
\begin{cases}
B & \text{if } \{i_1,i_2,\ldots,i_r\} = \{ \bar{h}(1), \bar{h}(2), \ldots, \bar{h}(r^\star) \} \\
0 & \text{Otherwise}
\end{cases}
\end{equation}
where $r^\star \in \argmax_{r \le j} \left( L_{\bar{h}(1)} (h) + \ldots + L_{\bar{h}(r)} (h) - \eta_r \right)$.

Note that the Auditor's best response can be computed efficiently because it only requires sorting the vector of error rates across $K$ groups. We summarize the best response algorithm for the Auditor in Algorithm~\ref{alg:auditorbest}.

\begin{algorithm}[t]
\KwIn{Learner's play $(h, \eta_j)$, previous estimates $(\eta_1, \ldots, \eta_{j-1})$}
Compute $L_k (h)$ for all groups $k \in [K]$\;
Find the top $j$ elements of vector $(L_1 (h), \ldots, L_K (h))$ and call them: $L_{\bar{h}(1)} (h) \ge \ldots \ge L_{\bar{h}(j)} (h)$\;
\lIf{$\forall r \le j: \, L_{\bar{h}(1)} (h) + \ldots + L_{\bar{h}(r)} (h) \le \eta_r$}{
$\lambda_{out} = 0$}
\lElse{ Let $r^\star \in \argmax_{r \le j} \left( L_{\bar{h}(1)} (h) + \ldots + L_{\bar{h}(r)} (h) - \eta_r \right)$, $\lambda_{out} = \lambda^\star$ as in Equation~\eqref{eq:lambdastar}
}
\KwOut{$\lambda_{out} \in \Lambda_j$}
\caption{The Auditor's Best Response ($\lambda_\text{best}$): $j$th round}
\label{alg:auditorbest}
\end{algorithm}

\subsection{The Learner's Best Response}
Given dual weights $\lambda \in \Lambda_j$ chosen by the Auditor, the Learner can best respond by solving
\[
\argmin_{h \in \Hs, \eta_j \in [0, j \cdot L_M]} \mathcal{L}_j \left( (h, \eta_j), \lambda \right).
\]
We note that the objective function $\mathcal{L}_j \left( (h, \eta_j), \lambda \right)$ can be decomposed into three terms: one that depends only on the model $h$, another that depends only on $\eta_j$, and finally one that is constant (with respect to $(h, \eta_j)$). Therefore, this optimization problem is separable for the Learner --- the decomposition is formally described below.
\begin{equation}\label{eq:decomposition}
\mathcal{L}_j \left( (h, \eta_j), \lambda \right) = \mathcal{L}_j^1 \left( h, \lambda \right) + \mathcal{L}_j^2 \left( \eta_j, \lambda \right) + C_j \left( \lambda \right)
\end{equation}
where
\begin{equation}\label{eq:learner-p}
\mathcal{L}_j^1 \left( h, \lambda \right) \triangleq \sum_{r=1}^K w_r (\lambda) L_r (h), \ \text{where} \  w_r (\lambda) \triangleq \sum_{s=0}^{j-1} \sum_{\{i_2,\ldots,i_s\} \subseteq [K] \setminus \{r\}} \lambda_{\{r,i_2,\ldots,i_s\}}
\end{equation}
\begin{equation}\label{eq:learner-eta}
\mathcal{L}_j^2 \left( \eta_j, \lambda \right) \triangleq \left( 1 - \sum_{\{i_1, \ldots, i_j\} \subseteq [K]} \lambda_{\{i_1, i_2, \ldots, i_j\}} \right) \eta_j
\end{equation}
\begin{equation}\label{eq:constant}
C_j \left( \lambda \right) \triangleq - \sum_{r=1}^{j-1} \sum_{\{i_1, \ldots, i_r\} \subseteq [K]} \lambda_{\{i_1, i_2, \ldots, i_r\}} \cdot \eta_r
\end{equation}
Given this decomposition of the Lagrangian, the best response $(h,\eta_j)$ of the Learner to the variables $\lambda$ of the Auditor is as follows:
\[
(h,\eta_j) = \left( \argmin_{h \in \Hs} \mathcal{L}_j^1 \left( h, \lambda \right) ,  \argmin_{\eta_j \in [0, j\cdot L_M]} \mathcal{L}_j^2 \left( \eta_j, \lambda \right) \right)
\]

Note that the first optimization problem is a weighted minimization problem over the class $\Hs$, and the second one is a simple minimization of a linear function. Furthermore, even though in general computing the sums in Equations~\eqref{eq:learner-p} and \eqref{eq:learner-eta} can be computationally hard (because they are sums over exponentially many terms), \emph{when the Auditor is best responding (which will be the case in our algorithms), these sums can be computed efficiently}. We formally state this claim in Fact~\ref{fact:efficient}.

\begin{fact}\label{fact:efficient}
When the Auditor is using its best response algorithm (Algorithm~\ref{alg:auditorbest}) to respond to the Learner, the Auditor will either output zero or identify a single subset $C$ of groups ($\vert C \vert \le j$) on which the constraints are violated maximally. In the former case, $w_r (\lambda) = 0$ for all $r$ and $1 - \sum_{\{i_1, \ldots, i_j\} \subseteq [K]} \lambda_{\{i_1, i_2, \ldots, i_j\}} = 1$. In the latter case, we have
\[
w_r (\lambda)=B \cdot \1 \left[ r \in C \right], \quad 1 - \sum_{\{i_1, \ldots, i_j\} \subseteq [K]} \lambda_{\{i_1, i_2, \ldots, i_j\}} = 1 - B \cdot \1 \left[ \vert C \vert = j \right]
\]
\end{fact}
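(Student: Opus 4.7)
The proof is essentially a direct unfolding of Algorithm~\ref{alg:auditorbest} into the definitions of $w_r(\lambda)$ and the coefficient $1 - \sum_{\{i_1,\ldots,i_j\} \subseteq [K]} \lambda_{\{i_1,\ldots,i_j\}}$. My plan is to split on the two cases that Algorithm~\ref{alg:auditorbest} can produce, and in each case substitute the very sparse vector it outputs into the sums from \eqref{eq:learner-p} and \eqref{eq:learner-eta}. No additional machinery is required.

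First I would recall that by inspection of Algorithm~\ref{alg:auditorbest}, the output $\lambda_{out}$ is supported on at most one coordinate: either $\lambda_{out} = 0 \in \R^{q_j}$, or $\lambda_{out} = \lambda^\star$ where $\lambda^\star_S = B$ only at the single subset $S = C \triangleq \{\bar{h}(1), \ldots, \bar{h}(r^\star)\}$ (with $1 \le |C| = r^\star \le j$) and $\lambda^\star_S = 0$ on all other subsets. The first (zero) case is immediate: every summand of $w_r(\lambda)$ and of $\sum_{\{i_1,\ldots,i_j\}\subseteq[K]} \lambda_{\{i_1,\ldots,i_j\}}$ is zero, yielding $w_r(\lambda)=0$ for all $r$ and the coefficient $1 - 0 = 1$, exactly as claimed.

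For the non-zero case, I would substitute $\lambda = \lambda^\star$ into Equation~\eqref{eq:learner-p}. Since the only non-zero coordinate is the one indexed by $C$, the sum collapses to at most a single term equal to $B$, and that term appears in $w_r(\lambda^\star)$ precisely when $C$ is one of the subsets $\{r\} \cup \{i_2,\ldots,i_s\}$ appearing in the definition of $w_r$, i.e.\ exactly when $r \in C$ (note $|C| \le j$, so $C$ is within the allowed range of subset sizes). This yields $w_r(\lambda^\star) = B \cdot \1[r\in C]$. Similarly, for the coefficient multiplying $\eta_j$ in \eqref{eq:learner-eta}, the sum $\sum_{\{i_1,\ldots,i_j\}\subseteq[K]} \lambda^\star_{\{i_1,\ldots,i_j\}}$ is restricted to subsets of size \emph{exactly} $j$, and $C$ contributes iff $|C|=j$, giving $1 - B\cdot \1[|C|=j]$.

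The only thing one needs to be slightly careful about is the indexing convention: $w_r(\lambda)$ sums over subsets that contain $r$ and have size at most $j$, whereas the coefficient of $\eta_j$ sums only over subsets of size exactly $j$. This is the one place where a reader might slip up, since in the general $\lambda$ these two sums can differ substantially; but because $\lambda^\star$ has only one active coordinate $C$, both sums reduce to a trivial indicator check. Other than that small combinatorial bookkeeping point, there is no real obstacle and the fact follows by direct computation.
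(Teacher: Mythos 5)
Your proposal is correct: the paper states this as a Fact with no explicit proof, treating it as immediate from inspection of Algorithm~\ref{alg:auditorbest}, and your case split plus substitution of the one-sparse vector $\lambda^\star$ into Equations~\eqref{eq:learner-p} and \eqref{eq:learner-eta} is precisely the intended justification. Your closing remark correctly flags the only delicate point --- that $w_r(\lambda)$ sums over subsets containing $r$ of size at most $j$ while the coefficient of $\eta_j$ sums only over subsets of size exactly $j$, which is why the indicator $\1\left[\vert C \vert = j\right]$ appears in one place and $\1\left[ r \in C \right]$ in the other.
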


\subsection{Solving the Game with No-Regret Dynamics}
Having analyzed the best response problem for both players, we now focus on developing efficient algorithms to approximately solve the two-player zero-sum game defined above, which corresponds to finding an approximate convex lexifair model. The algorithms we propose use  no-regret dynamics (see Section~\ref{subsec:noregret}) in which the Learner plays a no-regret learning algorithm and the Auditor best responds according to Algorithm~\ref{alg:auditorbest}. As a consequence, we get that the empirical average of the played strategies $((\hat{h}, \hat{\eta}_j), \hat{\lambda})$ of the players over the course of the iterative algorithms will form a $\nu$-approximate equilibrium of the game for some small value of $\nu \ge 0$ (according to Definition~\ref{def:nuapprox}). Then, by the following theorem, we can turn these equilibrium guarantees into the fairness guarantees of the output model $\hat{h}$. Its proof can be found in Appendix \ref{app:finding}. 

We remark that what we mean by the empirical average will depend on the setting. If we are in a setting in which the loss function is convex in the model parameters (e.g. logistic or linear regression), then we can actually average the model parameters, and output a single deterministic model. Alternately, if we are in a classification setting in which the loss function (e.g. zero-one loss) is non-convex in the model parameters, then by averaging, we mean using the randomized model that corresponds to the uniform distribution over the empirical play history. 
\begin{restatable}{thm}{thmdunno}
\label{thm:dunno}
    At round $j$, let $(\hat{\eta}_1, \ldots, \hat{\eta}_{j-1})$ be any given estimated minimax values from the previous rounds and let the strategies $((\hat{h}, \hat{\eta}_j), \hat{\lambda})$ form a $\nu$-approximate equilibrium of the game for this round, i.e.,
    \[
    \mathcal{L}_j \left( (\hat{h}, \hat{\eta}_j), \hat{\lambda} \right) \le \min_{h \in \Hs, \eta_j \in [0, j\cdot L_M]} \mathcal{L}_j \left( (h, \eta_j), \hat{\lambda} \right) + \nu, \quad \mathcal{L}_j \left( (\hat{h}, \hat{\eta}_j), \hat{\lambda} \right) \ge \max_{\lambda \in \Lambda_j} \mathcal{L}_j \left( (\hat{h}, \hat{\eta}_j), \lambda \right) - \nu
    \]
    We have that
\begin{equation*}
\hat{\eta}_j \le OPT_j \left( \hat{\eta}_1, \ldots, \hat{\eta}_{j-1} \right) + 2\nu
\end{equation*}
and for all $r \le j$,
\begin{equation*}
\max_{\left\{ i_1, \ldots, i_r \right\} \subseteq [K]} \sum_{s=1}^r L_{i_r} (\hat{h}) \le \hat{\eta}_{r} + \frac{j L_M + 2 \nu}{B}.
\end{equation*}
\end{restatable}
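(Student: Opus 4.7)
The plan is to use a standard Lagrangian duality argument: the two conclusions correspond respectively to (i) the near-optimality of the primal value at the approximate saddle point and (ii) the near-feasibility of the primal iterate, each obtained by instantiating the adversary's (Auditor's) strategy space at a specific witness.

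\textbf{Step 1: Upper bound on $\hat{\eta}_j$.} Let $(h^*, \eta_j^*)$ be an optimal primal solution to the constrained problem~\eqref{eq:opt-problem}, so that $\eta_j^* = \opt_j(\hat{\eta}_1,\ldots,\hat{\eta}_{j-1})$ and all constraints $L_{i_1}(h^*)+\cdots+L_{i_r}(h^*) - \hat{\eta}_r \le 0$ hold for every $r\le j$ and every $\{i_1,\ldots,i_r\}\subseteq [K]$ (with $\hat{\eta}_j$ replaced by $\eta_j^*$). Since $\hat{\lambda}\ge 0$, every Lagrangian penalty term at $(h^*,\eta_j^*)$ is nonpositive, giving
\[
\mathcal{L}_j\bigl((h^*,\eta_j^*),\hat{\lambda}\bigr)\le \eta_j^* = \opt_j(\hat{\eta}_1,\ldots,\hat{\eta}_{j-1}).
\]
Combining this with the Learner's $\nu$-best-response inequality from the definition of a $\nu$-approximate equilibrium yields $\mathcal{L}_j((\hat{h},\hat{\eta}_j),\hat{\lambda})\le \opt_j+\nu$. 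On the other side, by the Auditor's $\nu$-best-response inequality applied to the trivial dual choice $\lambda=0\in\Lambda_j$,
\[
\mathcal{L}_j\bigl((\hat{h},\hat{\eta}_j),\hat{\lambda}\bigr)\ge \mathcal{L}_j\bigl((\hat{h},\hat{\eta}_j),0\bigr)-\nu = \hat{\eta}_j-\nu.
\]
Chaining the two bounds gives $\hat{\eta}_j \le \opt_j(\hat{\eta}_1,\ldots,\hat{\eta}_{j-1})+2\nu$, which is the first claim.

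\textbf{Step 2: Feasibility bound on sums of group losses.} Fix any $r\le j$ and any subset $\{i_1,\ldots,i_r\}\subseteq [K]$. Consider the dual strategy $\lambda^\sharp\in\Lambda_j$ that places all its mass $B$ on the coordinate $\lambda_{\{i_1,\ldots,i_r\}}$ and zero elsewhere; this is feasible since $\|\lambda^\sharp\|_1=B$. A direct evaluation gives
\[
\mathcal{L}_j\bigl((\hat{h},\hat{\eta}_j),\lambda^\sharp\bigr) = \hat{\eta}_j + B\Bigl(\sum_{s=1}^r L_{i_s}(\hat{h}) - \hat{\eta}_r\Bigr).
\]
Applying the Auditor's $\nu$-best-response inequality with $\lambda=\lambda^\sharp$ and rearranging yields
\[
B\Bigl(\sum_{s=1}^r L_{i_s}(\hat{h}) - \hat{\eta}_r\Bigr) \le \mathcal{L}_j\bigl((\hat{h},\hat{\eta}_j),\hat{\lambda}\bigr) - \hat{\eta}_j + \nu.
\]
Using $\mathcal{L}_j((\hat{h},\hat{\eta}_j),\hat{\lambda})\le \opt_j+\nu\le j L_M+\nu$ from Step 1 (the last inequality because the feasible range of $\eta_j$ is $[0,j L_M]$), together with $\hat{\eta}_j\ge 0$, the right-hand side is at most $j L_M+2\nu$. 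Dividing by $B$ and taking the maximum over $r$-subsets gives the second claim.

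\textbf{Where the work is.} The two steps are essentially mechanical once one writes out the Lagrangian. The only nontrivial choice is the witness dual variable used in Step 2; the key structural observation is that $\Lambda_j$ is rich enough to probe each individual constraint by concentrating all $\ell_1$ mass $B$ on one coordinate, which converts an additive equilibrium slack of $\nu$ and the bounded Lagrangian value of order $j L_M$ into a constraint violation scaling like $(jL_M+2\nu)/B$. There is no real obstacle beyond bookkeeping: one must only verify that the upper bound $\mathcal{L}_j\le j L_M+\nu$ from Step 1 can be re-used inside Step 2, and that the dual feasible region $\Lambda_j=\{\lambda\ge 0: \|\lambda\|_1\le B\}$ contains both $0$ and each single-coordinate witness $\lambda^\sharp$.
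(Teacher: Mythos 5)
Your proof is correct and follows essentially the same route as the paper's: both exploit the approximate-equilibrium inequalities with dual witnesses drawn from $\Lambda_j$ (the zero vector, implicitly, for the lower bound $\mathcal{L}_j \ge \hat{\eta}_j - \nu$, and mass-$B$ coordinate vectors for near-feasibility) together with a feasible primal point to upper-bound the Lagrangian value, yielding the same $(jL_M + 2\nu)/B$ violation bound. The only cosmetic difference is that the paper packages the coordinate-witness step into a lemma built around the Auditor's best response (all mass $B$ on the single maximally violated constraint, handling every $r$-subset at once via a positive-part maximum), whereas you probe each $r$-subset with its own witness $\lambda^\sharp$ and take the maximum at the end --- the same mechanism either way.
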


We will next instantiate this general result to give concrete algorithms for learning convex lexifair models in the regression and classification settings respectively. 

\section{Finding Lexifair Regression Models}
\label{sec:reg}
Suppose in this section that $\Y \subseteq \R$ and $\Hs$ is a class of models in which each model is parametrized by some $d$-dimensional vector in $\R^d$: $\Hs = \left\{ h_\theta: \theta \in \Theta \right\}$ where $\Theta \subseteq \R^d$. In this parametric setting we can think of each parameter $\theta \in \Theta$ as a model and write the loss function as a function of $\theta$. Suppose the loss function $L_z: \Theta \to \R_{\ge 0}$ is differentiable for all $z$.\footnote{If it is not differentiable we can use sub-gradients instead of gradients.} We will have the Learner play according to the Online Projected Gradient Descent algorithm (see Appendix~\ref{subsec:gd}) where the gradients of the corresponding loss function of the game for the Learner (i.e. $\mathcal{L}_j \left( (\theta, \eta_j), \lambda \right)$) can be computed using Equations~\eqref{eq:learner-p} and \eqref{eq:learner-eta}, and the decomposition given in~\eqref{eq:decomposition}:

\begin{equation}\label{eq:grad-model}
\nabla_\theta \mathcal{L}_j \left( (\theta, \eta_j), \lambda \right) = \nabla_\theta \mathcal{L}_j^1 \left( \theta, \lambda \right) = \sum_{r=1}^K w_r (\lambda) \nabla_\theta L_r (\theta),
\end{equation}
\begin{equation}\label{eq:grad-eta}
\nabla_{\eta_j} \mathcal{L}_j \left( (\theta, \eta_j), \lambda \right) = \nabla_{\eta_j} \mathcal{L}_j^2 \left( \eta_j, \lambda \right) = 1 - \sum_{\{i_1, \ldots, i_j\} \subseteq [K]} \lambda_{\{i_1, i_2, \ldots, i_j\}}.
\end{equation}
The algorithm for this setting is given as Algorithm~\ref{alg:fair-regression}, which makes calls to a subroutine (Algorithm~\ref{alg:nr-regression}) that solves the two-player zero-sum games defined above by having the Learner play Online Projected Gradient Descent (see Appendix ~\ref{app:gd+ftpl}) and the Auditor best respond using Algorithm~\ref{alg:auditorbest}. Note that since the Auditor is best responding, computing the sums in Equations~\eqref{eq:grad-model} and \eqref{eq:grad-eta} can be done efficiently per Fact~\ref{fact:efficient}.

\begin{restatable}[Lexifairness for Regression]{thm}{regthm}\label{thm:lexifair-reg}
    Suppose $\Theta \subseteq \R^d$ is convex, compact, and bounded with diameter $D$: $\sup_{\theta, \theta' \in \Theta} \left\Vert \theta - \theta' \right\Vert_2 \le D$. Suppose the loss function $L_z: \Theta \to \R_{\ge 0}$ is convex and that there exists constants $L_M$ and $G$ such that $L_z (\cdot) \le L_M$ and $\Vert \nabla_\theta L_z (\cdot) \Vert_2 \le G$, for all data points $z \in \Z$. We have that for any $\ell \le K$ and any $\alpha \ge 0$, the model $\hat{\theta}_\ell \in \Theta$ output by Algorithm~\ref{alg:fair-regression} is $(\ell,\alpha)$-convex lexicographic fair.
\end{restatable}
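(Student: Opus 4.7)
The plan is to invoke Theorem~\ref{thm:dunno} at each of the $\ell$ rounds of Algorithm~\ref{alg:fair-regression} to obtain quantitative guarantees on the estimates $(\hat\eta_1,\ldots,\hat\eta_\ell)$ and on the final model $\hat\theta_\ell$, and then to exhibit a slack sequence $\epsvec$ with $\Vert\epsvec\Vert_\infty \le \alpha$ that certifies $\hat\theta_\ell$ as $(\ell,\alpha)$-convex lexifair via Definition~\ref{def:convexlexifair}.

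The first task is to verify that Theorem~\ref{thm:dunno} applies at each round $j$ with small $\nu_j$. Convexity of $L_z(\theta)$ together with non-negativity of the dual weights (Claim~\ref{clm:convexity}) makes $\mathcal{L}_j((\theta,\eta_j),\lambda)$ convex in the primal variables on the compact convex set $\Theta \times [0,jL_M]$, while linearity in $\lambda$ gives concavity; hence the hypotheses of Theorem~\ref{thm:noregret} hold. The Learner runs Online Projected Gradient Descent, and by Fact~\ref{fact:efficient}, against an Auditor best-response, at most $j$ of the weights $w_r(\lambda)$ are nonzero and each equals $B$, so the gradient of $\mathcal{L}_j$ in the Learner's variables has norm $O(jBG + B)$. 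Combined with Learner-domain diameter $\tilde D_j = \sqrt{D^2 + (jL_M)^2}$, OGD attains regret $O(\tilde D_j(jBG+B)/\sqrt T)$; Algorithm~\ref{alg:auditorbest} has zero regret. Thus the empirical average play is a $\nu_j$-approximate equilibrium with $\nu_j = O(\tilde D_j(jBG+B)/\sqrt T)$, and Theorem~\ref{thm:dunno} applied at round $\ell$ (where all $r$-ary constraints with $r \le \ell$ are simultaneously enforced) gives, for every $r \le \ell$,
\[
\hat\eta_r \le \opt_r(\hat\eta_1,\ldots,\hat\eta_{r-1}) + 2\nu_r, \quad \max_{\{i_1,\ldots,i_r\}\subseteq[K]} \sum_{s=1}^r L_{i_s}(\hat\theta_\ell) \le \hat\eta_r + \tfrac{\ell L_M+2\nu_\ell}{B} =: \hat\eta_r + \delta.
\]

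The second task is to construct the witness $\epsvec$. Let $M_r$ denote the exact convex lexifair value at level $r$, as in Claim~\ref{clm:relationship}. I propose to set $\epsilon_r(g) := 0$ for every $g \ne \hat\theta_\ell$ and every $r \le \ell$, and $\epsilon_r(\hat\theta_\ell)$ to the smallest nonnegative value that places $\hat\theta_\ell$ into $\F^{\epsvec}_{(r)}$. A straightforward induction on $r$ then shows $\F^{\epsvec}_{(r-1)} = \F^{\vec 0}_{(r-1)} \cup \{\hat\theta_\ell\}$, so the minimization in Definition~\ref{def:convexlexifair} evaluates to $\min\bigl(M_r,\,\max_{\{i_1,\ldots,i_r\}}\sum_s L_{i_s}(\hat\theta_\ell)\bigr)$, and the Definition reduces to $\max_{\{i_1,\ldots,i_j\}}\sum_s L_{i_s}(\hat\theta_\ell) \le M_j + \epsilon_j(\hat\theta_\ell) + \alpha$. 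Using the algorithm's bound $\max_{\{i_1,\ldots,i_j\}}\sum_s L_{i_s}(\hat\theta_\ell) \le \hat\eta_j + \delta$ together with an induction that controls $\hat\eta_j - M_j$ via monotonicity of $\opt_j(\cdot)$ in its constraint parameters, one finds that $B = \Theta(\ell L_M/\alpha)$ and $T = \mathrm{poly}(\ell, D, G, L_M, 1/\alpha)$ suffice to make both $\delta$ and $2\nu_\ell$ at most a small fraction of $\alpha$, thereby guaranteeing $\epsilon_j(\hat\theta_\ell) \le \alpha$ and hence $\Vert\epsvec\Vert_\infty \le \alpha$.

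The main obstacle is the propagation step bounding $\hat\eta_j - M_j$: Theorem~\ref{thm:dunno} compares $\hat\eta_j$ only to $\opt_j(\hat\eta_1,\ldots,\hat\eta_{j-1})$, so translating to $M_j = \opt_j(M_1,\ldots,M_{j-1})$ requires both an upper bound $\hat\eta_r \le M_r + O(\nu_r)$ (so that the algorithm's constraints are loose enough that $\opt_j(\hat\eta_1,\ldots,\hat\eta_{j-1}) \le M_j$) and a matching lower bound $\hat\eta_r \ge M_r - O(\delta)$ (so that the estimates do not underestimate and drive later $\opt$-values upward). The lower bound uses that $\hat\theta_\ell$ approximately lies in $\F^{\vec 0}_{(r-1)}$ together with the definition of $M_r$ as a minimum over that set. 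This is exactly the step that would fail for the flawed Definition~\ref{def:failedlexifair}, and allowing an $h$-dependent $\epsilon_r$ in Definition~\ref{def:convexlexifair}---nonzero only at the algorithm's output---is precisely what sidesteps the cascade pathology of Section~\ref{sec:stability}.
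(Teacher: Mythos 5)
Your first stage---bounding the Learner's OGD regret against a best-responding Auditor, invoking Theorem~\ref{thm:noregret} to get a $\nu_j$-approximate equilibrium at each round, and then applying Theorem~\ref{thm:dunno}---is essentially the paper's argument. The gap is in your witness construction, and it sits exactly at the step you flag as the ``main obstacle.'' Write $V_r(h) \triangleq \max_{\{i_1,\ldots,i_r\}\subseteq[K]}\sum_{s=1}^r L_{i_s}(h)$ and $M_r$ for the exact ($\epsvec=\vec{0}$) level-$r$ value, as in your proposal. First, your induction $\F^{\epsvec}_{(r-1)} = \F^{\vec{0}}_{(r-1)} \cup \{\hat\theta_\ell\}$ is false: whenever $V_r(\hat\theta_\ell) < M_r$, the minimum over $\F^{\epsvec}_{(r-1)}$ drops to $V_r(\hat\theta_\ell)$, no $g \in \F^{\vec{0}}_{(r-1)}$ can meet its zero-slack constraint $V_r(g)\le V_r(\hat\theta_\ell) < M_r$, and $\F^{\epsvec}_{(r)}$ collapses to $\{\hat\theta_\ell\}$. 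This is not a corner case; it is forced in the instability example of Section~\ref{sec:stability}, where Theorem~\ref{thm:dunno} itself guarantees $\hat\eta_2 \le \opt_2(\hat\eta_1)+2\nu \approx 0.75 + \alpha$, a constant below $M_2 = 1$. Second, and more fundamentally, in the non-collapsed case your construction needs $V_j(\hat\theta_\ell)\le M_j+\alpha$, a comparison with the \emph{exact} lexifair values, and via $V_j(\hat\theta_\ell)\le\hat\eta_j+\alpha$ and $\hat\eta_j \le \opt_j(\hat\eta_1,\ldots,\hat\eta_{j-1})+2\nu$ this requires the lower bounds $\hat\eta_s \ge M_s$ for all $s<j$ (so that $\opt_j(\hat\eta_1,\ldots,\hat\eta_{j-1}) \le M_j$ by monotonicity). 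Theorem~\ref{thm:dunno} supplies no such lower bound, and none holds: $\hat\eta_2$ can sit a constant below $M_2$ as above, and $\hat\eta_1$ may even land slightly below $M_1$ (the theorem only forces $\hat\eta_1 \ge V_1(\hat\theta_1) - (L_M+2\nu)/B \ge M_1 - \alpha$), in which case the round-2 feasible set $\{h: V_1(h)\le\hat\eta_1\}$ is empty, $\opt_2(\hat\eta_1)=+\infty$, and $\hat\eta_2$ and $V_2(\hat\theta_\ell)$ are no longer tied to $M_2$ at all---so your $\epsilon_2(\hat\theta_\ell)=(V_2(\hat\theta_\ell)-M_2)_{+}$ can exceed $\alpha$ by a constant and the required bound $\Vert\epsvec\Vert_\infty\le\alpha$ fails. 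Your proposed fix (``$\hat\theta_\ell$ approximately lies in $\F^{\vec{0}}_{(r-1)}$'') cannot work because $\opt_j$ is discontinuous in its constraint levels; that discontinuity \emph{is} the instability phenomenon.

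The paper's proof never compares anything to $M_r$. It chooses the slacks to be \emph{constant} functions, $\epsilon_r \triangleq \hat\eta_r - \opt_r(\hat\eta_1,\ldots,\hat\eta_{r-1})$, the same value for every model. With this choice, $\F^{\epsvec}_{(r)}$ is by induction precisely the algorithm's own feasible region $\{h : V_s(h)\le \hat\eta_s \text{ for all } s\le r\}$, so $\min_{g\in\F^{\epsvec}_{(r-1)}} V_r(g) = \opt_r(\hat\eta_1,\ldots,\hat\eta_{r-1})$; then the first conclusion of Theorem~\ref{thm:dunno} gives $\epsilon_r \le 2\nu_r$, the second conclusion is verbatim the fairness inequality, and setting $\nu_r = \alpha/2$, $B=(\alpha+jL_M)/\alpha$ finishes the proof. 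Note also that you attribute the definition's flexibility to the $h$-dependence of the $\epsilon_j$'s, but the paper's witness is $h$-independent; what does the work is that Definition~\ref{def:convexlexifair} defines each level's relaxed set recursively relative to the \emph{relaxed} minima, which lets the witness track the algorithm's own estimates $\hat\eta_r$ rather than the unstable exact values $M_r$.
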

The proof of this theorem (which can be found in Appendix \ref{app:reg}) involves bounding the regret of each player, and then appealing to Theorem \ref{thm:dunno}.

\begin{algorithm}[t]
\KwIn{$S = \cup_{k=1}^K G_k$ data set consisting of $K$ groups, $(\ell,\alpha)$ desired fairness parameters, loss function parameters $L_M$ and $G$, diameter $D$ of the model class $\Theta$}
\For{$j=1,2, \ldots, \ell$}{
Set $T_j = \frac{4 j^2 (GD + L_M)^2 (2 \alpha + j L_M)^2}{\alpha^4}$\;
Set $B_j = \frac{\alpha + j L_M}{\alpha}$\;
$(\hat{\theta}_j, \hat{\eta}_j) = \mathtt{RegNR} (T_j, B_j; \hat{\eta}_1, \ldots, \hat{\eta}_{j-1})$ (Calling Algorithm~\ref{alg:nr-regression})
}
\KwOut{$(\ell,\alpha)$-convex lexifair model $\hat{\theta}_\ell$} 
\caption{$\mathtt{LexiFairReg}$: Finding a Lexifair Regression Model}
\label{alg:fair-regression}
\end{algorithm}

\begin{algorithm}[t]
\KwIn{Number of rounds $T$, dual variable upper bound $B$, previous estimates $(\eta_1, \ldots, \eta_{j-1})$}
Set learning rates $\eta = \frac{D}{jBG\sqrt{T}}$ and $\eta' = \frac{jL_M}{(1+B)\sqrt{T}}$\;
Initialize the Learner: $\theta^1 \in \Theta, \eta_j^1 \in [0, j \cdot L_M]$\;
\For{$t=1,2, \ldots, T$}{
Learner plays $(\theta^t, \eta_j^t)$\;
Auditor best responds: $\lambda^t = \lambda_\text{best} (\theta^t, \eta_j^t ; (\eta_1, \ldots, \eta_{j-1}))$ using Algorithm~\ref{alg:auditorbest}\;
Learner updates its actions using Projected Gradient Descent:
\[
\theta^{t+1} = \text{Proj}_\Theta \left( \theta^{t} - \eta \cdot \nabla_\theta \mathcal{L}_j (\theta^{t}, \eta_j^{t}, \lambda^{t}) \right)
\]
\[
\eta_j^{t+1} = \text{Proj}_{[0, j \cdot L_M]} \left(\eta_j^{t} - \eta' \cdot \nabla_{\eta_j} \mathcal{L}_j (\theta^{t}, \eta_j^{t}, \lambda^{t}) \right)
\]
where the gradients are given in Equations~\eqref{eq:grad-model} and \eqref{eq:grad-eta}.
}
\KwOut{the average play $\hat{\theta} = \frac{1}{T} \sum_{t=1}^T \theta^t \in \Theta$, and $\hat{\eta}_j = \frac{1}{T} \sum_{t=1}^T \eta_j^t \in [0, j\cdot L_M]$.}
\caption{$\mathtt{RegNR}$: $j$th round}
\label{alg:nr-regression}
\end{algorithm}

\section{Finding Lexifair Classification Models}
\label{sec:class}
Suppose in this section that $\Y = \{0,1\}$ and our model class $\Hs$ is the probability simplex over a class of deterministic binary classifiers. We slightly abuse notation and write $\Hs$ for the given class of deterministic classifiers and write $\Delta \Hs \triangleq \{ p: p \text{ is a distribution over } \Hs \}$ for the probability simplex, and work with $\Delta \Hs$ as our model class. Let the loss function be zero-one loss: for any $h \in \Hs$: $L_z (h) = \mathbbm{1} \left\{ h(x) \neq y \right\}$. The loss of any randomized model $p$ on data point $z$ is defined as the \emph{expected loss} of $h$ on $z$ when $h$ is sampled from $\Hs$ according to the distribution $p$. In other words,
\[
L_z( p ) \triangleq \E_{h \sim p} \left[ L_z (h )\right],
\]
which is convex (linear) in the model $p$ (weights of the distribution). We will also assume that the model class $\Hs$ has finite VC dimension: $d_\Hs < \infty$. Sauer's Lemma below will then imply that for any finite dataset, $\Hs$ induces only finitely many labelings. This will serve two purposes. First, it allows us to write the optimization problem as a linear program with \emph{finitely many} variables, and therefore appeal to strong duality. Second, it allows us to pose the Learner's best response problem as an $n$-dimensional \emph{linear optimization} problem, over the only exponentially many labelings of the $n$ data points. This is what will allow us to apply Follow the Perturbed Leader and obtain oracle-efficient no-regret learning guarantees for the Learner. Here we are following an approach similar to that of \cite{kearns2018preventing}. 

\begin{lemma}[Sauer's Lemma]\label{lem:sauer}
Let $S = \{z_i = (x_i,y_i)\}_{i=1}^n$ be a data set of size $n$ and $\Hs$ be a model class with VC dimension $d_\Hs$. Let $\Hs (S) \triangleq \left\{ \left( h(x_1), h(x_2), \ldots, h(x_n) \right): h \in \Hs \right\}$ be the set of all labelings induced by $\Hs$ on data set $S$. We have that $\vert \Hs (S) \vert = O(n^{d_\Hs})$.
\end{lemma}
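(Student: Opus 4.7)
The plan is to prove the classical sharp form of Sauer's Lemma, namely $|\Hs(S)| \le \sum_{i=0}^{d_\Hs} \binom{n}{i}$, and then invoke the standard estimate $\sum_{i=0}^{d} \binom{n}{i} \le (en/d)^d$ to conclude the stated $O(n^{d_\Hs})$ bound. The latter step is routine; all of the content is in the combinatorial inequality.

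I would reduce to a purely set-theoretic statement about the collection of labelings $A \triangleq \Hs(S) \subseteq \{0,1\}^n$. Letting $\mathrm{sh}(A)$ denote the family of subsets of $[n]$ that are shattered by $A$ (a set $T \subseteq [n]$ is shattered by $A$ if $\{(a_i)_{i \in T} : a \in A\} = \{0,1\}^T$), the key lemma is $|A| \le |\mathrm{sh}(A)|$ (often attributed to Pajor). Once this is established, Sauer's Lemma follows immediately: by hypothesis $\Hs$ shatters no subset of the $n$ sample points of size greater than $d_\Hs$, so neither does $A$, and hence $|\mathrm{sh}(A)| \le \sum_{i=0}^{d_\Hs} \binom{n}{i}$.

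To prove $|A| \le |\mathrm{sh}(A)|$ I would induct on $n$, with the case $n=0$ trivial. For the inductive step, partition $A$ by the value of its last coordinate: let $A_b = \{a \in A : a_n = b\}$ for $b \in \{0,1\}$, and define $A' \subseteq \{0,1\}^{n-1}$ as the projection of $A$ onto the first $n-1$ coordinates, and $A'' \subseteq \{0,1\}^{n-1}$ as the intersection of the projections of $A_0$ and $A_1$ (the prefixes that extend \emph{both} ways in $A$). Then $|A| = |A'| + |A''|$, and the inductive hypothesis gives $|A'| \le |\mathrm{sh}(A')|$ and $|A''| \le |\mathrm{sh}(A'')|$.

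The main obstacle, and the crux of the argument, is to check that these two shattered families inject disjointly into $\mathrm{sh}(A)$: every $T \in \mathrm{sh}(A')$ lies in $\mathrm{sh}(A)$ and does not contain $n$, while for every $T \in \mathrm{sh}(A'')$ the set $T \cup \{n\}$ lies in $\mathrm{sh}(A)$ (because by the definition of $A''$, any prefix witnessing a pattern on $T$ extends both to an element of $A$ with last coordinate $0$ and to one with last coordinate $1$, supplying all $2^{|T|+1}$ patterns on $T \cup \{n\}$). Summing the two bounds completes the induction, and applying the closed-form estimate $\sum_{i=0}^{d_\Hs} \binom{n}{i} = O(n^{d_\Hs})$ finishes the proof.
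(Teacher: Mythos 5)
Your proof is correct, but there is nothing in the paper to compare it against: the paper states Sauer's Lemma purely as a classical, off-the-shelf result (it is invoked only so that $\vert \Hs(S) \vert$ is polynomial in $n$, making the Learner's best response a finite-dimensional linear optimization and enabling the FTPL reduction), and gives no proof of it anywhere, including the appendices. Your argument itself is sound and complete: the reduction to the set system $A = \Hs(S) \subseteq \{0,1\}^n$, the Pajor-style inequality $\vert A \vert \le \vert \mathrm{sh}(A) \vert$ proved by induction on $n$ via the decomposition $\vert A \vert = \vert A' \vert + \vert A'' \vert$, and the disjoint injection of $\mathrm{sh}(A')$ and $\left\{ T \cup \{n\} : T \in \mathrm{sh}(A'') \right\}$ into $\mathrm{sh}(A)$ are all correctly executed; combining this with the hypothesis that no set of size exceeding $d_\Hs$ is shattered, and the estimate $\sum_{i=0}^{d} \binom{n}{i} \le (en/d)^{d}$, yields the claimed $O(n^{d_\Hs})$ bound. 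One pedantic point worth a sentence in a full write-up: when passing from ``$\Hs$ shatters no set of sample points of size greater than $d_\Hs$'' to ``$A$ shatters no index set $T \subseteq [n]$ with $\vert T \vert > d_\Hs$,'' you should note that an index set containing two indices $i \neq j$ with $x_i = x_j$ can never be shattered by $A$ (every $a \in A$ has $a_i = a_j$), so shattered index sets correspond to sets of \emph{distinct} points and the VC hypothesis applies; this is a one-line remark, not a gap.
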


Recall that given some $\lambda \in \Lambda_j$ of the Auditor, the best response of the Learner is separable and given by
\[
(p,\eta_j) = \left( \argmin_{p \in \Delta \Hs} \mathcal{L}_j^1 \left( p, \lambda \right) ,  \argmin_{\eta_j \in [0, j]} \mathcal{L}_j^2 \left( \eta_j, \lambda \right) \right),
\]
where $\mathcal{L}_j^1 \left( p, \lambda \right)$ and $\mathcal{L}_j^2 \left( \eta_j, \lambda \right)$ are given in Equations~\eqref{eq:learner-p} and \eqref{eq:learner-eta}, respectively, and we use $L_M = 1$ because our loss function in this section is the zero-one loss. We can now apply Sauer's Lemma and the fact that $\mathcal{L}_j^1 \left( p, \lambda \right)$ is linear in the weights of the distribution $p$ to rewrite the first optimization problem as
\[
\argmin_{p \in \Delta \Hs} \mathcal{L}_j^1 \left( p, \lambda \right) \quad \underset{\longrightarrow}{\text{Sauer's}} \quad \argmin_{p \in \Delta \Hs (S)} \mathcal{L}_j^1 \left( p, \lambda \right) \quad \underset{\longrightarrow}{\text{linearity}} \quad \argmin_{h \in \Hs (S)} \mathcal{L}_j^1 \left( h, \lambda \right),
\]
which is an optimization problem over finitely many variables (weights over $\Hs(S)$). Note we can further rewrite this optimization problem as a cost sensitive classification problem which can be solved by calling a \emph{Cost Sensitive Classification Oracle} for $\Hs$ (CSC($\Hs$)). Recall from Equation~\eqref{eq:learner-p} that $\mathcal{L}_j^1 \left( h, \lambda \right) = \sum_{r=1}^K w_r (\lambda) L_r (h)$. We have that
\[
\argmin_{h \in \Hs (S)} \sum_{r=1}^K w_r (\lambda) L_r (h)  \equiv \argmin_{h \in \Hs (S)} \sum_{i=1}^n \left\{ c^1_i (\lambda) h(x_i) + c_i^0 (\lambda) \left( 1 - h(x_i) \right) \right\}
\]
where
$
c_i^1 (\lambda) = (1-y_i) \sum_{r=1}^K (w_r (\lambda)/n_r) \1 \left\{ i \in G_r \right\}
$
is the cost of classifying data point $i$ as a positive (1) example, and
$
c_i^0 (\lambda) = y_i \sum_{r=1}^K (w_r (\lambda) / n_r) \1 \left\{ i \in G_r \right\}
$
is the cost of classifying data point $i$ as a negative (0) example. Here $n_r$ is the size of the $r$'th group: $n_r = \vert G_r \vert$. By using a linear transformation of the cost vectors, we have 
\[
\argmin_{h \in \Hs (S)} \sum_{r=1}^K w_r (\lambda) L_r (h)  \equiv \argmin_{h \in \Hs (S)} \sum_{i=1}^n c_i (\lambda) h(x_i)
\]
where the vector of costs is given as follows:
\[
\forall 1 \le i \le n: \quad c_i (\lambda) \triangleq \left( 1 - 2 y_i \right) \sum_{r=1}^K \frac{w_r (\lambda)}{n_r} \1 \left\{ i \in G_r \right\}
\]
Let us also define $c(\lambda) \triangleq 1 - \sum_{\{i_1, \ldots, i_j\} \subseteq [K]} \lambda_{\{i_1, i_2, \ldots, i_j\}}$ which is the coefficient of $\eta_j$ in $\mathcal{L}_j^2 \left( \eta_j, \lambda \right)$. We can therefore write the best response of the Learner to $\lambda \in \Lambda_j$ of the Auditor in the classification setting of this section as
\[
(h,\eta_j) = \left( \argmin_{h \in \Hs(S)} \sum_{i=1}^n c_i ( \lambda )  h(x_i) , \, \argmin_{\eta_j \in [0, j]}  c ( \lambda ) \eta_j \right)
\]

Now to get no-regret guarantees for the Learner we use the Follow the Perturbed Leader (FTPL) algorithm (see Appendix~\ref{subsec:ftpl}). At any round $t$ of the two-player zero-sum game, given the history $(\lambda^1, \ldots, \lambda^{t-1})$ (best response plays) of the Auditor, the Learner solves
\[
(h,\eta_j) = \left( \argmin_{h \in \Hs(S)} \sum_{i=1}^n \left( c_i \left( \sum_{s < t} \lambda^s \right) + \frac{1}{\eta} \xi_i \right) h(x_i) ,  \, \argmin_{\eta_j \in [0, j]} \left( c \left(\sum_{s < t} \lambda^s \right) + \frac{1}{\eta'} \xi \right) \eta_j \right)
\]
where $\xi, \xi_i \sim Uniform[0,1]$ for all $i$. At any round $t$, let's denote the true distributions (one over $\Hs(S)$ and another over the interval $[0,j]$) maintained by the Learner's FTPL algorithm by $p^t$ and $D^t$. Since $p^t$ is a distribution over an exponentially large domain ($\vert \Hs (S) \vert = O(n^{d_\Hs})$), we can only represent a sparse version of it efficiently by sampling from it. On the other hand, $D^t$, which is a one dimensional distribution, can be represented by a scaled Bernoulli random variable as follows:
\[
D^t = j \cdot Bern(q^t) \quad \text{where} \quad q^t = \min \left( 1, - \eta' c \left(\sum_{s < t} \lambda^s \right) \cdot \1 \left[ c \left(\sum_{s < t} \lambda^s \right) \le 0 \right] \right)
\]

The algorithm for this setting is given in Algorithm~\ref{alg:fair-classification} which makes calls to a subroutine that implements the no-regret dynamics described above (Algorithm~\ref{alg:nr-classification}). Note that as mentioned earlier, we cannot efficiently represent the FTPL distribution $p^t$ for the Learner, and therefore, we work with the empirical distribution $\hat{p}^t$ of $m$ $i.i.d.$ draws from $p^t$ in Algorithm~\ref{alg:nr-classification}. This makes the best response plays of the Auditor (to the pair $(\hat{p}^t, D^t)$), \emph{approximate} best responses to the actual FTPL distributions $(p^t, D^t)$ of the Learner, and consequently, the Auditor accumulates some regret over the course of the algorithm.

Finally, note that the no-regret dynamics of Algorithm~\ref{alg:nr-classification} must output the average play: $(\bar{p}, \bar{D})$ where $\bar{p} = (1/T) \sum_{t=1}^T p^t$ and $\bar{D} = (1/T) \sum_{t=1}^T D^t$. However, once again we cannot represent the average play $\bar{p}$ efficiently because it can be a distribution over an exponentially large domain. We therefore need to sample from this distribution and take the empirical distribution of this sample as our final output, and this final sampling scheme will introduce additional error on top of the regret of the players. Putting it all together, which requires carefully analyzing the game, including the regret of the players and the additional error due to sampling from $\bar{p}$, results in the following Theorem. Its proof can be found in Appendix \ref{app:class}.

\begin{algorithm}[t]
\KwIn{$S = \cup_{k=1}^K G_k$ data set consisting of $K$ groups, $(\ell,\alpha)$ desired fairness parameters}
Let $n = \vert S \vert$ and $n_{min} = \min_k \vert G_k \vert$\;
\For{$j=1,2, \ldots, \ell$}{
Set $T_j = \frac{256 \left(2 \alpha + j \right)^2 n^3}{\alpha^4 n_{min}^2}$\;
Set $B_j = \frac{\alpha + j}{\alpha}$\;
Set $m_j = \frac{K^2 n_{min}^2 T_j \log \left( 4jKT_j/\delta\right)}{2 n^3}$\;
$(\hat{p}_j, \hat{\eta}_j) = \mathtt{ClfNR} (T_j, B_j, m_j; \hat{\eta}_1, \ldots, \hat{\eta}_{j-1})$ (Calling Algorithm~\ref{alg:nr-classification})
}
\KwOut{$(\ell,\alpha)$-convex lexifair model $\hat{p}_\ell$}
\caption{$\mathtt{LexiFairClf}$: Finding a Lexifair Classification Model}
\label{alg:fair-classification}
\end{algorithm}

\begin{algorithm}[t]
\KwIn{$T$ number of rounds, $B$ dual variable's upper bound, $m$ number of samples to draw, previous estimates $(\eta_1, \ldots, \eta_{j-1})$}
Set learning rates $\eta = \frac{n_{min}}{B} \sqrt{\frac{1}{nT}}$, $\eta' = \frac{1}{1+B}
\sqrt{\frac{1}{T}}$ where $n_{min}$ is the size of the smallest group.\;
Initialize the Learner $\hat{p}^1 \in \Delta \Hs$, $D^{1} \in \Delta ([0,j])$\;
\For{$t=1,2, \ldots, T$}{
Learner plays actions $(\hat{p}^t, D^t)$\;
Auditor Best Responds: $\lambda^t = \lambda_\text{best} (\hat{p}^t, \E_{x \sim D^t} [x])$ using Algorithm~\ref{alg:auditorbest}\;
Update the running sum: $\bar{\lambda}^t = \sum_{s \le t} \lambda^s$\;
Sample from the Learner's FTPL distribution: \\
\For{s = 1,2, \ldots, m}{
Draw $\xi_i \sim U[0,1]$ for all $i \le n$. Call the oracle CSC$(\Hs)$ to solve
\[
h^s = \argmin_{h \in \Hs} \sum_{i=1}^n \left( c_i \left( \bar{\lambda}^t \right) + \frac{1}{\eta} \xi_i \right) h(x_i)
\]
}
Let $\hat{p}^{t+1}$ be the empirical distribution over $\{h^s\}_{s=1}^m$\;
Let $D^{t+1} = j \cdot Bern(q^t)$ where $q^t = \min \left( 1, - \eta' c \left(\bar{\lambda}^t \right) \cdot \1 \left[ c \left(\bar{\lambda}^t \right) \le 0 \right] \right)$\;
}
Sample from the average distribution $\bar{p} = \frac{1}{T} \sum_{t=1}^T p^t$: \\
\For{s = 1,2, \ldots, m}{
Draw a random number $t \in [T]$\;
Draw $\xi_i \sim U[0,1]$ for all $i \le n$. Call the oracle CSC$(\Hs)$ to solve
\[
h^s = \argmin_{h \in \Hs} \sum_{i=1}^n \left( c_i \left( \bar{\lambda}^t \right) + \frac{1}{\eta} \xi_i \right) h(x_i)
\]
}
Let $\hat{p}$ be the empirical distribution over $\{h^s\}_{s=1}^m$\;
Let $\bar{D}$ be the average distribution: $\bar{D} = \frac{1}{T} \sum_{t=1}^T D^t$\;
Let $\hat{\eta}_j = \E_{x \sim \bar{D}} \left[ x \right]$\;
\KwOut{randomized model $\hat{p} \in \Delta \Hs$, and estimate $\hat{\eta}_j \in [0,j]$.}
\caption{$\mathtt{ClfNR}$: $j$th round}
\label{alg:nr-classification}
\end{algorithm}

\begin{restatable}[Lexifairness for Classification]{thm}{mainthm}\label{thm:lexifair-clf}
    Let $\Hs$ be any class of binary classifiers with finite VC dimension, and let $L_z( p ) = \E_{h \sim p} \left[ L_z (h )\right]$ for any randomized model $p \in \Delta \Hs$ where $L_z (h) = \1 \left\{ h(x) \neq y \right\}$ is the zero-one loss. Fix any $\ell \le K$ and any $\alpha \ge 0$. We have that for any $\delta > 0 0$, with probability at least $1-\delta$, the model $\hat{p}_\ell \in \Delta \Hs$ output by Algorithm~\ref{alg:fair-classification} is $(\ell,\alpha)$-convex lexicographic fair.
\end{restatable}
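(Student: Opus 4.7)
The plan is to prove the theorem by induction on $j = 1, \ldots, \ell$, establishing at each round that the primal-dual pair generated by the play of Algorithm~\ref{alg:nr-classification} forms a $\nu_j$-approximate equilibrium of the Lagrangian game \eqref{eq:minimax}, and then invoking Theorem~\ref{thm:dunno} to translate this into the convex lexifairness constraints at level $j$. Concretely, if $((\hat{p}_j, \hat{\eta}_j), \hat{\lambda}_j)$ is a $\nu_j$-equilibrium, Theorem~\ref{thm:dunno} gives both $\hat{\eta}_j \le \opt_j(\hat{\eta}_1, \ldots, \hat{\eta}_{j-1}) + 2\nu_j$ and, for every $r \le j$, $\max_{\{i_1, \ldots, i_r\} \subseteq [K]} \sum_{s=1}^r L_{i_s}(\hat{p}_j) \le \hat{\eta}_r + (j + 2\nu_j)/B_j$. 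The choice $B_j = (\alpha + j)/\alpha$ pushes the second slack down to $\alpha + O(\nu_j / B_j)$, so taking $\epsilon_j(\cdot) \equiv O(\nu_j)$ and verifying $\nu_j = O(\alpha)$ uniformly yields $(\ell, \alpha)$-convex lexifairness by Definition~\ref{def:convexlexifair}. A union bound over the $\ell$ rounds absorbs the total failure probability into $\delta$.

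To bound $\nu_j$ I decompose it into three contributions. First, the Learner's FTPL regret on the $h$-component: by Sauer's Lemma, $|\Hs(S)| = O(n^{d_\Hs})$, the Learner's best response reduces to cost-sensitive classification over $\Hs(S)$ with costs $c_i(\lambda)$ whose $\ell_\infty$-norm is bounded by $O(B/n_{\min})$ (since each $w_r(\lambda)/n_r$ contributes at most $B/n_{\min}$), and standard FTPL guarantees deliver average regret $O(B \sqrt{n/T}/n_{\min})$. An independent one-dimensional FTPL on the $\eta_j$-component, which is just a scaled Bernoulli, contributes $O((1+B)/\sqrt{T})$. Second, the Auditor's regret: since the Auditor responds not to the true FTPL distribution $p^t$ but to the empirical distribution $\hat{p}^t$ formed from $m$ i.i.d.~draws, I invoke Hoeffding's inequality plus a union bound over the $K$ groups and $T$ rounds to obtain $|L_k(\hat{p}^t) - L_k(p^t)| \le O(\sqrt{\log(KT/\delta)/m})$ simultaneously with high probability; Lipschitzness of $\mathcal{L}_j$ in the $L_k$'s (with constant $O(B)$) then turns this into Auditor regret $O(B \sqrt{\log(KT/\delta)/m})$. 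Third, the final sampling step, which draws $\hat{p}$ from $\bar{p} = (1/T) \sum_t p^t$, contributes an additional $O(\sqrt{\log(K/\delta)/m})$ error in each group loss by another Hoeffding bound.

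Combining the three sources, substituting the algorithm's settings $T_j = \Theta((j/\alpha)^4 n^3 / n_{\min}^2)$, $B_j = (\alpha+j)/\alpha$, and $m_j = \Theta(K^2 n_{\min}^2 T_j \log(jKT_j/\delta)/n^3)$, and carrying through the arithmetic shows that all three error terms are $O(\alpha)$, so $\nu_j = O(\alpha)$ with probability $1 - \delta/\ell$ per round. The induction then closes and the final union bound yields the theorem.

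The main technical obstacle will be handling the feedback between the sampled Learner strategy $\hat{p}^t$ and the Auditor's induced play $\lambda^t$. Because $\lambda^t$ is no longer a true best response to the distribution $p^t$ that FTPL maintains, the Auditor accumulates genuine regret whose magnitude is driven by the per-round sampling error $|L_k(\hat{p}^t) - L_k(p^t)|$. The crucial point is that Hoeffding gives a $\sqrt{\log(KT/\delta)/m}$ rate, so a single union bound over the $T$ rounds is affordable without inflating $m$ polynomially in $T$; this is what keeps $m_j$ at a polynomial level. A parallel care is needed when passing from the empirical $\hat{p}$ to the true average $\bar{p}$ at termination, since the fairness statement we ultimately output is about $\hat{p}$ but the equilibrium guarantee is about $\bar{p}$. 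Once these two concentration arguments are in hand, the rest of the proof is a careful but mechanical substitution of parameters into the bounds from Theorems~\ref{thm:noregret} and~\ref{thm:dunno}.
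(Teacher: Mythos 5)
Your proposal is correct and follows essentially the same route as the paper's proof: per-round no-regret dynamics with an FTPL Learner over $\Hs(S)$ (via Sauer's Lemma and the cost-sensitive classification reduction) plus a one-dimensional FTPL for $\eta_j$, Hoeffding-plus-union-bound control of the Auditor's regret incurred by best-responding to the sampled $\hat{p}^t$ rather than the true $p^t$, a second concentration argument for the final sample $\hat{p}$ drawn from $\bar{p}$, conversion of the resulting approximate equilibrium into the convex lexifairness constraints via Theorem~\ref{thm:dunno} with constant maps $\epsilon_r = \hat{\eta}_r - \opt_r(\hat{\eta}_1,\ldots,\hat{\eta}_{r-1})$, and a union bound over the $\ell$ rounds. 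The only discrepancies are two harmless quantitative slips: FTPL over the $n$-dimensional labeling space gives average regret $O\left(Bn^{3/2}/(n_{\min}\sqrt{T})\right)$ rather than your $O\left(B\sqrt{n/T}/n_{\min}\right)$, and the Lagrangian's Lipschitz constant in the vector of group losses is $O(KB)$ rather than $O(B)$ (the paper's Auditor-regret bound is $KB\sqrt{2\log(4KT/\delta)/m}$) --- both are exactly absorbed by the algorithm's settings $T_j \propto n^3/n_{\min}^2$ and $m_j \propto K^2$, so your final arithmetic still closes.
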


\section{Generalization}
\label{sec:generalization}
In this section, we turn our attention to out of sample bounds. Standard uniform convergence statements would tell us that if we have enough samples from every group, then our in-sample group errors are good estimates of our out of sample group errors. However, this alone does not directly imply that we  satisfy approximate lexifairness out of sample. We prove this is the case below. Our ability to prove out of sample bounds crucially relies on our definitional choices that removed the instability of the naive Definition \ref{def:failedlexifair}. Specifically, we show that if:
\begin{enumerate}
    \item Our base class $\Hs$ satisfies a standard uniform convergence bound across every group (so that we can control the maximum gap between in and out of sample error across every $h \in \Hs$, within each group $k$), and
    \item We have a model that is approximately convex lexifair on our dataset $S \sim \Ps^n$, then
\end{enumerate}
then our model is also appropriately convex lexifair on the underlying distribution (with some loss in the approximation parameter). 

\begin{thm}[Generalization for Convex Lexifairness]\label{thm:convex-generalization}
Fix any distribution $\Ps$. Suppose for every $\delta > 0$, there exists $\beta(\delta)$ such that the following uniform convergence bound holds.
\[
\Pr_{S} \left[ \max_{h \in \Hs, k \in [K]} \left\vert L_k \left(h, S \right) - L_k \left(h, \Ps \right)\right\vert > \beta(\delta) \right] < \delta
\]
where $S$ is a data set sampled $i.i.d.$ from $\Ps$. We have that for every data set $S$ sampled $i.i.d.$ from $\Ps$, if a model $h$ satisfies $(\ell, \alpha)$-convex lexicographic fairness with respect to $S$, then with probability at least $1-\delta$ it also satisfies $(\ell, \alpha')$-convex lexicographic fairness with respect to $\Ps$ for $\alpha' = \alpha + 2 \ell \beta (\delta)$.
\end{thm}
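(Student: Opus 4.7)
The plan is to propagate the sample-side lexifair certificate of $h$ to $\Ps$ by carefully absorbing the uniform convergence error at each of the $\ell$ recursive levels of the definition into a modified distributional witness $\epsvec^\Ps$. First, by a union bound over the $K$ groups and the given uniform convergence hypothesis, with probability at least $1-\delta$ we have $|L_k(h', S) - L_k(h', \Ps)| \le \beta := \beta(\delta)$ simultaneously for every $h' \in \Hs$ and $k \in [K]$. Summing this over any $j$-subset $T \subseteq [K]$ and then maximizing over such $T$ yields the consequence $|M_j^S(h') - M_j^\Ps(h')| \le j\beta$ for every $h' \in \Hs$ and every $j \le \ell$, where I write $M_j^\cdot(h') := \max_{T \subseteq [K], |T|=j} \sum_{k \in T} L_k(h', \cdot)$.

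Let $\epsvec^S$ with $\|\epsvec^S\|_\infty \le \alpha$ be the sample-side witness certifying $(\ell,\alpha)$-convex lexifairness of $h$ on $S$. I would define the distributional witness by $\epsilon_j^\Ps(h') := \epsilon_j^S(h') - 2(\ell - j)\beta$ for every $h' \in \Hs$ and $j \le \ell$. The largest value of $\epsilon_j^\Ps$ is attained at $j = \ell$ (where $\epsilon_\ell^\Ps = \epsilon_\ell^S$), so $\|\epsvec^\Ps\|_\infty \le \alpha \le \alpha'$. The shrinkage at early levels is calibrated so that the tightened distributional set $\F^{\epsvec^\Ps}_{(j-1)}(\Ps)$ makes the minimum $V_j^\Ps := \min_{g \in \F^{\epsvec^\Ps}_{(j-1)}(\Ps)} M_j^\Ps(g)$ no smaller than $V_j^S$ by more than $j\beta$, which will suffice to transfer $h$'s lexifair inequality from $S$ to $\Ps$.

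The core of the argument is an induction on $j$ establishing the invariant $\F^{\epsvec^\Ps}_{(j)}(\Ps) \subseteq \F^{\epsvec^S}_{(j)}(S)$, from which the lower bound $V_{j+1}^\Ps \ge V_{j+1}^S - (j+1)\beta$ follows immediately: every $g$ in the smaller distributional set also lies in the sample set and therefore satisfies $M_{j+1}^\Ps(g) \ge M_{j+1}^S(g) - (j+1)\beta \ge V_{j+1}^S - (j+1)\beta$, and taking the min over $g$ preserves the bound. Given the invariant through level $j-1$, combining uniform convergence, the sample inequality $M_j^S(h) \le V_j^S + \epsilon_j^S(h) + \alpha$, and the resulting $V$-bound yields
\[
M_j^\Ps(h) \le M_j^S(h) + j\beta \le V_j^S + \epsilon_j^S(h) + \alpha + j\beta \le V_j^\Ps + \epsilon_j^\Ps(h) + \alpha + 2\ell\beta,
\]
which is exactly the distributional $(\ell,\alpha')$-convex lexifair condition at level $j$.

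The main obstacle is closing the inductive step for the set containment itself. Unrolling the definitions, for $g \in \F^{\epsvec^\Ps}_{(j)}(\Ps)$ one derives $M_j^S(g) \le V_j^\Ps + \epsilon_j^S(g) + (3j - 2\ell)\beta$, and concluding $g \in \F^{\epsvec^S}_{(j)}(S)$ requires the additional upper bound $V_j^\Ps \le V_j^S + (2\ell - 3j)\beta$, which is in the opposite direction from the lower bound furnished by the one-sided inductive containment. I expect to resolve this either by strengthening the invariant to a two-sided gap bound $|V_j^\Ps - V_j^S| = O(\ell\beta)$ via a more symmetric, level-dependent choice of $\epsilon_j^\Ps$ (with corrections of both signs across levels), or by relaxing the containment target to $\F^{\epsvec^\Ps}_{(j)}(\Ps) \subseteq \F^{\tilde\epsvec^S}_{(j)}(S)$ for a mildly inflated sample witness $\tilde\epsvec^S$ still within the $\alpha'$ budget, after which the final computation of the previous paragraph goes through essentially unchanged.
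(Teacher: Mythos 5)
Your high-level plan---absorb the uniform-convergence error into a modified distributional witness and induct over the $\ell$ recursive levels---matches the paper's strategy, but your execution has a genuine gap, and it is exactly the one you flag. Writing, as you do, $M_j^S(h')$ and $M_j^{\Ps}(h')$ for the maximum over $j$-subsets of the summed group losses, and $V_j^S$, $V_j^{\Ps}$ for their minima over $\F^{\epsvec^S}_{(j-1)}(S)$ and $\F^{\epsvec^{\Ps}}_{(j-1)}(\Ps)$ respectively, your inductive containment $\F^{\epsvec^{\Ps}}_{(j)}(\Ps) \subseteq \F^{\epsvec^S}_{(j)}(S)$ cannot be closed: converting distributional membership of $g$ into sample membership requires an \emph{upper} bound on $V_j^{\Ps}$ in terms of $V_j^S$, while the inductive hypothesis (distributional set contained in sample set) only makes the distributional minimum \emph{larger}, i.e.\ yields a lower bound. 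For $j \ge 2$ the two minima are over genuinely different sets and the needed upper bound is false in general: shrinking the level-$(j-1)$ distributional set can exclude precisely the models with small level-$j$ sums, driving $V_j^{\Ps}$ far above $V_j^S$. Neither of your proposed repairs escapes this, because both retain \emph{model-independent} shifts: any one-sided containment provable with constant shifts produces the min-comparison in the wrong direction, and your downward shifts carry a further defect---when $\epsilon_j^S(h') < 2(\ell-j)\beta$ for all $h'$, the set $\F^{\epsvec^{\Ps}}_{(j)}(\Ps)$ is empty, since no model can beat the minimum by a negative margin.

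The idea you are missing is that Definition~\ref{def:convexlexifair} allows the witness to be an arbitrary \emph{model-dependent} map in $\R^{\Hs}$, so the corrections can be taken to be \emph{exact differences} rather than uniform bounds on them. The paper sets $\tau_j(h') = \epsilon_j(h') + \nu_j^1(h') + \nu_j^2$, where $\nu_j^1(h') = M_j^{\Ps}(h') - M_j^S(h')$ is the exact per-model gap and $\nu_j^2$ is the exact difference between the minimum of $M_j^S$ and the minimum of $M_j^{\Ps}$, \emph{both taken over the same sample-side set} $\F^{\epsvec}_{(j-1)}(S)$. With this choice the membership condition defining $\F^{\vec{\tau}}_{(j)}(\Ps)$ cancels algebraically, term by term, into the condition defining $\F^{\epsvec}_{(j)}(S)$, so the two set sequences are \emph{equal}---the induction is an iff-chain involving no inequalities and no estimation at all---and the transfer of $h$'s fairness inequality is likewise an identity-level computation. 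Uniform convergence enters only once, at the very end, to bound $\max_{h'}\vert\nu_j^1(h')\vert \le j\beta(\delta)$ and $\vert\nu_j^2\vert \le j\beta(\delta)$, which gives $\Vert \vec{\tau}\Vert_\infty \le \alpha + 2\ell\beta(\delta)$. In short: where you try to estimate the level-$j$ minima and run into a two-sided comparison that constant shifts cannot deliver, the paper defines the witness so that no comparison is needed until the final norm bound.
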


\begin{proof}
Fix a distribution $\Ps$ and a data set $S$ sampled $i.i.d.$ from $\Ps$. Suppose $h$ satisfies $(\ell, \alpha)$-convex lexicographic fairness with respect to $S$. Therefore, according to our convex lexifairness definition, there exists a sequence of mappings $\epsvec = (\epsilon_1, \ldots, \epsilon_\ell)$ where $\epsilon_j \in \R^{\Hs}$, and a sequence of function classes $\{\F_{(j)}^{\epsvec} (S)\}_j$ such that
\[
\max_{1 \le j \le \ell} \left\{ \max_{h' \in \Hs} \epsilon_j (h') \right\} \le \alpha
\]
and that for all  $j \le \ell$:
\begin{equation}\label{eq:something}
\max_{ \left\{ i_1, \ldots, i_j \right\} \subseteq [K] } \sum_{r=1}^j L_{i_r} (h, S) \le \min_{g \in \F_{(j-1)}^{\epsvec}(S) } \max_{\left\{ i_1, \ldots, i_j \right\} \subseteq [K]} \sum_{r=1}^j L_{i_r} (g, S) + \epsilon_j (h) + \alpha
\end{equation}
where recall that $\F^{\epsvec}_{(0)} (S) = \Hs$ and that for all $j \in [\ell]$,
\[
    \F_{(j)}^{\epsvec} (S) = \left\{ h' \in \F_{(j-1)}^{\epsvec}(S): \max_{ \left\{ i_1, \ldots, i_j \right\} \subseteq [K] } \sum_{r=1}^j L_{i_r} (h', S) \le \min_{g \in \F_{(j-1)}^{\epsvec}(S) } \max_{\left\{ i_1, \ldots, i_j \right\} \subseteq [K]} \sum_{r=1}^j L_{i_r} (g, S) + \epsilon_j (h') \right\}
\]
Let us define a mapping $\nu_j^1 : \Hs \to \R$ such that for every $h' \in \Hs$,
\[
\nu_j^1 (h') \triangleq \max_{ \left\{ i_1, \ldots, i_j \right\} \subseteq [K] } \sum_{r=1}^j L_{i_r} (h', \Ps) - \max_{ \left\{ i_1, \ldots, i_j \right\} \subseteq [K] } \sum_{r=1}^j L_{i_r} (h', S)
\]
and let
\[
\nu_j^2 \triangleq \min_{g \in \F_{(j-1)}^{\epsvec}(S) } \max_{\left\{ i_1, \ldots, i_j \right\} \subseteq [K]} \sum_{r=1}^j L_{i_r} (g, S) - \min_{g \in \F_{(j-1)}^{\epsvec}(S) } \max_{\left\{ i_1, \ldots, i_j \right\} \subseteq [K]} \sum_{r=1}^j L_{i_r} (g, \Ps)
\]
Now define for every $h' \in \Hs$, $\tau_j (h') \triangleq \epsilon_j (h') + \nu_j^1 (h') + \nu_j^2$ and let $\F_{(j)}^{\vec{\tau}} (\Ps)$ be defined according to our convex lexifairness definition with the sequence of mappings defined by $\vec{\tau} = (\tau_1, \ldots, \tau_\ell)$. In other words, $\F_{(0)}^{\vec{\tau}} (\Ps) = \Hs$, and for all $j \in [\ell]$,
\[
    \F_{(j)}^{\vec{\tau}} (\Ps) = \left\{ h' \in \F_{(j-1)}^{\vec{\tau}}(\Ps): \max_{ \left\{ i_1, \ldots, i_j \right\} \subseteq [K] } \sum_{r=1}^j L_{i_r} (h', \Ps) \le \min_{g \in \F_{(j-1)}^{\vec{\tau}}(\Ps) } \max_{\left\{ i_1, \ldots, i_j \right\} \subseteq [K]} \sum_{r=1}^j L_{i_r} (g, \Ps) + \tau_j (h') \right\}.
\]
\begin{clm}\label{clm:something2}
For all $j$, $\F_{(j)}^{\vec{\tau}} (\Ps) = \F_{(j)}^{\epsvec} (S)$.
\end{clm}
\begin{proof}
We use induction on $j$. For $j=0$, we have $\F_{(0)}^{\vec{\tau}} (\Ps) = \F_{(0)}^{\epsvec} (S) = \Hs$. For $j \ge 1$, we have
\begin{align*}
    h' \in \F_{(j)}^{\vec{\tau}} (\Ps) &\Longleftrightarrow h' \in \F_{(j-1)}^{\vec{\tau}}(\Ps), \ \max_{ \left\{ i_1, \ldots, i_j \right\} \subseteq [K] } \sum_{r=1}^j L_{i_r} (h', \Ps) \le \min_{g \in \F_{(j-1)}^{\vec{\tau}}(\Ps) } \max_{\left\{ i_1, \ldots, i_j \right\} \subseteq [K]} \sum_{r=1}^j L_{i_r} (g, \Ps) + \tau_j (h') \\
    &\Longleftrightarrow h' \in \F_{(j-1)}^{{\epsvec}}(S), \ \max_{ \left\{ i_1, \ldots, i_j \right\} \subseteq [K] } \sum_{r=1}^j L_{i_r} (h', \Ps) \le \min_{g \in \F_{(j-1)}^{{\epsvec}}(S) } \max_{\left\{ i_1, \ldots, i_j \right\} \subseteq [K]} \sum_{r=1}^j L_{i_r} (g, \Ps) + \tau_j (h') \\
    &\Longleftrightarrow h' \in \F_{(j-1)}^{{\epsvec}}(S), \ \max_{ \left\{ i_1, \ldots, i_j \right\} \subseteq [K] } \sum_{r=1}^j L_{i_r} (h', S) \le \min_{g \in \F_{(j-1)}^{\epsvec}(S) } \max_{\left\{ i_1, \ldots, i_j \right\} \subseteq [K]} \sum_{r=1}^j L_{i_r} (g, S) + \epsilon_j (h') \\
    &\Longleftrightarrow h' \in \F_{(j)}^{{\epsvec}}(S)
\end{align*}
where the second line follows from the induction assumption ($\F_{(j-1)}^{\vec{\tau}}(\Ps) = \F_{(j-1)}^{\vec{\epsvec}}(S)$) and the third line follows from the definition of $\tau_j$. This establishes our claim.
\end{proof}
We have that for all $j \le \ell$, the model $h$ satisfies
\begin{align*}
    \max_{ \left\{ i_1, \ldots, i_j \right\} \subseteq [K] } \sum_{r=1}^j L_{i_r} (h, \Ps) &= \max_{ \left\{ i_1, \ldots, i_j \right\} \subseteq [K] } \sum_{r=1}^j L_{i_r} (h, S) + \nu_j^1 (h) \\
    &\le \min_{g \in \F_{(j-1)}^{\epsvec}(S) } \max_{\left\{ i_1, \ldots, i_j \right\} \subseteq [K]} \sum_{r=1}^j L_{i_r} (g, S) + \epsilon_j (h) + \alpha + \nu_j^1 (h) \\
    &= \min_{g \in \F_{(j-1)}^{\epsvec}(S) } \max_{\left\{ i_1, \ldots, i_j \right\} \subseteq [K]} \sum_{r=1}^j L_{i_r} (g, \Ps) + \nu_j^2 + \epsilon_j (h) + \alpha + \nu_j^1 (h) \\
    &= \min_{g \in \F_{(j-1)}^{\vec{\tau}}(\Ps) } \max_{\left\{ i_1, \ldots, i_j \right\} \subseteq [K]} \sum_{r=1}^j L_{i_r} (g, \Ps) + \tau_j (h) + \alpha
\end{align*}
where the first inequality follows from Equation~\eqref{eq:something}. The third line follows from the definition of $\nu_j^2$. The last equality follows from Claim~\ref{clm:something2} and the fact that $\tau_j (h) = \epsilon_j (h) + \nu_j^1 (h) + \nu_j^2$. The proof is complete by the uniform convergence bound provided in the theorem statement. With probability at least $1-\delta$ over the random draws of the data set $S$, we have $\max_{h' \in \Hs} \vert \nu_j^1 (h') \vert \le j\beta(\delta)$ and $\vert \nu_j^2 \vert \le j\beta(\delta)$, and hence for all $j \le \ell$,
\begin{align*}
\Vert \tau \Vert_\infty &= \max_{1 \le j \le \ell} \left\{ \max_{h' \in \Hs} \tau_j (h') \right\} \\
&\le \max_{1 \le j \le \ell} \left\{ \max_{h' \in \Hs} \epsilon_j (h') \right\} + \max_{1 \le j \le \ell} \left\{ \max_{h' \in \Hs} \vert \nu_j^1 (h') \vert + \vert \nu_j^2 \vert \right\}\\
&\le \alpha + 2 l \beta(\delta)
\end{align*}
\end{proof}

We can now instantiate the above theorem in a classification setting in which we  have VC-type convergence bounds. A corollary that we get by applying standard uniform convergence bounds for finite VC classes (See Appendix \ref{sec:VC}) is the following: 
\begin{cor}[Generalization for Convex Lexifairness: Classification Setting]
Suppose $\Hs$ is a class of binary classifiers with VC dimension $d_\Hs$ and let $L_z( p ) = \E_{h \sim p} \left[ L_z (h )\right]$ for any randomized model $p \in \Delta \Hs$ where $L_z (h) = \1 \left\{ h(x) \neq y \right\}$ is the zero-one loss. We have that for every $\Ps$, every data set $S \equiv \{ G_k \}_k$ of size $n$ sampled $i.i.d.$ from $\Ps$, if a model $p \in \Delta \Hs$ satisfies $(\ell, \alpha)$-convex lexicographic fairness with respect to $S$, then with probability at least $1-\delta$ it also satisfies $(\ell, 2 \alpha)$-convex lexicographic fairness with respect to $\Ps$ provided that
\[
\min_{1 \le k \le K} \left\vert G_k \right\vert = \Omega \left( \frac{l^2 \left( d_\Hs \log \left( n \right) + \log \left( K / \delta \right) \right)}{\alpha^2} \right)
\]
\end{cor}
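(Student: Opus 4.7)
The plan is to instantiate Theorem~\ref{thm:convex-generalization} by exhibiting a suitable $\beta(\delta)$ for the class $\Delta\Hs$ of randomized models over $\Hs$, then solving the inequality $\alpha + 2\ell\beta(\delta) \le 2\alpha$ for the minimum per-group sample size. The main ingredient is a standard VC-type uniform convergence bound applied \emph{per group} and lifted from $\Hs$ to $\Delta\Hs$ by linearity of expectation; the only care required is to union-bound over the $K$ groups and to convert the convergence rate into a lower bound on $\min_k |G_k|$.

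First I would observe that, because $L_z(p) = \mathbb{E}_{h\sim p}[L_z(h)]$ is linear in $p$, for any randomized model $p \in \Delta\Hs$ and any group $k$ we have
\[
\left| L_k(p,S) - L_k(p,\Ps) \right| \;=\; \left| \mathbb{E}_{h\sim p}\!\left[L_k(h,S) - L_k(h,\Ps)\right] \right| \;\le\; \sup_{h \in \Hs} \left| L_k(h,S) - L_k(h,\Ps) \right|,
\]
so it suffices to prove a uniform convergence bound over the deterministic class $\Hs$ on each group $k$. Conditioning on the realized partition $S = \cup_k G_k$ (so $|G_k|$ is fixed), the points in $G_k$ are i.i.d.\ from the conditional distribution $\Ps_k$, and the standard VC-dimension generalization bound (see Appendix~\ref{sec:VC}) gives that with probability at least $1-\delta/K$,
\[
\sup_{h \in \Hs} \left| L_k(h,S) - L_k(h,\Ps) \right| \;\le\; C\sqrt{\frac{d_\Hs \log |G_k| + \log(K/\delta)}{|G_k|}}
\]
for an absolute constant $C$. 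A union bound over $k \in [K]$ and substitution $|G_k| \ge n_{\min} := \min_{k} |G_k|$ (together with the trivial bound $\log|G_k| \le \log n$) yields, with probability at least $1-\delta$,
\[
\max_{h \in \Hs, k \in [K]} \left| L_k(h,S) - L_k(h,\Ps) \right| \;\le\; \beta(\delta) \;:=\; C\sqrt{\frac{d_\Hs \log n + \log(K/\delta)}{n_{\min}}}.
\]

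Next I would apply Theorem~\ref{thm:convex-generalization} to the class $\Delta\Hs$ with this $\beta(\delta)$: any model $p$ that is $(\ell,\alpha)$-convex lexifair on $S$ is, with probability at least $1-\delta$, $(\ell, \alpha + 2\ell\beta(\delta))$-convex lexifair on $\Ps$. To conclude $(\ell, 2\alpha)$-convex lexifairness it suffices to enforce $2\ell\beta(\delta) \le \alpha$, i.e.\ $\beta(\delta) \le \alpha/(2\ell)$. Squaring and rearranging yields the stated condition
\[
n_{\min} \;=\; \Omega\!\left(\frac{\ell^2\bigl(d_\Hs \log n + \log(K/\delta)\bigr)}{\alpha^2}\right),
\]
which completes the derivation.

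The only delicate point is the conditioning-on-group-sizes step: group membership is a property of the draw, so $|G_k|$ is itself random, but since the uniform convergence statement holds for every realization and we quantify the conclusion over realizations in which $\min_k|G_k|$ meets the stated lower bound, no additional concentration argument on the $|G_k|$'s is required. Everything else is a routine substitution into Theorem~\ref{thm:convex-generalization}, so I do not anticipate a serious technical obstacle.
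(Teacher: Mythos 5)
Your proposal is correct and follows essentially the same route as the paper: it instantiates Theorem~\ref{thm:convex-generalization} with exactly the VC-type uniform convergence bound the paper proves in Appendix~\ref{sec:VC} (per-group VC bound with a $\delta/K$ union bound, lifted from $\Hs$ to $\Delta\Hs$ by linearity of expectation), and then solves $2\ell\beta(\delta) \le \alpha$ for $\min_k |G_k|$. Your explicit remark about conditioning on the realized group sizes is a care the paper leaves implicit, but it does not change the argument.
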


We have here proven a generalization theorem for convex lexifairness (Definition~\ref{def:convexlexifair}) which is the definition that our algorithms satisfy. We also prove a generalization theorem for lexifairness (Definition~\ref{def:lexifair}) in Appendix~\ref{sec:lexifair-generalization}.

\subsection*{Acknowledgements}
Supported in part by the Warren Center for Network and Data Sciences,  NSF grant CCF-1763307 and the Simons Collaboration on the Theory of Algorithmic Fairness.

\bibliographystyle{plainnat}
\bibliography{main}

\begin{thebibliography}{35}
\providecommand{\natexlab}[1]{#1}
\providecommand{\url}[1]{\texttt{#1}}
\expandafter\ifx\csname urlstyle\endcsname\relax
  \providecommand{\doi}[1]{doi: #1}\else
  \providecommand{\doi}{doi: \begingroup \urlstyle{rm}\Url}\fi

\bibitem[Agarwal et~al.(2018)Agarwal, Beygelzimer, Dudík, Langford, and
  Wallach]{agarwal2018reductions}
Alekh Agarwal, Alina Beygelzimer, Miroslav Dudík, John Langford, and Hanna
  Wallach.
\newblock A reductions approach to fair classification.
\newblock In \emph{Proceedings of the 35th International Conference on Machine
  Learning}, 2018.

\bibitem[{Allalouf} and {Shavitt}(2008)]{4469905}
M.~{Allalouf} and Y.~{Shavitt}.
\newblock Centralized and distributed algorithms for routing and weighted
  max-min fair bandwidth allocation.
\newblock \emph{IEEE/ACM Transactions on Networking}, 16\penalty0 (5):\penalty0
  1015--1024, 2008.
\newblock \doi{10.1109/TNET.2007.905605}.

\bibitem[Asadpour and Saberi(2010)]{fairdivision}
Arash Asadpour and Amin Saberi.
\newblock An approximation algorithm for max-min fair allocation of indivisible
  goods.
\newblock \emph{SIAM Journal on Computing}, 39\penalty0 (7):\penalty0
  2970--2989, 2010.

\bibitem[Berk et~al.(2018)Berk, Heidari, Jabbari, Kearns, and
  Roth]{berk2018fairness}
Richard Berk, Hoda Heidari, Shahin Jabbari, Michael Kearns, and Aaron Roth.
\newblock Fairness in criminal justice risk assessments: The state of the art.
\newblock \emph{Sociological Methods \& Research}, page 0049124118782533, 2018.

\bibitem[Chen et~al.(2017)Chen, Lucier, Singer, and Syrgkanis]{chen}
Robert~S. Chen, Brendan Lucier, Yaron Singer, and Vasilis Syrgkanis.
\newblock Robust optimization for non-convex objectives.
\newblock In I.~Guyon, U.~V. Luxburg, S.~Bengio, H.~Wallach, R.~Fergus,
  S.~Vishwanathan, and R.~Garnett, editors, \emph{Advances in Neural
  Information Processing Systems}, volume~30, pages 4705--4714. Curran
  Associates, Inc., 2017.
\newblock URL
  \url{https://proceedings.neurips.cc/paper/2017/file/10c66082c124f8afe3df4886f5e516e0-Paper.pdf}.

\bibitem[Cotter et~al.(2019)Cotter, Jiang, and Sridharan]{cotter}
Andrew Cotter, Heinrich Jiang, and Karthik Sridharan.
\newblock Two-player games for efficient non-convex constrained optimization.
\newblock In Aurélien Garivier and Satyen Kale, editors, \emph{Proceedings of
  the 30th International Conference on Algorithmic Learning Theory}, volume~98
  of \emph{Proceedings of Machine Learning Research}, pages 300--332, Chicago,
  Illinois, 22--24 Mar 2019. PMLR.
\newblock URL \url{http://proceedings.mlr.press/v98/cotter19a.html}.

\bibitem[Danna et~al.(2017)Danna, Hassidim, Kaplan, Kumar, Mansour, Raz, and
  Segalov]{10.1145/3011282}
Emilie Danna, Avinatan Hassidim, Haim Kaplan, Alok Kumar, Yishay Mansour, Danny
  Raz, and Michal Segalov.
\newblock Upward max-min fairness.
\newblock \emph{J. ACM}, 64\penalty0 (1), March 2017.
\newblock ISSN 0004-5411.
\newblock \doi{10.1145/3011282}.
\newblock URL \url{https://doi.org/10.1145/3011282}.

\bibitem[Diana et~al.(2020)Diana, Gill, Kearns, Kenthapadi, and Roth]{minimax2}
Emily Diana, Wesley Gill, Michael Kearns, Krishnaram Kenthapadi, and Aaron
  Roth.
\newblock Convergent algorithms for (relaxed) minimax fairness.
\newblock \emph{arXiv preprint arXiv:2011.03108}, 2020.

\bibitem[{Dongliang Xie} et~al.(2016){Dongliang Xie}, {Xin Wang}, and {Linhui
  Ma}]{7590424}
{Dongliang Xie}, {Xin Wang}, and {Linhui Ma}.
\newblock Lexicographical order max-min fair source quota allocation in mobile
  delay-tolerant networks.
\newblock In \emph{2016 IEEE/ACM 24th International Symposium on Quality of
  Service (IWQoS)}, pages 1--6, 2016.
\newblock \doi{10.1109/IWQoS.2016.7590424}.

\bibitem[Dwork et~al.(2018)Dwork, Immorlica, Kalai, and Leiserson]{decoupled1}
Cynthia Dwork, Nicole Immorlica, Adam~Tauman Kalai, and Max Leiserson.
\newblock Decoupled classifiers for group-fair and efficient machine learning.
\newblock In \emph{Conference on Fairness, Accountability and Transparency},
  pages 119--133, 2018.

\bibitem[Freund and Schapire(1996)]{fs1996}
Yoav Freund and Robert~E. Schapire.
\newblock Game theory, on-line prediction and boosting.
\newblock In \emph{Proceedings of the Ninth Annual Conference on Computational
  Learning Theory}, COLT '96, page 325–332, New York, NY, USA, 1996.
  Association for Computing Machinery.
\newblock ISBN 0897918118.
\newblock \doi{10.1145/238061.238163}.
\newblock URL \url{https://doi.org/10.1145/238061.238163}.

\bibitem[Gupta et~al.(2021)Gupta, Jung, Noarov, Pai, and Roth]{gupta2021online}
Varun Gupta, Christopher Jung, Georgy Noarov, Mallesh~M Pai, and Aaron Roth.
\newblock Online multivalid learning: Means, moments, and prediction intervals.
\newblock \emph{arXiv preprint arXiv:2101.01739}, 2021.

\bibitem[Hahne(1991)]{scheduling}
Ellen~L. Hahne.
\newblock Round-robin scheduling for max-min fairness in data networks.
\newblock \emph{IEEE Journal on Selected Areas in communications}, 9\penalty0
  (7):\penalty0 1024--1039, 1991.

\bibitem[H{\'e}bert-Johnson et~al.(2018)H{\'e}bert-Johnson, Kim, Reingold, and
  Rothblum]{hebert2018multicalibration}
Ursula H{\'e}bert-Johnson, Michael Kim, Omer Reingold, and Guy Rothblum.
\newblock Multicalibration: Calibration for the (computationally-identifiable)
  masses.
\newblock In \emph{International Conference on Machine Learning}, pages
  1939--1948. PMLR, 2018.

\bibitem[Jung et~al.(2020{\natexlab{a}})Jung, Neel, Roth, Stapleton, and
  Wu]{jung2019elicitation}
C.~Jung, S.~Neel, A.~Roth, L.~Stapleton, and S.~Wu.
\newblock An algorithmic framework for fairness elicitation.
\newblock \emph{Preprint}, 2020{\natexlab{a}}.

\bibitem[Jung et~al.(2020{\natexlab{b}})Jung, Lee, Pai, Roth, and
  Vohra]{jung2020moment}
Christopher Jung, Changhwa Lee, Mallesh~M Pai, Aaron Roth, and Rakesh Vohra.
\newblock Moment multicalibration for uncertainty estimation.
\newblock \emph{arXiv preprint arXiv:2008.08037}, 2020{\natexlab{b}}.

\bibitem[Kalai and Vempala(2005)]{KALAI2005291}
Adam Kalai and Santosh Vempala.
\newblock Efficient algorithms for online decision problems.
\newblock \emph{Journal of Computer and System Sciences}, 71\penalty0
  (3):\penalty0 291--307, 2005.
\newblock ISSN 0022-0000.
\newblock \doi{https://doi.org/10.1016/j.jcss.2004.10.016}.
\newblock URL
  \url{https://www.sciencedirect.com/science/article/pii/S0022000004001394}.
\newblock Learning Theory 2003.

\bibitem[Kearns et~al.(2018)Kearns, Neel, Roth, and Wu]{kearns2018preventing}
Michael Kearns, Seth Neel, Aaron Roth, and Zhiwei~Steven Wu.
\newblock Preventing fairness gerrymandering: Auditing and learning for
  subgroup fairness.
\newblock In \emph{International Conference on Machine Learning}, pages
  2564--2572. PMLR, 2018.

\bibitem[Kearns et~al.(2019)Kearns, Neel, Roth, and Wu]{kearns2019empirical}
Michael Kearns, Seth Neel, Aaron Roth, and Zhiwei~Steven Wu.
\newblock An empirical study of rich subgroup fairness for machine learning.
\newblock In \emph{Proceedings of the Conference on Fairness, Accountability,
  and Transparency}, pages 100--109, 2019.

\bibitem[Kearns and Vazirani(1994)]{kearnsVazirani}
Michael~J. Kearns and Umesh~V. Vazirani.
\newblock \emph{An Introduction to Computational Learning Theory}.
\newblock MIT Press, Cambridge, MA, USA, 1994.
\newblock ISBN 0262111934.

\bibitem[Kim et~al.(2019)Kim, Ghorbani, and Zou]{kim2019multiaccuracy}
Michael~P Kim, Amirata Ghorbani, and James Zou.
\newblock Multiaccuracy: Black-box post-processing for fairness in
  classification.
\newblock In \emph{Proceedings of the 2019 AAAI/ACM Conference on AI, Ethics,
  and Society}, pages 247--254, 2019.

\bibitem[Lahoti et~al.(2020)Lahoti, Beutel, Chen, Lee, Prost, Thain, Wang, and
  Chi]{lahoti2020fairness}
Preethi Lahoti, Alex Beutel, Jilin Chen, Kang Lee, Flavien Prost, Nithum Thain,
  Xuezhi Wang, and Ed~H. Chi.
\newblock Fairness without demographics through adversarially reweighted
  learning, 2020.

\bibitem[Martinez et~al.(2020)Martinez, Bertran, and Sapiro]{minimax1}
Natalie Martinez, Martin Bertran, and Guillermo Sapiro.
\newblock Minimax {Pareto} fairness: A multi objective perspective.
\newblock In \emph{Proceedings of the 37th International Conference on Machine
  Learning}. Vienna, Austria, PMLR 119, 2020.

\bibitem[Mitchell et~al.(2020)Mitchell, Potash, Barocas, D’Amour, and
  Lum]{mitchell2021algorithmic}
Shira Mitchell, Eric Potash, Solon Barocas, Alexander D’Amour, and Kristian
  Lum.
\newblock Algorithmic fairness: Choices, assumptions, and definitions.
\newblock \emph{Annual Review of Statistics and Its Application}, 8, 2020.

\bibitem[Nace and Pi{\'o}ro(2008)]{Nace2006ATO}
D.~Nace and M.~Pi{\'o}ro.
\newblock Max-min fairness and its applications to routing and load-balancing
  in communication networks: a tutorial.
\newblock \emph{IEEE Communications Surveys and Tutorials}, 10, 2008.

\bibitem[Ogryczak and Warsaw(2007)]{ogryczak}
W.~Ogryczak and Warsaw.
\newblock Lexicographic max-min optimization for efficient and fair bandwidth
  allocation.
\newblock \emph{International network optimization conference (INOC)}, 01 2007.

\bibitem[Ogryczak et~al.(2014)Ogryczak, Luss, Nace, and
  Pióro]{RePEc:hin:jnljam:340913}
Wlodzimierz Ogryczak, Hanan Luss, Dritan Nace, and Michał Pióro.
\newblock {Fair Optimization and Networks: Models, Algorithms, and
  Applications}.
\newblock \emph{Journal of Applied Mathematics}, September 2014.
\newblock \doi{10.1155/2014/340913}.
\newblock URL \url{https://ideas.repec.org/a/hin/jnljam/340913.html}.

\bibitem[{Radunovic} and {Le Boudec}(2007)]{4346554}
B.~{Radunovic} and J.~{Le Boudec}.
\newblock A unified framework for max-min and min-max fairness with
  applications.
\newblock \emph{IEEE/ACM Transactions on Networking}, 15\penalty0 (5):\penalty0
  1073--1083, 2007.
\newblock \doi{10.1109/TNET.2007.896231}.

\bibitem[Samadi et~al.(2018)Samadi, Tantipongpipat, Morgenstern, Singh, and
  Vempala]{samadi2018price}
Samira Samadi, Uthaipon Tantipongpipat, Jamie~H Morgenstern, Mohit Singh, and
  Santosh Vempala.
\newblock The price of fair {PCA}: One extra dimension.
\newblock In \emph{Advances in Neural Information Processing Systems}, pages
  10976--10987, 2018.

\bibitem[Sharifi-Malvajerdi et~al.(2019)Sharifi-Malvajerdi, Kearns, and
  Roth]{kearns2019average}
Saeed Sharifi-Malvajerdi, Michael Kearns, and Aaron Roth.
\newblock Average individual fairness: Algorithms, generalization and
  experiments.
\newblock In H.~Wallach, H.~Larochelle, A.~Beygelzimer, F.~d\textquotesingle
  Alch\'{e}-Buc, E.~Fox, and R.~Garnett, editors, \emph{Advances in Neural
  Information Processing Systems}, volume~32. Curran Associates, Inc., 2019.
\newblock URL
  \url{https://proceedings.neurips.cc/paper/2019/file/0e1feae55e360ff05fef58199b3fa521-Paper.pdf}.

\bibitem[Ustun et~al.(2019)Ustun, Liu, and Parkes]{decoupled2}
Berk Ustun, Yang Liu, and David Parkes.
\newblock Fairness without harm: Decoupled classifiers with preference
  guarantees.
\newblock In \emph{International Conference on Machine Learning}, pages
  6373--6382, 2019.

\bibitem[Vapnik and Chervonenkis(1971)]{vapnik}
V.~N. Vapnik and A.Y. Chervonenkis.
\newblock Chervonenkis: On the uniform convergence of relative frequencies of
  events to their probabilities.
\newblock \emph{Theory of Probability and its Applications}, 16\penalty0 (2),
  1971.
\newblock URL \url{https://doi.org/10.1137/1116025}.

\bibitem[{Wang} et~al.(2006){Wang}, {Kar}, and {Pang}]{4177725}
X.~{Wang}, K.~{Kar}, and J.~{Pang}.
\newblock Lexicographic max-min fair rate allocation in random access wireless
  networks.
\newblock In \emph{Proceedings of the 45th IEEE Conference on Decision and
  Control}, pages 1294--1300, 2006.
\newblock \doi{10.1109/CDC.2006.377233}.

\bibitem[Zhou and Maxemchuk(2010)]{adHoc}
Congzhou Zhou and N.~F. Maxemchuk.
\newblock Scalable max-min fairness in wireless ad hoc networks.
\newblock In Jun Zheng, Shiwen Mao, Scott~F. Midkiff, and Hua Zhu, editors,
  \emph{Ad Hoc Networks}, pages 79--93, Berlin, Heidelberg, 2010. Springer
  Berlin Heidelberg.
\newblock ISBN 978-3-642-11723-7.

\bibitem[Zinkevich(2003)]{GD}
Martin Zinkevich.
\newblock Online convex programming and generalized infinitesimal gradient
  ascent.
\newblock In \emph{Proceedings of the Twentieth International Conference on
  International Conference on Machine Learning}, ICML'03, page 928–935. AAAI
  Press, 2003.
\newblock ISBN 1577351894.

\end{thebibliography}

\appendix
\section{Proofs From Section \ref{sec:model}}
\label{app:clm}

\convexity*

\begin{proof}[Proof of Claim~\ref{clm:convexity}]
Fix any $\epsvec$ such that $\epsilon_j$ is a concave mapping of $\Hs$. We use induction on $j$ to prove this claim. The base case $j=0$ follows from the assumption that $\Hs$ is convex. Now suppose $\F_{(j-1)}^{\epsvec}$ is convex for some $j \ge 1$. Let $h_1, h_2 \in \F_{(j)}^{\epsvec}$ and $\alpha \in (0,1)$. We want to show that $f:=\alpha h_1 + (1-\alpha)h_2 \in \F_{(j)}^{\epsvec}$. First note that $f \in \F_{(j-1)}^{\epsvec}$ because $h_1, h_2 \in \F_{(j)}^{\epsvec} \subseteq \F_{(j-1)}^{\epsvec}$. We also have that
    \begin{align*}
        \max_{ \left\{ i_1, \ldots, i_j \right\} \subseteq [K] } \sum_{r=1}^j L_{i_r} (f) &\le \max_{ \left\{ i_1, \ldots, i_j \right\} \subseteq [K]} \sum_{r=1}^j \left\{ \alpha L_{i_r} (h_1) + (1-\alpha) L_{i_r} (h_2) \right\} \\
        &\le \alpha \cdot \max_{ \left\{ i_1, \ldots, i_j \right\} \subseteq [K]} \sum_{r=1}^j L_{i_r} (h_1) + (1-\alpha) \cdot \max_{ \left\{ i_1, \ldots, i_j \right\} \subseteq [K] } \sum_{r=1}^j L_{i_r} (h_2) \\
        &\le \min_{g \in \F_{(j-1)}^{\epsvec} } \max_{\left\{ i_1, \ldots, i_j \right\} \subseteq [K]} \sum_{r=1}^j L_{i_r} (g) + \alpha \epsilon_j (h_1) + (1-\alpha) \epsilon_j (h_2) \\
        &\le \min_{g \in \F_{(j-1)}^{\epsvec} } \max_{\left\{ i_1, \ldots, i_j \right\} \subseteq [K]} \sum_{r=1}^j L_{i_r} (g) + \epsilon_j (f) 
    \end{align*}
    
\noindent as desired, where the first inequality follows by convexity of $L_z$, the third inequality follows because $h_1, h_2 \in \F_{(j)}^{\epsvec}$, and the last one follows by conacvity of $\epsilon_j$. This establishes our claim.
\end{proof}


\relationship*

\begin{proof}[Proof of Claim~\ref{clm:relationship}]
    Let $\epsvec = \vec{0}$. Before we prove the claim, note that we can use the fact that $\bar{h}$ is an ordering of the losses to omit the maximization terms in the definition of (convex approximate) $\F_{(j)}^{\epsvec}$ and rewrite it for $j \ge 1$ as follows: 
    \[
    \F_{(j)}^{\epsvec} = \left\{ h \in \F_{(j-1)}^{\epsvec}: \sum_{r=1}^j L_{\bar{h}(r)} (h) \le \min_{g \in \F_{(j-1)}^{\epsvec} } \sum_{r=1}^j L_{\bar{g}(r)} (g) + \epsilon_j \right\}
    \] 
    This formulation of $\F_{(j)}^{\epsvec}$ helps us establish the claim that $\Hs_{(j)}^{\vec{0}} = \F_{(j)}^{\vec{0}}$. Now we use induction on $j$ to prove the claim. Note that $\Hs_{(0)}^{\vec{0}} = \F_{(0)}^{\vec{0}} = \Hs$ and $\Hs_{(1)}^{\vec{0}} = \F_{(1)}^{\vec{0}}$ trivially hold. Now suppose $\Hs_{(j-1)}^{\vec{0}} = \F_{(j-1)}^{\vec{0}}$ for some $j \ge 2$. We will show that $\F_{(j)}^{\vec{0}} \subseteq \Hs_{(j)}^{\vec{0}}$ and $\Hs_{(j)}^{\vec{0}} \subseteq \F_{(j)}^{\vec{0}}$. Let $f \in \F_{(j)}^{\vec{0}}$. First, we have that $f \in \Hs_{(j-1)}^{\vec{0}} = \F_{(j-1)}^{\vec{0}}$. Second,
    \begin{align*}
        L_{\bar{f}(j)} (f) &= \sum_{r=1}^j L_{\bar{f}(r)} (f) - \sum_{r=1}^{j-1} L_{\bar{f}(r)} (f) \\
        &= \min_{g \in \F_{(j-1)}^{\vec{0}} } \sum_{r=1}^j L_{\bar{g}(r)} (g) - \sum_{r=1}^{j-1} L_{\bar{f}(r)} (f) \\
        &= \min_{g \in \F_{(j-1)}^{\vec{0}} } \left\{ L_{\bar{g}(j)} (g) + \sum_{r=1}^{j-1} L_{\bar{g}(r)} (g) \right\} - \sum_{r=1}^{j-1} L_{\bar{f}(r)} (f) \\
        &= \min_{g \in \F_{(j-1)}^{\vec{0}} } L_{\bar{g}(j)} (g) \\
        &= \min_{g \in \Hs_{(j-1)}^{\vec{0}} } L_{\bar{g}(j)} (g)
    \end{align*}
    implying that $f \in \Hs_{(j)}^{\vec{0}}$. Note that the second equation follows because $f \in \F_{(j)}^{\vec{0}}$. The fourth one follows because for all $g \in \F_{(j-1)}^{\vec{0}}$, $\sum_{r=1}^{j-1} L_{\bar{g}(r)} (g) = \sum_{r=1}^{j-1} L_{\bar{f}(r)} (f)$. The last one follows because $\Hs_{(j-1)}^{\vec{0}} = \F_{(j-1)}^{\vec{0}}$ by induction assumption. So far we have showed that $\F_{(j)}^{\vec{0}} \subseteq \Hs_{(j)}^{\vec{0}}$. It remains to show that $\Hs_{(j)}^{\vec{0}} \subseteq \F_{(j)}^{\vec{0}}$ too. Suppose $f \in \Hs_{(j)}^{\vec{0}}$. First, note that $f \in \F_{(j-1)}^{\vec{0}} = \Hs_{(j-1)}^{\vec{0}}$. We also have that
    \begin{align*}
        \sum_{r=1}^j L_{\bar{f}(r)} (f) &= \sum_{r=1}^j \min_{g \in \Hs_{(r-1)}^0 } L_{\bar{g}(r)} (g) \\
        &\le \sum_{r=1}^j \min_{g \in \Hs_{(j-1)}^{\vec{0}} } L_{\bar{g}(r)} (g) \\
        &\le \min_{g \in \Hs_{(j-1)}^{\vec{0}} } \sum_{r=1}^j L_{\bar{g}(r)} (g) \\
        &= \min_{g \in \F_{(j-1)}^{\vec{0}} } \sum_{r=1}^j L_{\bar{g}(r)} (g)
    \end{align*}
    implying that $f \in \F_{(j)}^{\vec{0}}$. Here, the first equation and the first inequality follow because $f \in \Hs_{(j)}^{\vec{0}}$ and that $\Hs_{(j)}^{\vec{0}} \subseteq \Hs_{(j-1)}^{\vec{0}} \subseteq \ldots \subseteq \Hs_{(0)}^{\vec{0}}$. The last equation follows by induction assumption that $\Hs_{(j-1)}^{\vec{0}} = \F_{(j-1)}^{\vec{0}}$. So we have showed that $\Hs_{(j)}^{\vec{0}} \subseteq \F_{(j)}^{\vec{0}}$ and this completes the proof.
\end{proof}

\section{Proofs from Section \ref{sec:finding}}
\label{app:finding}
\thmdunno*

\begin{proof}[Proof of Theorem~\ref{thm:dunno}]
We first show the following Lemma:
\begin{lemma}\label{lem:aux}
Let $x_{+} \triangleq \max (x,0)$ for $x \in \R$. We have that the strategies $((\hat{h}, \hat{\eta}_j), \hat{\lambda})$ satisfy the following:
\[
\sum_{r=1}^{j} \sum_{\{i_1,\ldots,i_r\} \subseteq [K]} \hat{\lambda}_{\{i_1,i_2, \ldots,i_r\}} \cdot \left( L_{i_1} (\hat{h}) + \ldots + L_{i_r} (\hat{h}) - \hat{\eta}_r \right) \ge B \max_{\overset{1 \le r \le j}{\{i_1,\ldots,i_r\} \subseteq [K]}} \left( L_{i_1} (\hat{h}) + \ldots + L_{i_r} (\hat{h}) - \hat{\eta}_r \right)_{+} - \nu
\]
\end{lemma}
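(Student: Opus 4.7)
The plan is to derive the inequality directly from the second condition in the $\nu$-approximate equilibrium, namely
\[
\mathcal{L}_j\bigl((\hat{h}, \hat{\eta}_j), \hat{\lambda}\bigr) \;\ge\; \max_{\lambda \in \Lambda_j} \mathcal{L}_j\bigl((\hat{h}, \hat{\eta}_j), \lambda\bigr) - \nu,
\]
by explicitly evaluating the right-hand maximum and cancelling the shared $\hat{\eta}_j$ term.

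First I would expand the Lagrangian at the equilibrium point using its definition in Equation~\eqref{eq:lagrangian}:
\[
\mathcal{L}_j\bigl((\hat{h}, \hat{\eta}_j), \hat{\lambda}\bigr) = \hat{\eta}_j + \sum_{r=1}^{j} \sum_{\{i_1,\ldots,i_r\} \subseteq [K]} \hat{\lambda}_{\{i_1,\ldots,i_r\}} \bigl( L_{i_1}(\hat{h}) + \cdots + L_{i_r}(\hat{h}) - \hat{\eta}_r \bigr).
\]
Note that the $\hat{\eta}_j$ outside the sum does not depend on $\lambda$, so it factors out of the maximization over $\lambda$ cleanly.

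Next I would compute $\max_{\lambda \in \Lambda_j} \mathcal{L}_j((\hat{h}, \hat{\eta}_j), \lambda)$. Since the Lagrangian is linear in $\lambda$ and $\Lambda_j = \{\lambda \ge 0 : \|\lambda\|_1 \le B\}$ is a scaled simplex (augmented with the origin), the maximizer is exactly the best-response characterized in Equation~\eqref{eq:lambdastar}: place all mass $B$ on the single coordinate corresponding to the most violated constraint if one exists, else $\lambda=0$. In either case the value is
\[
\max_{\lambda \in \Lambda_j} \mathcal{L}_j\bigl((\hat{h}, \hat{\eta}_j), \lambda\bigr) = \hat{\eta}_j + B \cdot \max_{\substack{1 \le r \le j \\ \{i_1,\ldots,i_r\} \subseteq [K]}} \bigl( L_{i_1}(\hat{h}) + \cdots + L_{i_r}(\hat{h}) - \hat{\eta}_r \bigr)_{+},
\]
where the positive part $(\cdot)_+$ captures the $\lambda = 0$ fallback when no constraint is violated.

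Substituting both expressions into the $\nu$-approximate equilibrium inequality and cancelling the common $\hat{\eta}_j$ on both sides yields precisely the claimed bound. The argument is essentially bookkeeping once one notices that the maximizer over $\Lambda_j$ concentrates on a single coordinate; the only thing to be careful about is handling the $(\cdot)_+$ correctly, i.e. making sure the $\lambda = 0$ option (which is always available in $\Lambda_j$) is what produces the positive-part in the max, so the right-hand side is always nonnegative before subtracting $\nu$. I do not expect any real obstacle here — this lemma is a direct consequence of the explicit form of the Auditor's best response combined with the equilibrium condition.
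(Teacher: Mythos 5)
Your proposal is correct and follows essentially the same route as the paper: the paper likewise expands $\mathcal{L}_j((\hat{h},\hat{\eta}_j),\hat{\lambda})$, plugs the Auditor's best response $\lambda_{\text{best}}(\hat{h},\hat{\eta}_j)$ (which is exactly the maximizer over $\Lambda_j$, yielding the value $\hat{\eta}_j + B\max(\cdot)_+$) into the equilibrium condition, and cancels the common $\hat{\eta}_j$. Your observation that the linear objective over the scaled simplex concentrates mass on the most violated constraint, with $\lambda=0$ producing the positive part, is precisely the paper's argument.
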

\begin{proof}[Proof of Lemma~\ref{lem:aux}]
Let $\lambda \in \Lambda_j$ be the best response of the Auditor to $(\hat{h}, \hat{\eta}_j)$: $\lambda = \lambda_\text{best} (\hat{h}, \hat{\eta}_j)$. Then we have by the $\nu$-approximate equilibrium guarantee that
\[
\mathcal{L}_j \left( (\hat{h}, \hat{\eta}_j), \hat{\lambda} \right) \ge \mathcal{L}_j \left( (\hat{h}, \hat{\eta}_j), \lambda \right) - \nu
\]
The proof is complete by expanding the Lagrangian terms in the above inequality. We have
\[
\mathcal{L}_j \left( (\hat{h}, \hat{\eta}_j), \hat{\lambda} \right) = \hat{\eta}_j + \sum_{r=1}^j \sum_{\{i_1, \ldots, i_r\} \subseteq [K]} \hat{\lambda}_{\{i_1, i_2, \ldots, i_r\}} \cdot \left( L_{i_1} (\hat{h}) + \ldots + L_{i_r} (\hat{h}) - \hat{\eta}_r \right)
\]
and
\begin{align*}
\mathcal{L}_j \left( (\hat{h}, \hat{\eta}_j), \lambda \right) &= \hat{\eta}_j + \sum_{r=1}^j \sum_{\{i_1, \ldots, i_r\} \subseteq [K]} \lambda_{\{i_1, i_2, \ldots, i_r\}} \cdot \left( L_{i_1} (\hat{h}) + \ldots + L_{i_r} (\hat{h}) - \hat{\eta}_r \right) \\
&= \hat{\eta}_j + B \max_{\overset{1 \le r \le j}{\{i_1,\ldots,i_r\} \subseteq [K]}} \left( L_{i_1} (\hat{h}) + \ldots + L_{i_r} (\hat{h}) - \hat{\eta}_r \right)_{+}
\end{align*}
where the second equation follows by the definition of $\lambda$.
\end{proof}
With this Lemma in hand we can prove the theorem. Let $(h,\eta_j)$ be any feasible solution to the optimization problem~\eqref{eq:opt-problem}. We have that
\[
\mathcal{L}_j \left( (h, \eta_j), \hat{\lambda} \right) \le \eta_j
\]
because all of the components of $\hat{\lambda} \in \Lambda_j$ are nonnegative and that the constraints of \eqref{eq:opt-problem} are all satisfied by $(h,\eta_j)$. We also have by the $\nu$-approximate equilibrium guarantee that
\begin{equation}\label{eq:upper}
\mathcal{L}_j \left( (\hat{h}, \hat{\eta}_j), \hat{\lambda} \right) \le \mathcal{L}_j \left( (h, \eta_j), \hat{\lambda} \right) + \nu \le \eta_j + \nu
\end{equation}
But Lemma~\ref{lem:aux} implies the following lower bound on $\mathcal{L}_j \left( (\hat{h}, \hat{\eta}_j), \hat{\lambda} \right)$:
\begin{align}\label{eq:lower}
\begin{split}
\mathcal{L}_j \left( (\hat{h}, \hat{\eta}_j), \hat{\lambda} \right) &= \hat{\eta}_j + \sum_{r=1}^{j} \sum_{\{i_1,\ldots,i_r\} \subseteq [K]} \hat{\lambda}_{\{i_1,i_2, \ldots,i_r\}} \cdot \left( L_{i_1} (\hat{h}) + \ldots + L_{i_r} (\hat{h}) - \hat{\eta}_r \right) \\
&\ge \hat{\eta}_j + B \max_{\overset{1 \le r \le j}{\{i_1,\ldots,i_r\} \subseteq [K]}} \left( L_{i_1} (\hat{h}) + \ldots + L_{i_r} (\hat{h}) - \hat{\eta}_r \right)_{+} - \nu \\
&\ge \hat{\eta}_j - \nu
\end{split}
\end{align}
Combining Equations~\eqref{eq:upper} and \eqref{eq:lower} implies
$
\hat{\eta}_j \le \eta_j + 2\nu
$,
and since this condition holds for every feasible $(h, \eta_j)$, we get that
\[
\hat{\eta}_j \le OPT_j \left( \eta_1, \ldots, \eta_{j-1} \right) + 2\nu
\]
which proves the first part of the theorem. Once again using Equations~\eqref{eq:upper} and \eqref{eq:lower},
\begin{align*}
\max_{\overset{1 \le r \le j}{\{i_1,\ldots,i_r\} \subseteq [K]}} \left( L_{i_1} (\hat{h}) + \ldots + L_{i_r} (\hat{h}) - \hat{\eta}_r \right) &\le \max_{\overset{1 \le r \le j}{\{i_1,\ldots,i_r\} \subseteq [K]}} \left( L_{i_1} (\hat{h}) + \ldots + L_{i_r} (\hat{h}) - \hat{\eta}_r \right)_{+} \\
&\le \frac{\eta_j - \hat{\eta}_j + 2\nu}{B} \\
&\le \frac{jL_M + 2\nu}{B}
\end{align*}
where we use the fact that $\eta_j, \hat{\eta}_j \in [0, jL_M]$. In other words, for all $r \le j$ we have the following guarantee:
\[
\max_{\left\{ i_1, \ldots, i_r \right\} \subseteq [K]} \sum_{s=1}^r L_{i_r} (\hat{h}) \le \hat{\eta}_{r} + \frac{jL_M + 2 \nu}{B}.
\]
\end{proof}

\section{Proofs from Section \ref{sec:reg}}
\label{app:reg}
\regthm*

\begin{proof}
We will show that for every round $j$, the model $\hat{\theta}_j$ computed by our algorithm is $(j,\alpha)$-convex lexicographic fair, and as a consequence, the very last model ($\hat{\theta}_\ell$) is $(\ell,\alpha)$-convex lexicographic fair. Fix any round $j \le \ell$. Let $(\theta^t, \eta_j^t, \lambda^t)_{t=1}^T$ be the sequence of plays in the no-regret dynamics of Algorithm~\ref{alg:nr-regression} in this round. First, note that by the decomposition given in Equation~\eqref{eq:decomposition}, we have
\begin{align*}
&\sum_{t=1}^T \mathcal{L}_j \left( (\theta^t, \eta_j^t), \lambda^t \right) - \min_{\theta \in \Theta, \eta_j \in [0, j\cdot L_M]} \sum_{t=1}^T \mathcal{L}_j \left( (\theta, \eta_j), \lambda^t \right) \\ 
&= \left\{ \sum_{t=1}^T \mathcal{L}_j^1 \left( \theta^t, \lambda^t \right) - \min_{\theta \in \Theta} \sum_{t=1}^T \mathcal{L}_j^1 \left( \theta, \lambda^t \right) \right\} + \left\{ \sum_{t=1}^T \mathcal{L}_j^2 \left( \eta_j^t, \lambda^t \right) - \min_{\eta_j \in [0, j\cdot L_M]} \sum_{t=1}^T \mathcal{L}_j^2 \left( \eta_j, \lambda^t \right) \right\}
\end{align*}
In other words, we can decompose the regret of the Learner into two terms: one is the regret of gradient descent plays corresponding to $\theta$, and the other one is the corresponding regret of gradient descent plays for $\eta_j$. Note that by Equations~\eqref{eq:grad-model} and \eqref{eq:grad-eta} we have the following bounds on the norm of gradients for the Learner. We also use the fact that when the Auditor is best responding, $w_r (\lambda^t)$ can be simplified as in Fact~\ref{fact:efficient}.
\[
\left\Vert \nabla_\theta \mathcal{L}_j \left( (\theta, \eta_j), \lambda^t \right) \right\Vert_2 \le \sum_{r=1}^K \left\vert w_r (\lambda^t) \right\vert \cdot \left\Vert \nabla_\theta L_r (\theta) \right\Vert_2  \le j B G
\]
\[
\left\Vert \nabla_{\eta_j} \mathcal{L}_j \left( (\theta, \eta_j), \lambda^t \right) \right\Vert_2 = \left\vert 1 - \sum_{\{i_1, \ldots, i_j\} \subseteq [K]} \lambda^t_{\{i_1, i_2, \ldots, i_j\}} \right\vert \le 1 + B
\]
Now letting $\eta = \frac{D}{jBG\sqrt{T}}$ and $\eta' = \frac{jL_M}{(1+B)\sqrt{T}}$ in Algorithm~\ref{alg:nr-regression} and using the regret bound of Online Projected Gradient Desccent (Theorem~\ref{thm:gdregret}), we have
\[
\sum_{t=1}^T \mathcal{L}_j^1 \left( \theta^t, \lambda^t \right) - \min_{\theta \in \Theta} \sum_{t=1}^T \mathcal{L}_j^1 \left( \theta, \lambda^t \right) \le jBGD\sqrt{T}
\]
\[
\sum_{t=1}^T \mathcal{L}_j^2 \left( \eta_j^t, \lambda^t \right) - \min_{\eta_j \in [0, j\cdot L_M]} \sum_{t=1}^T \mathcal{L}_j^2 \left( \eta_j, \lambda^t \right) \le j (B+1) L_M \sqrt{T}
\]
and therefore the regret of the Learner can be bounded by
\[
\sum_{t=1}^T \mathcal{L}_j \left( (\theta^t, \eta_j^t), \lambda^t \right) - \min_{\theta \in \Theta, \eta_j \in [0, j\cdot L_M]} \sum_{t=1}^T \mathcal{L}_j \left( (\theta, \eta_j), \lambda^t \right) \le j (GD + L_M) (B+1) \sqrt{T} :=\nu_j T
\]

Let $\nu_j \triangleq j (GD + L_M) (B+1) / \sqrt{T}$. Now using the guarantees of the no-regret dynamics (Theorem~\ref{thm:noregret}), the average play of the players $(\hat{\theta}, \hat{\eta}_j, \hat{\lambda})$ forms a $\nu_j$-approximate equilibrium of the game in the sense that
\[
\mathcal{L}_j \left( (\hat{\theta}, \hat{\eta}_j), \hat{\lambda} \right) \le \min_{\theta \in \Theta, \eta_j \in [0, j\cdot L_M]} \mathcal{L}_j \left( (\theta, \eta_j), \hat{\lambda} \right) + \nu_j, \quad \mathcal{L}_j \left( (\hat{\theta}, \hat{\eta}_j), \hat{\lambda} \right) \ge \max_{\lambda \in \Lambda_j} \mathcal{L}_j \left( (\hat{\theta}, \hat{\eta}_j), \lambda \right) - \nu_j
\]
Finally, using Theorem~\ref{thm:dunno} we can turn these into the following guarantees. First,
\begin{equation}\label{eq:firstbound}
\hat{\eta}_j \le OPT_j \left( \hat{\eta}_1, \ldots, \hat{\eta}_{j-1} \right) + 2\nu_j
\end{equation}
and second, for all $r \le j$,
\begin{equation}\label{eq:secondbound}
\max_{\left\{ i_1, \ldots, i_r \right\} \subseteq [K]} \sum_{s=1}^r L_{i_r} (\hat{\theta}_j) \le \hat{\eta}_{r} + \frac{jL_M + 2 \nu_j}{B}
\end{equation}

Define $\epsilon_r \triangleq \hat{\eta}_r - OPT_r \left( \hat{\eta}_1, \ldots, \hat{\eta}_{r-1} \right)$ for all $r \le j$ ($\epsilon$'s here are basically \emph{constant} mappings in $\R^\Hs$). We immediately have from Equation~\eqref{eq:firstbound} that: $\epsilon_r \le 2 \nu_r$, for all $r \le j$. Now let $\epsvec = (\epsilon_1, \ldots, \epsilon_j)$, and let $\F_{(0)}^{\epsvec} = \Theta$ be the initial model class. Note that according to Definition~\ref{def:convexlexifair} and given the defined $\epsvec$, we have for every $r \le j$,
\[
\min_{\theta \in \F_{(r-1)}^{\epsvec} } \max_{\left\{ i_1, \ldots, i_r \right\} \subseteq [K]} \sum_{s=1}^r L_{i_r} (\theta) \equiv OPT_r \left( \hat{\eta}_1, \ldots, \hat{\eta}_{r-1} \right)
\]
And therefore, by Equation~\eqref{eq:secondbound}, for all $r \le j$:
\begin{align*}
\max_{\left\{ i_1, \ldots, i_r \right\} \subseteq [K]} \sum_{s=1}^r L_{i_r} (\hat{\theta}_j) &\le \hat{\eta}_{r} + \frac{j L_M + 2 \nu_r}{B} \\
&= OPT_r \left( \hat{\eta}_1, \ldots, \hat{\eta}_{r-1} \right) + \epsilon_r + \frac{j L_M + 2 \nu_r}{B} \\
&= \min_{g \in \F_{(r-1)}^{\epsvec} } \max_{\left\{ i_1, \ldots, i_r \right\} \subseteq [k]} \sum_{s=1}^r L_{i_r} (g) + \epsilon_r + \frac{j L_M + 2 \nu_r}{B}
\end{align*}
which completes the proof by the choice of $\nu_r = \frac{ \alpha}{2}$ for all $r \le j$ (to guarantee that $\Vert \epsvec \Vert_\infty \le \alpha$), and $B= \frac{\alpha + j L_M}{\alpha}$. Note that this setting of parameters, together with $\nu_j = j (GD + L_M) (B+1) / \sqrt{T}$, implies that
\[
T = \frac{4 j^2 (GD + L_M)^2 (2 \alpha + j L_M)^2}{\alpha^4}
\]
\end{proof}

\section{Proofs from Section \ref{sec:class}}
\label{app:class}
\mainthm*

\begin{proof}
We will show that for any round $j$, the model $\hat{\theta}_j$ computed by Algorithm~\ref{alg:fair-classification} is $(j,\alpha)$-convex lexifair. Fix any round $j$. We prove the claim in the following steps.
\begin{enumerate}
    \item \emph{The Learner's regret}: First, we invoke Theorem~\ref{thm:ftplregret} (the regret guarantee of the FTPL algorithm) and the fact that the payoff function of the game (the Lagrangian $\mathcal{L}_j$) is separable for the Learner, to write the expected regret of the distributions maintained by the Learner's FTPL algorithm, i.e. ($p^1, \ldots, p^T$) and $(D^1, \ldots, D^T)$, as follows:
    \begin{align*}
    R_L &\triangleq \frac{1}{T}\sum_{t=1}^T \E_{h \sim p^t, \eta_j \sim D^t} \left[ \mathcal{L}_j \left( (h,\eta_j), \lambda^t \right) \right] - \frac{1}{T} \min_{h \in \Hs (S), \eta_j \in \{0,j\}} \sum_{t=1}^T \mathcal{L}_j \left( (h,\eta_j), \lambda^t \right) \\
    &\le \frac{2 n^{3/2} B}{n_{min} \sqrt{T}} + \frac{2 (1+B)}{\sqrt{T}}
    \end{align*}
    Note that we have $\vert c_i (\lambda) \vert \le B/n_{min}$ where $n_{min} \triangleq \min_{1 \le k \le K} n_k$ is the smallest group size, and $\vert c (\lambda) \vert \le 1 + B$. Also, note that the above regret guarantee holds for the following choices of the learning rates $\eta$ and $\eta'$ for the FTPL algorithm.
    \[
    \eta = \frac{n_{min}}{B} \sqrt{\frac{1}{nT}} \quad \text{and} \quad \eta' = \frac{1}{1+B} \sqrt{\frac{1}{T}}
    \]
    \item \emph{The Auditor's regret}: At each round $t$ of the game, the Auditor is best responding to $(\hat{p}^t, D^t)$ of the Learner where $\hat{p}^t$ is the empirical distribution of the $m$ sampled hypotheses from $p^t$. Therefore, the Auditor is \emph{approximately} best responding to $(p^t, D^t)$, and consequently, it is accumulating some regret over the course of the algorithm. We show the Auditor's regret is small using a uniform convergence bound which holds with high probability over the random draws of the $m$ sampled hypotheses from $p^t$. We first remind the reader of Chernoff-Hoeffding's concentration bound in Lemma~\ref{lem:chernoff}, and then move on to argue about the uniform convergence bound in Lemma~\ref{lem:concentration}. Finally, the Auditor's regret is computed in Lemma~\ref{lem:auditorregret} using the result of Lemma~\ref{lem:concentration}.
    
\begin{lemma}[Chernoff-Hoeffding's Concentration]\label{lem:chernoff}
Let $X_1, X_2, \ldots, X_n$ be $i.i.d.$ draws from a distribution with mean $\mu$ and support $[a,b] \subset \R$. We have that with probability at least $1-\delta$ over the random draws,
\[
\left\vert \frac{1}{n} \sum_{i=1}^n X_i - \mu \right\vert \le (b-a) \sqrt{\frac{\log \left( 2 / \delta \right)}{2n}}
\].
\end{lemma}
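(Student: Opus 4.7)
The statement is the classical two-sided Hoeffding concentration inequality, so the plan is simply to follow the standard Chernoff-bound derivation (or alternatively to cite it directly). First I would center the variables by letting $Y_i = X_i - \mu$, which are independent, mean-zero, and supported in an interval of length $b-a$. Applying Markov's inequality to the exponential of $\sum_i Y_i$ gives, for any $s > 0$,
\[
\Pr\!\left[\sum_{i=1}^n Y_i \ge nt\right] \le e^{-snt} \prod_{i=1}^n \mathbb{E}\bigl[e^{sY_i}\bigr],
\]
where independence was used to factor the moment generating function.

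The second step is to invoke Hoeffding's lemma: for a mean-zero random variable $Y$ supported in an interval of length $c$, one has $\mathbb{E}[e^{sY}] \le \exp(s^2 c^2/8)$. Substituting this bound into each factor yields $\Pr[\sum_i Y_i \ge nt] \le \exp\!\bigl(-snt + n s^2 (b-a)^2/8\bigr)$. I would then optimize the exponent over $s > 0$; setting the derivative to zero gives $s = 4t/(b-a)^2$, producing the upper-tail bound $\Pr[\bar X - \mu \ge t] \le \exp\!\bigl(-2 n t^2/(b-a)^2\bigr)$.

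To obtain the two-sided statement I would apply the identical argument to $-Y_i$ and take a union bound, yielding $\Pr[|\bar X - \mu| \ge t] \le 2\exp\!\bigl(-2 n t^2/(b-a)^2\bigr)$. Finally I would set $\delta$ equal to the right-hand side and solve for $t$, obtaining $t = (b-a)\sqrt{\log(2/\delta)/(2n)}$, which is exactly the bound claimed in the lemma.

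The only nontrivial technical ingredient is Hoeffding's lemma itself (proved by a convexity argument on the exponential combined with a Taylor expansion of the log moment generating function of a bounded variable); everything else is routine algebra. In practice one would simply cite Hoeffding's original paper, since the lemma as stated is standard textbook material, but the sketch above is the short self-contained derivation I would write out.
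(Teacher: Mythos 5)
Your proof is correct: the centering, Markov/Chernoff step, Hoeffding's lemma bound $\E[e^{sY}] \le \exp(s^2(b-a)^2/8)$, optimization at $s = 4t/(b-a)^2$, union bound, and inversion to $t = (b-a)\sqrt{\log(2/\delta)/(2n)}$ all check out exactly. The paper itself gives no proof of this lemma---it is stated as a standard reminder for use in bounding the Auditor's regret---so your self-contained derivation is the canonical argument and is entirely consistent with (indeed, supplies more detail than) the paper's treatment.
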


\begin{lemma}\label{lem:concentration}
Let $p$ be any distribution over $\Hs(S)$, and let $\hat{p}$ be the empirical distribution of $m$ $i.i.d.$ draws from $p$. We have that for any $\delta$, with probability at least $1-\delta$ over the random draws from $p$, for any distribution $D$ over the interval $[0,j]$,
\[
\max_{\lambda \in \Lambda_j} \left\vert \E_{h \sim \hat{p}, \eta_j \sim D} \left[ \mathcal{L}_j \left( (h,\eta_j), \lambda \right) \right] - \E_{h \sim p, \eta_j \sim D} \left[ \mathcal{L}_j \left( (h,\eta_j), \lambda \right) \right] \right\vert \le KB\sqrt{\frac{\log \left( 2K/\delta\right)}{2m}}.
\]
\end{lemma}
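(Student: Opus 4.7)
The plan is to exploit two structural features of $\mathcal{L}_j$ that turn this uniform convergence problem into a finite-dimensional one. First, the only dependence of $\mathcal{L}_j((h,\eta_j),\lambda)$ on $h$ enters through the $K$ group losses $L_1(h),\ldots,L_K(h) \in [0,1]$ (everything else is either a constant carried over from previous rounds, the dual $\lambda$, or the variable $\eta_j$ which we sample independently from $D$). Second, for fixed group losses the Lagrangian is linear in $\lambda$ and the $\eta_j$-contribution $\mathbb{E}_{\eta_j \sim D}[\eta_j]$ is independent of $h$ and thus cancels in the difference. Consequently the worst-case gap over $(\lambda,D)$ is controlled by the worst-case gap over only $K$ scalar means $\mathbb{E}_{h \sim p}[L_k(h)]$.

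Concretely, I would start by writing
\[
\mathbb{E}_{h\sim \hat p,\,\eta_j\sim D}[\mathcal{L}_j((h,\eta_j),\lambda)] - \mathbb{E}_{h\sim p,\,\eta_j\sim D}[\mathcal{L}_j((h,\eta_j),\lambda)]
= \sum_{r=1}^{j}\sum_{\{i_1,\ldots,i_r\}\subseteq[K]} \lambda_{\{i_1,\ldots,i_r\}} \sum_{s=1}^{r}\bigl(\mathbb{E}_{\hat p}[L_{i_s}(h)] - \mathbb{E}_p[L_{i_s}(h)]\bigr),
\]
having used the fact that the $\eta_j$ and $\eta_1,\ldots,\eta_{j-1}$ terms in $\mathcal{L}_j$ do not depend on $h$.

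Next I would apply Chernoff-Hoeffding (Lemma~\ref{lem:chernoff}) to each of the $K$ random variables $L_k(h)$ with $h \sim p$, noting each lies in $[0,1]$. Taking the failure probability to be $\delta/K$ for each group and union-bounding gives, with probability at least $1-\delta$ over the $m$ samples,
\[
\beta \;\triangleq\; \max_{k \in [K]} \bigl\lvert \mathbb{E}_{\hat p}[L_k(h)] - \mathbb{E}_p[L_k(h)] \bigr\rvert \;\le\; \sqrt{\frac{\log(2K/\delta)}{2m}}.
\]
On this high-probability event, for any $\lambda \in \Lambda_j$ and any distribution $D$ over $[0,j]$, I would bound the expression above by
\[
\Bigl\lvert \sum_{r=1}^{j}\sum_{\{i_1,\ldots,i_r\}} \lambda_{\{i_1,\ldots,i_r\}} \sum_{s=1}^{r}(\cdots) \Bigr\rvert
\;\le\; \beta \sum_{r=1}^{j}\sum_{\{i_1,\ldots,i_r\}} r\,\lambda_{\{i_1,\ldots,i_r\}}
\;\le\; K \beta \,\lVert \lambda \rVert_1 \;\le\; K B \beta,
\]
using that each term appears with weight at most $r \le j \le K$ and that $\lVert \lambda \rVert_1 \le B$ by the definition of $\Lambda_j$. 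Combining with the Chernoff bound on $\beta$ gives the stated $KB\sqrt{\log(2K/\delta)/(2m)}$.

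I do not expect a significant obstacle here: the main conceptual point is already done for us by the structural observation that the Lagrangian's $h$-dependence factors through only $K$ bounded means, so no $\lambda$-net or covering argument is needed, and the uniformity over $\lambda$ and $D$ is purchased for free by the $\|\lambda\|_1$ constraint and by cancellation of $h$-independent terms. The only care required is to verify that the $\eta_r$'s for $r<j$ (which are inputs carried from prior rounds) and $\mathbb{E}_{\eta_j \sim D}[\eta_j]$ both drop out of the difference because they do not involve $h$; this is immediate from the form of $\mathcal{L}_j$ in Equation~\eqref{eq:lagrangian}.
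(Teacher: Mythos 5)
Your proof is correct and follows essentially the same route as the paper's: both cancel the $h$-independent terms (the paper via its decomposition into $\mathcal{L}_j^1$, $\mathcal{L}_j^2$, and $C_j(\lambda)$, you by expanding Equation~\eqref{eq:lagrangian} directly), reduce the problem to the $K$ bounded group means, apply Chernoff--Hoeffding with a union bound at level $\delta/K$, and use $\Vert \lambda \Vert_1 \le B$ to get the $KB$ factor. Your intermediate bound of $j B \beta$ before relaxing $j \le K$ is in fact marginally tighter than the paper's $\sum_{r=1}^K \vert w_r(\lambda)\vert \beta \le KB\beta$, but both yield the stated constant.
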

\begin{proof}[Proof of Lemma~\ref{lem:concentration}]
We have that for any $\lambda \in \Lambda_j$, by the decomposition given in Equation~\ref{eq:decomposition},
\begin{align*}
\left\vert \E_{h \sim \hat{p}, \eta_j \sim D} \left[ \mathcal{L}_j \left( (h,\eta_j), \lambda \right) \right] - \E_{h \sim p, \eta_j \sim D} \left[ \mathcal{L}_j \left( (h,\eta_j), \lambda \right) \right] \right\vert &= \left\vert \E_{h \sim \hat{p}} \left[ \mathcal{L}_j^1 \left( h, \lambda \right) \right] - \E_{h \sim p} \left[ \mathcal{L}_j^1 \left( h, \lambda \right) \right] \right\vert \\
&\le \sum_{r=1}^K \left\vert w_r (\lambda) \right\vert \cdot \left\vert \E_{h \sim \hat{p}} \left[ L_r (h)\right] - \E_{h \sim p} \left[ L_r (h)\right] \right\vert \\
&\le B \sum_{r=1}^K \left\vert \E_{h \sim \hat{p}} \left[ L_r (h)\right] - \E_{h \sim p} \left[ L_r (h)\right] \right\vert \\
&\le KB\sqrt{\frac{\log \left( 2K/\delta\right)}{2m}}
\end{align*}
where the last inequality holds with probability $1-\delta$ and follows from Lemma~\ref{lem:chernoff} and a union bound.
\end{proof}

\begin{lemma}[Auditor's Regret]\label{lem:auditorregret}
Let ($p^1, \ldots, p^T$) and $(D^1, \ldots, D^T)$ be the sequence of distributions maintained by the Learner's FTPL algorithm, and let $\left( \lambda^1, \lambda^2, \ldots, \lambda^T\right)$ be the sequence of Auditor's plays in Algorithm~\ref{alg:nr-classification}. We have that for every $\delta$, with probability at least $1-\delta/2$, the regret of the Auditor is bounded as follows:
\begin{align*}
 R_A &\triangleq \frac{1}{T} \max_{\lambda \in \Lambda_j} \sum_{t=1}^T \E_{h \sim p^t, \eta_j \sim D^t} \left[ \mathcal{L}_j \left( (h,\eta_j), \lambda \right) \right] - \frac{1}{T} \sum_{t=1}^T \E_{h \sim p^t, \eta_j \sim D^t} \left[ \mathcal{L}_j \left( (h,\eta_j), \lambda^t \right) \right] \\
 &\le KB\sqrt{\frac{2\log \left( 4KT/\delta\right)}{m}}
\end{align*}
\end{lemma}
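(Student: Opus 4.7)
The plan is to reduce the Auditor's regret to a per-round uniform-convergence statement. Since $\lambda^t$ is the \emph{exact} best response to the empirical distribution $(\hat{p}^t, D^t)$, not the true FTPL distribution $(p^t, D^t)$, for any benchmark $\lambda^\star \in \Lambda_j$ the per-round regret decomposes as
\[
f_t(\lambda^\star, p^t) - f_t(\lambda^t, p^t) \;=\; \bigl[f_t(\lambda^\star, p^t) - f_t(\lambda^\star, \hat{p}^t)\bigr] + \bigl[f_t(\lambda^\star, \hat{p}^t) - f_t(\lambda^t, \hat{p}^t)\bigr] + \bigl[f_t(\lambda^t, \hat{p}^t) - f_t(\lambda^t, p^t)\bigr],
\]
where I write $f_t(\lambda, p) \triangleq \mathbb{E}_{h \sim p, \eta_j \sim D^t}[\mathcal{L}_j((h, \eta_j), \lambda)]$. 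The middle bracket is nonpositive by the definition of $\lambda^t$ as best response to $\hat{p}^t$, so the two outer brackets are the only things I need to control.

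Each outer bracket is exactly the quantity bounded by Lemma~\ref{lem:concentration}. Applying that lemma at round $t$ with failure probability $\delta/(2T)$ gives a uniform bound over \emph{all} $\lambda \in \Lambda_j$ (crucially, including the data-dependent $\lambda^t$), namely
\[
\sup_{\lambda \in \Lambda_j} \bigl| f_t(\lambda, \hat{p}^t) - f_t(\lambda, p^t) \bigr| \;\le\; KB\sqrt{\tfrac{\log(4KT/\delta)}{2m}} \;=:\; \Delta.
\]
A union bound over the $T$ rounds then gives, with probability at least $1 - \delta/2$, that this uniform bound holds simultaneously for all $t \in [T]$.

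Combining the decomposition with the per-round bound on each outer bracket, on this high-probability event I get
\[
f_t(\lambda^\star, p^t) - f_t(\lambda^t, p^t) \;\le\; 2\Delta \quad \text{for every } t,
\]
and since this holds for arbitrary $\lambda^\star$, taking the maximum over $\lambda^\star$ and averaging over $t$ yields $R_A \le 2\Delta = KB\sqrt{2\log(4KT/\delta)/m}$, which is the claimed bound. The proof is essentially bookkeeping; the only conceptual point worth flagging is that Lemma~\ref{lem:concentration} is applied \emph{uniformly in $\lambda$}, which is what allows me to plug in the adversarially chosen $\lambda^t$ (as opposed to only being able to handle a fixed comparator $\lambda^\star$). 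No genuine obstacle arises beyond choosing the per-round failure probability as $\delta/(2T)$ so that the final expression matches the claimed $\log(4KT/\delta)$.
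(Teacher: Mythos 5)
Your proof is correct and is essentially the paper's own argument: the paper likewise combines the fact that $\lambda^t$ has nonpositive regret against the empirical sequence $(\hat{p}^t, D^t)$ with Lemma~\ref{lem:concentration} applied uniformly over $\lambda \in \Lambda_j$ (at failure probability $\delta/(2T)$ per round, union bounded over $t\in[T]$), paying the concentration cost $\Delta$ twice to move between $\hat{p}^t$ and $p^t$. Your per-round three-term decomposition is just a round-by-round restatement of the same bookkeeping, and your flagged point about uniformity in $\lambda$ (needed for the data-dependent $\lambda^t$) is exactly the role Lemma~\ref{lem:concentration} plays in the paper.
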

\begin{proof}[Proof of Lemma~\ref{lem:auditorregret}]
We know that the regret of the Auditor to $(\hat{p}^1, \ldots, \hat{p}^T)$ and $(D^1, \ldots, D^T)$ is zero. In other words,
\[
\max_{\lambda \in \Lambda_j} \sum_{t=1}^T \E_{h \sim \hat{p}^t, \eta_j \sim D^t} \left[ \mathcal{L}_j \left( (h,\eta_j), \lambda \right) \right] - \sum_{t=1}^T \E_{h \sim \hat{p}^t, \eta_j \sim D^t} \left[ \mathcal{L}_j \left( (h,\eta_j), \lambda^t \right) \right] \le 0
\]
We have by Lemma~\ref{lem:concentration} that, with probability at least $1-\delta$, for any $\lambda \in \Lambda_j$,
\begin{align*}
&\left\vert \E_{h \sim \hat{p}^t, \eta_j \sim D^t} \left[ \mathcal{L}_j \left( (h,\eta_j), \lambda \right) \right] - \E_{h \sim p^t, \eta_j \sim D^t} \left[ \mathcal{L}_j \left( (h,\eta_j), \lambda \right) \right] \right\vert \le KB\sqrt{\frac{\log \left( 2K/\delta\right)}{2m}}
\end{align*}
A union bound over $t \in [T]$ completes the proof.
\end{proof}
We can now continue with the remaining steps of the proof. 
    \item \emph{Equilibrium guarantees of the average play}: Let $\bar{p} = \frac{1}{T} \sum_{t=1}^t p^t $ and $\bar{D} = \frac{1}{T} \sum_{t=1}^t D^t$ and $\bar{\lambda} = \frac{1}{T} \sum_{t=1}^t \lambda^t$ form the average play of the players. We have that $((\bar{p}, \bar{D}), \bar{\lambda})$ is a $(R_L+R_A)$-approximate equilibrium of the game by Theorem~\ref{thm:noregret}.
    \item \emph{Additional error due to sampling from $\bar{p}$}: Finally, the algorithm outputs a sparse version of $\bar{p}$: $\hat{p}$ which is the empirical distribution over $m$ $i.i.d.$ draws from $\bar{p}$. We need to show that the additional regret (let's call it $R$) due to this approximation is small. But note we can simply bound $R$ by Lemma~\ref{lem:concentration}: for any $\delta$, with probability at least $1-\delta/2$, we have
    \[
    R \le KB\sqrt{\frac{2\log \left( 4K/\delta\right)}{m}}.
    \]
    \item \emph{Final equilibrium guarantees}: We have that strategies of the players $((\hat{p}, \bar{D}), \bar{\lambda})$ form a $(R_L + R_A + R)$-approximate equilibrium of the game where,
    \[
    R_L + R_A + R \le \nu \triangleq \frac{2 n^{3/2} B}{n_{min} \sqrt{T}} + \frac{2 (1+B)}{\sqrt{T}} + KB\sqrt{\frac{2\log \left( 4KT/\delta\right)}{m}} + KB\sqrt{\frac{2\log \left( 4K/\delta\right)}{m}}.
    \]
    \item The rest of the proof is similar to the proof of Theorem~\ref{thm:lexifair-reg}. In particular, similar to the proof of Theorem~\ref{thm:lexifair-reg}, we can now apply Theorem~\ref{thm:dunno} to get turn the equilibrium guarantees into the following guarantees for our lexifair notion: with probability at least $1-\delta$,
    \begin{equation}\label{eq:firstbound2}
    \hat{\eta}_j \le OPT_j \left( \hat{\eta}_1, \ldots, \hat{\eta}_{j-1} \right) + 2\nu,
    \end{equation}
    and second, for all $r \le j$,
    \begin{equation}\label{eq:secondbound2}
    \max_{\left\{ i_1, \ldots, i_r \right\} \subseteq [K]} \sum_{s=1}^r L_{i_r} (\hat{p}_j) \le \hat{\eta}_{r} + \frac{j + 2 \nu}{B}.
    \end{equation}
    Define $\epsilon_r \triangleq \hat{\eta}_r - OPT_r \left( \hat{\eta}_1, \ldots, \hat{\eta}_{r-1} \right)$ for all $r \le j$. We immediately have from Equation~\eqref{eq:firstbound2} and a union bound that: with probability at least $1-\delta$, $\epsilon_r \le 2 \nu'$, for all $r \le j$, where
    \[
    \nu' = \frac{2 n^{3/2} B}{n_{min} \sqrt{T}} + \frac{2 (1+B)}{\sqrt{T}} + KB\sqrt{\frac{2\log \left( 4jKT/\delta\right)}{m}} + KB\sqrt{\frac{2\log \left( 4jK/\delta\right)}{m}} \ge \nu.
    \]
    Note that $\nu'$ is basically $\nu$ with $\delta$ being replaced by $\delta/j$ because of the union bound over the first $j$ rounds. Now let $\epsvec = (\epsilon_1, \ldots, \epsilon_j)$, and let $\F_{(0)}^{\epsvec} = \Delta \Hs$ be the initial model class. Note that according to Definition~\ref{def:convexlexifair} and given the defined $\epsvec$, we have for every $r \le j$,
    \[
    \min_{p \in \F_{(r-1)}^{\epsvec} } \max_{\left\{ i_1, \ldots, i_r \right\} \subseteq [K]} \sum_{s=1}^r L_{i_r} (p) \equiv OPT_r \left( \hat{\eta}_1, \ldots, \hat{\eta}_{r-1} \right)
    \]
    And therefore, by Equation~\eqref{eq:secondbound2}, for all $r \le j$:
    \begin{align*}
    \max_{\left\{ i_1, \ldots, i_r \right\} \subseteq [K]} \sum_{s=1}^r L_{i_r} (\hat{p}_j) &\le \hat{\eta}_{r} + \frac{j + 2 \nu}{B} \\
    &= OPT_r \left( \hat{\eta}_1, \ldots, \hat{\eta}_{r-1} \right) + \epsilon_r + \frac{j  + 2 \nu}{B} \\
    &= \min_{p \in \F_{(r-1)}^{\epsvec} } \max_{\left\{ i_1, \ldots, i_r \right\} \subseteq [k]} \sum_{s=1}^r L_{i_r} (p) + \epsilon_r + \frac{j + 2 \nu}{B} \\
    &\le \min_{p \in \F_{(r-1)}^{\epsvec} } \max_{\left\{ i_1, \ldots, i_r \right\} \subseteq [k]} \sum_{s=1}^r L_{i_r} (p) + \epsilon_r + \frac{j + 2 \nu'}{B}
    \end{align*}
    which completes the proof by the choice of $\nu' = \frac{\alpha}{2}$, and $B= \frac{\alpha + j}{\alpha}$. Note that this setting of parameters implies that
    \[
    m=\frac{K^2 n_{min}^2 T \log \left( 4jKT/\delta\right)}{2 n^3}, \quad T = \frac{256 \left(2 \alpha + j \right)^2 n^3}{\alpha^4 n_{min}^2}.
    \]
\end{enumerate}
\end{proof}

\section{No-Regret Learning Algorithms}
\label{app:gd+ftpl}
\subsection{Online Projected Gradient Descent}\label{subsec:gd}
Consider an online setting where a learner is playing against an adversary. The learner's action space is some Euclidean subspace $\Theta \subseteq \R^d$ which is equipped with the $\ell_2$ norm denoted by $\left\Vert \cdot \right\Vert_2$. At every round $t$ of the interaction between the learner and the adversary, the learner picks an action $\theta^t \in \Theta$ and the adversary chooses a loss function $\ell^t: \Theta \to \R_{\ge 0}$. The learner then incurs a loss of $\ell^t(\theta^t)$ at that round. Suppose the learner is using some algorithm $\mathcal{A}$ to update its actions from round to round. The goal for the learner is that the regret of $\mathcal{A}$ defined as
\[
R_\mathcal{A}(T) \triangleq \sum_{t=1}^T \ell^t (\theta^t) - \min_{\theta \in \Theta} \sum_{t=1}^T \ell^t (\theta)
\]
grows sublinearly in $T$. When $\Theta$ and the loss functions played by the adversary are convex, a standard choice of algorithm to use for the learner is \emph{Online Projected Gradient Descent} (Algorithm~\ref{alg:descent}), where in each round, the algorithm updates its action $\theta^{t+1}$ for the next round by taking a step in the opposite direction of the gradient of the loss function evaluated at the action of that round: $\nabla \ell^{t} (\theta^{t})$. The updated action is then projected onto the feasible action space $\Theta$: $\text{Proj}_{\Theta} (\theta) \triangleq \argmin_{\theta' \in \Theta} \left\Vert \theta - \theta' \right\Vert_2$. Note if the loss functions are not differentiable, we can use subgradients (which are defined given the convexity of the loss functions) instead of gradients and the guarantees will remain.

\begin{algorithm}[t]
\KwIn{learning rate $\eta$}
Initialize the learner $\theta^1 \in \Theta$\;
\For{$t=1, 2, \ldots$}{
Learner plays action $\theta^{t}$\;
Adversary plays loss function $\ell^t$\;
Learner incurs loss of $\ell^t (\theta^t)$\;
Learner updates its action:
\[
\theta^{t+1} = \text{Proj}_{\Theta} \left( \theta^{t} - \eta \nabla \ell^{t} (\theta^{t}) \right)
\]
}
\caption{Online Projected Gradient Descent}
\label{alg:descent}
\end{algorithm}

\begin{thm}[Regret for Online Projected Gradient Descent \cite{GD}]\label{thm:gdregret} 
    Suppose $\Theta \subseteq \R^d$ is convex, compact and has bounded  diameter $D$: $\sup_{\theta, \theta' \in \Theta} \left\Vert \theta - \theta' \right\Vert_2 \le D$. Suppose for all $t$, the loss functions $\ell^t$ are convex and that there exists some $G$ such that $\left\Vert \nabla \ell^t (\cdot) \right\Vert_2 \le G$. Let $\mathcal{A}$ be Algorithm~\ref{alg:descent} run with learning rate $\eta = D/(G \sqrt{T})$. We have that for every sequence of loss functions $(\ell^1, \ell^2, \ldots, \ell^T)$ played by the adversary, $R_\mathcal{A}(T) \le GD \sqrt{T}$.
\end{thm}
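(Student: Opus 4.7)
The plan is to carry out the standard potential-function analysis of online gradient descent, using $\Phi^t = \tfrac{1}{2}\|\theta^t - \theta^\star\|_2^2$ as the potential, where $\theta^\star \in \argmin_{\theta \in \Theta} \sum_{t=1}^T \ell^t(\theta)$ is the fixed best comparator in hindsight. The key ingredients will be (i) convexity of each $\ell^t$, which converts the instantaneous regret $\ell^t(\theta^t) - \ell^t(\theta^\star)$ into an inner product with a gradient, and (ii) non-expansiveness of Euclidean projection onto the convex set $\Theta$, which ensures the projection step in Algorithm~\ref{alg:descent} can only bring $\theta^{t+1}$ \emph{closer} to $\theta^\star$ than the un-projected iterate $\theta^t - \eta \nabla \ell^t(\theta^t)$.

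First, I would write down the one-step inequality. By convexity of $\ell^t$,
\[
\ell^t(\theta^t) - \ell^t(\theta^\star) \le \langle \nabla \ell^t(\theta^t), \theta^t - \theta^\star \rangle.
\]
Next, expand the squared distance after one update: letting $g^t = \nabla \ell^t(\theta^t)$ and using non-expansiveness of $\mathrm{Proj}_\Theta$ (since $\theta^\star \in \Theta$),
\[
\|\theta^{t+1} - \theta^\star\|_2^2 \le \|\theta^t - \eta g^t - \theta^\star\|_2^2 = \|\theta^t - \theta^\star\|_2^2 - 2\eta \langle g^t, \theta^t - \theta^\star \rangle + \eta^2 \|g^t\|_2^2.
\]
Rearranging to solve for $\langle g^t, \theta^t - \theta^\star\rangle$ and using the gradient bound $\|g^t\|_2 \le G$ yields
\[
\langle g^t, \theta^t - \theta^\star \rangle \le \frac{\|\theta^t - \theta^\star\|_2^2 - \|\theta^{t+1} - \theta^\star\|_2^2}{2\eta} + \frac{\eta G^2}{2}.
\]

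Then I would sum over $t = 1,\ldots,T$. The first term on the right telescopes, leaving at most $\|\theta^1 - \theta^\star\|_2^2 / (2\eta) \le D^2/(2\eta)$ by the diameter bound (and dropping the nonnegative final-iterate term). Combining with the convexity step gives
\[
R_{\mathcal{A}}(T) = \sum_{t=1}^T \big(\ell^t(\theta^t) - \ell^t(\theta^\star)\big) \le \frac{D^2}{2\eta} + \frac{\eta G^2 T}{2}.
\]
Plugging in the prescribed learning rate $\eta = D/(G\sqrt{T})$ balances the two terms and yields $R_{\mathcal{A}}(T) \le GD\sqrt{T}$, as claimed.

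There is no real obstacle here: the only subtlety is justifying the non-expansiveness step, which follows from the fact that Euclidean projection onto a closed convex set is a contraction with respect to $\|\cdot\|_2$, and the observation that $\theta^\star \in \Theta$ guarantees $\mathrm{Proj}_\Theta(\theta^\star) = \theta^\star$. Everything else is a straightforward telescoping argument and a one-variable optimization of $\eta$. If $\ell^t$ is not differentiable, the same proof goes through verbatim by replacing $\nabla \ell^t(\theta^t)$ with any subgradient and using the subgradient inequality in place of the gradient inequality.
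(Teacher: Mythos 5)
Your proof is correct: the convexity inequality, the non-expansiveness of Euclidean projection, the telescoping sum bounded by $D^2/(2\eta)$, and the choice $\eta = D/(G\sqrt{T})$ balancing the two terms to give exactly $GD\sqrt{T}$ all check out. The paper itself gives no proof of this theorem---it cites it as a known result \cite{GD}---and your argument is precisely the standard (Zinkevich-style) analysis that the citation refers to, including the correct handling of the non-differentiable case via subgradients.
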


\subsection{Follow the Perturbed Leader}\label{subsec:ftpl}
Follow the Perturbed Leader is another no-regret learning algorithm that can sometimes be applied even when the action space of the learner is too large to run gradient descent (over an appropriately convexified space). In this case, Follow the Perturb Leader can be applied given access to an optimization oracle. 
Consider an online setting, where again a learner is playing against an adversary. Here assume the learner's action space is $A \subseteq \{0,1\}^d$. At every round $t$, the learner chooses an action $a^t \in A$ and then the adversary plays a loss vector $\ell^t \in \R^d$. The learner then incurs a loss of $\langle \ell^t, a^t \rangle$ which is the inner product if $a^t$ and $\ell^t$. Suppose the learner is using some algorithm $\mathcal{A}$ to pick its actions in every round. The goal for the learner is to ensure that the regret of $\mathcal{A}$ defined as
\[
R_\mathcal{A}(T) \triangleq \sum_{t=1}^T \langle \ell^t, a^t \rangle - \min_{a \in \mathcal{A}} \sum_{t=1}^T \langle \ell^t, a \rangle
\]
grows sublinearly in $T$.  \emph{Follow the Perturbed Leader (FTPL)} (\cite{KALAI2005291}), which is described in Algorithm~\ref{alg:ftpl}, can provide guarantees in this setting. This algorithm perturbs the cumulative loss vector with appropriately scaled noise and then picks an action that minimizes the loss. Note that to implement it, we only require the ability to solve for the $\argmin$ --- i.e. we need an optimization oracle. The guarantees of this algorithm are stated below.

\begin{algorithm}[t]
\KwIn{learning rate $\eta$}
Initialize the learner $a^1 \in A$\;
\For{t = 1,2, \ldots}{
Learner plays action $a^{t}$\;
Adversary plays loss vector $\ell^t$\;
Learner incurs loss of $\langle \ell^t, a^t \rangle$\;
Learner updates its action:
\[
a^{t+1} = \argmin_{a \in A} \left\{ \left\langle \sum_{s \le t} \ell^s , a \right\rangle + \frac{1}{\eta} \left\langle \xi^t , a \right\rangle \right\}
\]
where $\xi^t \sim Uniform \left( [0,1]^d \right)$, independent of every other randomness.
}
\caption{Follow the Perturbed Leader (FTPL)}
\label{alg:ftpl}
\end{algorithm}

\begin{thm}[Regret of FTPL \cite{KALAI2005291}]\label{thm:ftplregret}
    Suppose for all $t$, $\ell^t \in [-M,M]^d$. Let $\mathcal{A}$ be Algorithm~\ref{alg:ftpl} run with learning rate $\eta = 1/(M \sqrt{dT})$. We have that for every sequence of loss vectors $(\ell^1, \ell^2, \ldots, \ell^T)$ played by the adversary, $\E \left[ R_\mathcal{A}(T) \right] \le 2 M d^{3/2} \sqrt{T}$, where expectation is taken with respect to the randomness in $\mathcal{A}$. 
\end{thm}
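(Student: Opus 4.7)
The plan is to follow the classical Follow the Perturbed Leader analysis of Kalai and Vempala: introduce an idealized ``Be the Perturbed Leader'' (BPL) comparator that is allowed to peek at the current round's loss, show that BPL enjoys bounded regret against the best fixed action, and then bound the per-round stability cost of the actual FTPL iterate relative to BPL. Combining the two bounds and optimizing $\eta$ gives the theorem.

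For the first ingredient, I would define $\tilde{a}^t = \argmin_{a \in A}\bigl\{\langle \sum_{s=1}^{t} \ell^s, a\rangle + \tfrac{1}{\eta}\langle \xi, a\rangle\bigr\}$, where $\xi \sim U([0,1]^d)$ is a single draw used for all rounds (this is sufficient for bounding expected regret; one can equivalently couple the analysis to the per-round perturbations $\xi^t$). A short induction on $T$, of the standard ``Be the Leader'' form, shows $\sum_{t=1}^T \langle \ell^t, \tilde{a}^t\rangle \le \min_{a \in A} \langle \sum_{t=1}^T \ell^t, a\rangle + \tfrac{1}{\eta}\bigl(\max_{a \in A}\langle \xi, a\rangle - \min_{a \in A}\langle \xi, a\rangle\bigr)$. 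Since $A \subseteq \{0,1\}^d$ and $\xi \in [0,1]^d$, this perturbation overhead is deterministically at most $d/\eta$.

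For the second ingredient, I would compare $a^t$ (the FTPL iterate, which is the perturbed leader with respect to losses $\ell^1, \ldots, \ell^{t-1}$) to $\tilde{a}^t$ (the perturbed leader with respect to losses up through $\ell^t$). The two iterates are minimizers of linear functions that differ only by the shift $\ell^t$, plus possibly a substitution of independent perturbation vectors. Using that the uniform perturbation density on $[0,1]^d$ is bounded and that $\|\ell^t\|_\infty \le M$, a coupling / change-of-variables argument shows that the probability of the two argmin's disagreeing on any coordinate is $O(\eta M)$ per coordinate, so that $\E\bigl[\langle \ell^t, a^t - \tilde{a}^t\rangle\bigr] = O(\eta M^2 d)$. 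Summing over $t$, the total expected regret is bounded by $d/\eta + O(\eta M^2 d T)$; setting $\eta = 1/(M\sqrt{dT})$ balances the two terms and yields $\E[R_{\mathcal{A}}(T)] \le 2 M d^{3/2}\sqrt{T}$ after tracking constants.

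The main obstacle is the stability step: making the comparison between the distributions of $a^t$ and $\tilde{a}^t$ precise enough to obtain the stated dependence on $M$ and $d$. The cleanest way I know is to couple the two via a common perturbation and exploit the fact that shifting a linear objective by $\ell^t$ can change the $\argmin$ over a combinatorial set $A \subseteq \{0,1\}^d$ only on the event that the perturbed leader lies within an $O(\eta M)$ strip, which has probability $O(\eta M)$ per coordinate under the uniform density; everything else is routine algebra and the choice of $\eta$ from the theorem statement.
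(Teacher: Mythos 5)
The paper never proves this theorem: it is quoted verbatim from Kalai and Vempala \cite{KALAI2005291}, so the only meaningful comparison is with the standard proof in that reference. Your overall architecture --- the ``be the perturbed leader'' comparator, a per-round stability comparison between $a^t$ and $\tilde a^t$, and then tuning $\eta$ --- is exactly the Kalai--Vempala argument. Your first ingredient is correct: the be-the-leader induction with the perturbation treated as a round-zero loss gives overhead $\frac{1}{\eta}\left(\max_{a}\langle\xi,a\rangle-\min_{a}\langle\xi,a\rangle\right)\le d/\eta$ since $A\subseteq\{0,1\}^d$ and $\xi\in[0,1]^d$, and the remark that one may analyze a single shared perturbation in place of the fresh draws $\xi^t$ is valid against an oblivious adversary, which is the setting of the theorem.

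The gap is in the stability step. You claim the two argmins disagree on each coordinate with probability $O(\eta M)$, hence $\E\left[\langle\ell^t,a^t-\tilde a^t\rangle\right]=O(\eta M^2 d)$. For a general combinatorial set $A\subseteq\{0,1\}^d$ no per-coordinate argument of this kind is available: the argmin is a global object, and a single ``flip'' can change all $d$ coordinates at once. Concretely, take $A=\{\mathbf{0},\mathbf{1}\}$ and $\ell^t=M\mathbf{1}$: the argmin flips exactly when $\langle \sum_{s<t}\ell^s+\xi/\eta,\mathbf{1}\rangle$ lands in an interval of length $Md$, and since the density of $\sum_i\xi_i/\eta$ is of order $\eta/\sqrt d$ near its mode, an adversary can position the cumulative loss so the flip probability is $\Theta(\eta M\sqrt d)$; each flip changes the round-$t$ cost by $Md$, so the per-round penalty is $\Theta(\eta M^2 d^{3/2})$, contradicting your $O(\eta M^2 d)$ claim. (A further warning sign: your claim, after re-tuning $\eta$, would yield regret $O(Md\sqrt T)$, strictly better than the bound being proved; also note that with $\eta=1/(M\sqrt{dT})$ your two terms $d/\eta$ and $\eta M^2dT$ are \emph{not} balanced --- they differ by a factor of $d$.) The correct and standard repair is distributional rather than coordinate-wise: the perturbed cumulative losses $\sum_{s<t}\ell^s+\xi/\eta$ and $\sum_{s\le t}\ell^s+\xi/\eta$ are uniform on two cubes of side $1/\eta$ whose overlap bounds the total variation distance by $\eta\Vert\ell^t\Vert_1\le\eta Md$, whence $\E\left[\langle\ell^t,a^t\rangle\right]-\E\left[\langle\ell^t,\tilde a^t\rangle\right]\le \eta Md\cdot\max_{a,a'}\vert\langle\ell^t,a-a'\rangle\vert=O(\eta M^2d^2)$ per round. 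This is a factor $d$ weaker than what you claim but still suffices: with $\eta=1/(M\sqrt{dT})$ both the $d/\eta$ term and the $O(\eta M^2 d^2 T)$ term are $O(Md^{3/2}\sqrt T)$, recovering the stated bound (the clean constant $2$ requires the slightly tighter one-sided accounting in \cite{KALAI2005291}).
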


\section{A Standard Uniform Convergence Theorem}
\label{sec:VC}
In this section, we state a standard uniform convergence theorem for binary classifiers (with 0/1 loss) that have bounded VC-dimension. We observe that the standard bound easily extends to randomized classifiers (i.e. distributions over classifiers in a finite VC class) because of the linearity of expectation.

\begin{thm}
\label{thm:generalization}
Fix any $\delta>0$. Let $d_\Hs$ be the VC dimension of the class $\Hs$, and let $n_1,..,n_K$ be the sample sizes of groups $k=1,...,K$ in sample $S$ drawn from distribution $\Ps$, and let $n=n_1 + \ldots + n_K$ be the total sample size. Recall that $L_k(h,S)$ denotes the error rate of $h$ on the $n_k$ samples of group $k$ in $S$, and let $L_k(h,\Ps)$ denote the expected error of $h$ with respect to $\Ps$ conditioned on group $k$. Then with probability at least $1-\delta$ over the randomness of $S$, for every randomized classifier $p \in \Delta \Hs$, and every group $k \in [K]$:
\[
|L_k(p,\Ps) - L_k(p,S)| = O\left( \sqrt{\frac{\log \left(K/\delta \right) + d_\Hs \log{n}}{\min_k n_k}}\right)
\]

\end{thm}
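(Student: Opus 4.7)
The plan is to reduce the claim to the standard single-distribution VC uniform convergence bound applied separately within each group, and then lift the result from deterministic to randomized classifiers by linearity of expectation. Concretely, I would first invoke the textbook Vapnik--Chervonenkis bound: for any class $\Hs$ of VC-dimension $d_\Hs$ and any distribution $\Q$, a sample of $m$ i.i.d. draws from $\Q$ satisfies, with probability at least $1-\delta'$,
\[
\sup_{h \in \Hs}\bigl|L(h,\text{sample}) - L(h,\Q)\bigr| \;=\; O\!\left(\sqrt{\frac{d_\Hs \log m + \log(1/\delta')}{m}}\right).
\]
The subtlety here is that the groups $\mathcal{G}_k$ need not be disjoint and the group counts $n_k = |G_k|$ are themselves random. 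I would handle this by conditioning on the realized vector $(n_1,\ldots,n_K)$: conditioned on which points of $S$ fall into $\mathcal{G}_k$, those $n_k$ points are i.i.d.\ draws from the marginal $\Ps_k$, which is exactly the setting required to apply the VC bound to group $k$.

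Next, I would apply the single-group bound once per group with confidence parameter $\delta/K$ and union-bound over $k \in [K]$, yielding that with probability at least $1-\delta$,
\[
\max_{k \in [K]} \sup_{h \in \Hs}\bigl|L_k(h,S) - L_k(h,\Ps)\bigr| \;=\; O\!\left(\sqrt{\frac{d_\Hs \log n_k + \log(K/\delta)}{n_k}}\right) \;=\; O\!\left(\sqrt{\frac{d_\Hs \log n + \log(K/\delta)}{\min_k n_k}}\right),
\]
where the second step uses $n_k \le n$ in the numerator and $n_k \ge \min_k n_k$ in the denominator. I would make this uniform over all realizations of $(n_1,\ldots,n_K)$ either by first conditioning on the group counts (the bound depends on $(n_k)_k$ only through the sample sizes, which are observable), or by noting that the event under consideration is measurable with respect to $S$ and bounding its probability pointwise on the realized group sizes.

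The final step is to extend from a single deterministic hypothesis $h \in \Hs$ to a randomized model $p \in \Delta\Hs$. By the very definition $L_k(p,\cdot) = \E_{h \sim p}[L_k(h,\cdot)]$, Jensen's inequality gives
\[
\bigl|L_k(p,S) - L_k(p,\Ps)\bigr| \;=\; \bigl|\E_{h \sim p}[L_k(h,S) - L_k(h,\Ps)]\bigr| \;\le\; \sup_{h \in \Hs}\bigl|L_k(h,S) - L_k(h,\Ps)\bigr|,
\]
so the same bound that holds uniformly over $\Hs$ also holds uniformly over $\Delta\Hs$ --- no additional complexity term is needed, since the convex hull of $\Hs$ does not enlarge the set of achievable group-error profiles beyond what is already controlled on $\Hs$.

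The only real obstacle is the bookkeeping around non-disjoint groups and random $n_k$; both are dispatched by the conditioning argument in the first paragraph. Everything else is an application of a classical VC bound plus a $K$-fold union bound, and the lift to $\Delta\Hs$ is free from linearity of expectation. I would not expect any new fairness-specific difficulty in this particular theorem -- the fairness-specific work is exactly what is done in Theorem~\ref{thm:convex-generalization}, which consumes the present uniform convergence statement as a black box.
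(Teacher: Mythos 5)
Your proposal is correct and follows essentially the same route as the paper's proof: a standard VC uniform convergence bound applied per group, a union bound over the $K$ groups with confidence $\delta/K$, and the lift from $\Hs$ to $\Delta\Hs$ via linearity of expectation and Jensen's inequality. The only difference is that you explicitly handle the conditioning on the (random, possibly overlapping) group memberships, a bookkeeping point the paper's proof glosses over; this is a refinement of, not a departure from, the same argument.
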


\begin{proof} Fix any group $k$.
A standard uniform convergence argument tells us that with probability $1-\delta$ over the $n_k$ samples from group $k$, for every (deterministic) $h \in \Hs$, the generalization gap is of order \cite{vapnik, kearnsVazirani}:
\[
|L_k(h,\Ps) - L_k(h,S) | \leq \epsilon_k = O\left(\sqrt{\frac{\log{(1/\delta)} + d_\Hs \log n_k}{n_k}}\right)
\]
Now consider a randomized model $p \in \Delta \Hs$, which is distribution over $\Hs$. We have that
\[
\left\vert L_k(p,\Ps) - L_k(p,S) \right\vert =  \left\vert \E_{h \sim p} [L_k(h,\Ps)] - \E_{h \sim p} [L_k(h,S)] \right\vert \le \E_{h \sim p} \left\vert L_k(h,\Ps) - L_k(h,S) \right\vert
\]
and as a consequence, for every group $k$, we have with probability $1-\delta$ that
\[
\max_{p \in \Delta \Hs }\left\vert L_k(p,\Ps) - L_k(p,S) \right\vert \le \max_{h \in \Hs }\left\vert L_k(h,\Ps) - L_k(h,S) \right\vert \le \epsilon_k
\]
The proof is complete by a union bound over $K$ groups.
\end{proof}

\section{A Generalization Theorem for Lexifairness (Definition~\ref{def:lexifair})}
\label{sec:lexifair-generalization}
In this section we prove a generalization theorem for Definition~\ref{def:lexifair}. The proof style is identical to that of Theorem~\ref{thm:convex-generalization}. We also make use of the following simple fact:
\begin{fact}\label{fact:order}
Let $a = (a_1, \ldots, a_n)$ and $b = (b_1, \ldots, b_n)$ be such that for all $i$, $|a_i - b_i| \le \epsilon$ for some $\epsilon$. Let $a_{(i)}$ and $b_{(i)}$ denote the $i$'th highest number in $a$ and $b$, respectively. We have that for all $i$, $|a_{(i)} - b_{(i)}| \le \epsilon$.
\end{fact}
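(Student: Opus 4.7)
The plan is to prove Fact~\ref{fact:order} via the well-known minimax (``Ky Fan style'') characterization of order statistics combined with two elementary monotonicity/translation properties. Concretely, for a vector $v \in \R^n$ let $v_{(i)}$ denote its $i$-th largest entry. The key identity I will use is
\[
v_{(i)} \;=\; \max_{S \subseteq [n],\, |S|=i}\ \min_{j \in S} v_j,
\]
which follows because the maximum is attained by taking $S$ to be the indices of the top $i$ entries, in which case the minimum equals $v_{(i)}$, and no $i$-element set can do better (any such $S$ must contain at least one index outside the top $i-1$, giving a minimum $\le v_{(i)}$).

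From this characterization I would immediately derive two facts: (i) \emph{Monotonicity}: if $u \ge w$ coordinate-wise then $u_{(i)} \ge w_{(i)}$ for every $i$, since $\min_{j\in S} u_j \ge \min_{j \in S} w_j$ for every $S$ and the outer maximum preserves this; (ii) \emph{Translation equivariance}: for any scalar $c$ and any vector $v$, $(v + c\mathbf{1})_{(i)} = v_{(i)} + c$, by pulling the constant $c$ out of both the $\min$ and the $\max$.

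Given these, the argument is a two-line reduction. The hypothesis $|a_i - b_i| \le \epsilon$ for all $i$ means $a - \epsilon\mathbf{1} \le b \le a + \epsilon\mathbf{1}$ coordinate-wise. Applying monotonicity and then translation equivariance to the right inequality gives $b_{(i)} \le (a+\epsilon\mathbf{1})_{(i)} = a_{(i)} + \epsilon$, and applying them to the left inequality gives $b_{(i)} \ge a_{(i)} - \epsilon$. Combining yields $|a_{(i)} - b_{(i)}| \le \epsilon$, as desired.

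There is no real obstacle here: the fact is essentially folklore. The only thing worth being careful about is the tie-breaking in the definition of ``$i$-th largest,'' but since both the characterization and the two auxiliary properties hold regardless of how ties are broken (the sorted-in-decreasing-order values are uniquely determined even if the sorting permutation is not), the argument goes through without modification. If one preferred to avoid the minimax characterization entirely, a short alternative proof by contradiction works: if $a_{(i)} > b_{(i)} + \epsilon$ then the $i$ indices achieving the top $i$ values of $a$ would each have $b$-value $> b_{(i)}$, contradicting that $b_{(i)}$ is the $i$-th largest of $b$.
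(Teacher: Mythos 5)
Your proof is correct, but there is nothing in the paper to compare it against: the authors state Fact~\ref{fact:order} without proof (it is introduced as a ``simple fact'' and used directly in the generalization theorem for Definition~\ref{def:lexifair}), so any complete argument goes beyond what the paper provides. Your main route---the characterization $v_{(i)} = \max_{S \subseteq [n],\, \vert S \vert = i} \min_{j \in S} v_j$ combined with coordinate-wise monotonicity and translation equivariance of order statistics---is fully rigorous and has the merit of sidestepping tie-breaking entirely, since it never references a sorting permutation; the one step deserving care, that no $i$-element set $S$ can have $\min_{j \in S} v_j > v_{(i)}$, follows because at most $i-1$ entries of $v$ strictly exceed $v_{(i)}$, and you handle this correctly. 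The contradiction argument you sketch at the end (if $a_{(i)} > b_{(i)} + \epsilon$, the top-$i$ indices of $a$ would yield $i$ entries of $b$ strictly exceeding $b_{(i)}$, of which there can be at most $i-1$) is the more elementary of the two and, together with the symmetric case obtained by swapping $a$ and $b$, suffices on its own; it is presumably the argument the authors regarded as too obvious to write down.
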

\begin{thm}(Generalization for Lexifairness)
Fix any distribution $\Ps$. Suppose for every $\delta > 0$, there exists $\beta(\delta)$ such that the following uniform convergence bound holds.
\[
\Pr_{S} \left[ \max_{h \in \Hs, k \in [K]} \left\vert L_k \left(h, S \right) - L_k \left(h, \Ps \right)\right\vert > \beta(\delta) \right] < \delta
\]
where $S$ is a data set sampled $i.i.d.$ from $\Ps$. We have that for every data set $S$ sampled $i.i.d.$ from $\Ps$, if a model $h$ satisfies $(\ell, \alpha)$-lexicographic fairness with respect to $S$, then with probability at least $1-\delta$ it also satisfies $(\ell, \alpha')$-lexicographic fairness with respect to $\Ps$ for $\alpha' = \alpha + 2 \beta (\delta)$.
\end{thm}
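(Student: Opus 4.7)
The plan is to mirror the proof of Theorem~\ref{thm:convex-generalization} almost verbatim, replacing every occurrence of ``sum of $j$ group losses over a $j$-tuple'' with ``the $j$th highest group loss,'' and using Fact~\ref{fact:order} in place of the linearity/union-bound argument to transport uniform convergence through the sort operator. Suppose $h$ is $(\ell,\alpha)$-lexifair on $S$, witnessed by $\epsvec = (\epsilon_1,\ldots,\epsilon_\ell)$ with $\Vert\epsvec\Vert_\infty \le \alpha$ and by the associated nested family $\{\Hs_{(j)}^{\epsvec}(S)\}_{j=0}^{\ell}$. First I would define, for each $j \le \ell$ and each $h' \in \Hs$,
\[
\nu_j^1(h') \triangleq L_{\bar{h'}_{\Ps}(j)}(h',\Ps) - L_{\bar{h'}_{S}(j)}(h',S), \qquad \nu_j^2 \triangleq \min_{g \in \Hs_{(j-1)}^{\epsvec}(S)} L_{\bar{g}_{S}(j)}(g,S) - \min_{g \in \Hs_{(j-1)}^{\epsvec}(S)} L_{\bar{g}_{\Ps}(j)}(g,\Ps),
\]
and set $\tau_j(h') \triangleq \epsilon_j(h') + \nu_j^1(h') + \nu_j^2$, so that $\vec{\tau} = (\tau_1,\ldots,\tau_\ell)$.

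Next I would prove, by induction on $j$ exactly as in Claim~\ref{clm:something2}, the identity $\Hs_{(j)}^{\vec\tau}(\Ps) = \Hs_{(j)}^{\epsvec}(S)$. The base case $j=0$ is $\Hs = \Hs$. For the inductive step, the defining inequality of $\Hs_{(j)}^{\vec\tau}(\Ps)$ reads $L_{\bar{h'}_{\Ps}(j)}(h',\Ps) \le \min_{g \in \Hs_{(j-1)}^{\vec\tau}(\Ps)} L_{\bar{g}_{\Ps}(j)}(g,\Ps) + \tau_j(h')$; substituting the induction hypothesis $\Hs_{(j-1)}^{\vec\tau}(\Ps)=\Hs_{(j-1)}^{\epsvec}(S)$, and the definition of $\tau_j$ (which exactly cancels $\nu_j^1(h')$ on the left-hand side and $\nu_j^2$ on the right-hand side), reduces this to the defining inequality of $\Hs_{(j)}^{\epsvec}(S)$. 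With this set-equality in hand, the same algebraic chain used in Theorem~\ref{thm:convex-generalization} gives, for every $j\le\ell$,
\[
L_{\bar{h}_{\Ps}(j)}(h,\Ps) = L_{\bar{h}_{S}(j)}(h,S) + \nu_j^1(h) \le \min_{g \in \Hs_{(j-1)}^{\epsvec}(S)} L_{\bar{g}_{S}(j)}(g,S) + \epsilon_j(h) + \alpha + \nu_j^1(h) = \min_{g \in \Hs_{(j-1)}^{\vec\tau}(\Ps)} L_{\bar{g}_{\Ps}(j)}(g,\Ps) + \tau_j(h) + \alpha,
\]
which is precisely the $(\ell,\alpha)$-lexifairness condition on $\Ps$ with slack sequence $\vec\tau$.

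It remains to bound $\Vert\vec\tau\Vert_\infty$. This is the one step where the argument differs meaningfully from the convex case, and where the main obstacle lies: the orderings $\bar{h'}_S$ and $\bar{h'}_{\Ps}$ need not agree, so one cannot simply compare losses group-by-group. This is exactly where Fact~\ref{fact:order} pays off. On the high-probability event of uniform convergence, for every $h'\in\Hs$ the $K$-vector of per-group losses on $S$ is coordinate-wise within $\beta(\delta)$ of the per-group losses on $\Ps$; Fact~\ref{fact:order} then yields $|L_{\bar{h'}_{S}(j)}(h',S) - L_{\bar{h'}_{\Ps}(j)}(h',\Ps)| \le \beta(\delta)$, i.e.\ $|\nu_j^1(h')| \le \beta(\delta)$. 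The same per-$g$ inequality, combined with the elementary fact that $|\min_g f(g) - \min_g g(g)| \le \sup_g |f(g)-g(g)|$, gives $|\nu_j^2| \le \beta(\delta)$. Therefore $\Vert\vec\tau\Vert_\infty \le \Vert\epsvec\Vert_\infty + \max_{j,h'}|\nu_j^1(h')| + \max_j|\nu_j^2| \le \alpha + 2\beta(\delta) = \alpha'$, as desired. The remainder of the proof is pure bookkeeping and does not need the factor of $\ell$ that appears in the convex version, because the sorted statistic is stable under $\ell_\infty$ perturbations with the same constant $\beta(\delta)$, whereas a sum of $j$ largest entries only degrades by $j\beta(\delta)$.
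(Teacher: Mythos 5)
Your proposal is correct and follows essentially the same route as the paper's own proof: the same definitions of $\nu_j^1$, $\nu_j^2$, and $\tau_j$, the same inductive set-equality $\Hs_{(j)}^{\vec{\tau}}(\Ps) = \Hs_{(j)}^{\epsvec}(S)$, the same chain of inequalities, and the same use of Fact~\ref{fact:order} to transport uniform convergence through the sorting, yielding $\Vert\vec{\tau}\Vert_\infty \le \alpha + 2\beta(\delta)$. Your closing observation about why the factor of $\ell$ disappears relative to the convex case is also exactly the distinction implicit in the paper's two bounds.
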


\begin{proof}
Fix a distribution $\Ps$ and a data set $S$ sampled $i.i.d.$ from $\Ps$. Suppose $h$ satisfies $(\ell, \alpha)$-lexicographic fairness with respect to $S$. Therefore, according to our lexifairness definition, there exists a sequence of mappings $\epsvec = (\epsilon_1, \ldots, \epsilon_\ell)$ where $\epsilon_j \in \R^{\Hs}$, and a sequence of function classes $\{\Hs_{(j)}^{\epsvec} (S)\}_j$ such that
\[
\max_{1 \le j \le \ell} \left\{ \max_{h' \in \Hs} \epsilon_j (h') \right\} \le \alpha
\]
and that for all  $j \le \ell$:
\begin{equation}\label{eq:something3}
L_{\bar{h}_S(j)} (h, S) \le \min_{g \in \Hs^{\epsvec}_{(j-1)} (S)} L_{\bar{g}_S (j)} (g, S) + \epsilon_j (h) + \alpha
\end{equation}
where recall that $\Hs^{\epsvec}_{(0)} (S) = \Hs$ and that for all $j \in [\ell]$,
\[
    \Hs_{(j)}^{\epsvec} (S) = \left\{h' \in \Hs^{\epsvec}_{(j-1)} (S): L_{\bar{h'}
    _S (j)} (h', S) \le \min_{g \in \Hs^{\epsvec}_{(j-1)} (S)} L_{\bar{g}_S (j)} (g, S) + \epsilon_j (h') \right\}
\]
Let us define a mapping $\nu_j^1 : \Hs \to \R$ such that for every $h' \in \Hs$,
\[
\nu_j^1 (h') \triangleq L_{\bar{h'}_\Ps (j)} (h', \Ps) - L_{\bar{h'}_S (j)} (h', S)
\]
I.e., $\nu_j^1 (h')$ is the $j$'th highest error induced by $h'$ on the distribution $\Ps$ minus the $j$'th highest error induced by $h'$ on the sample $S$. Also define
\[
\nu_j^2 \triangleq \min_{g \in \Hs_{(j-1)}^{\epsvec}(S) } L_{\bar{g}_S (j)} (g, S) - \min_{g \in \Hs_{(j-1)}^{\epsvec}(S) } L_{\bar{g}_\Ps (j)} (g, \Ps)
\]
Now define for every $h' \in \Hs$, $\tau_j (h') \triangleq \epsilon_j (h') + \nu_j^1 (h') + \nu_j^2$ and let $\Hs_{(j)}^{\vec{\tau}} (\Ps)$ be defined according to our lexifairness definition with the sequence of mappings defined by $\vec{\tau} = (\tau_1, \ldots, \tau_\ell)$. In other words, $\Hs_{(0)}^{\vec{\tau}} (\Ps) = \Hs$, and for all $j \in [\ell]$,
\[
    \Hs^{\vec{\tau}}_{(j)} (\Ps) \triangleq \left\{h' \in \Hs^{\vec{\tau}}_{(j-1)} (\Ps): L_{\bar{h'}_\Ps(j)} (h, \Ps) \le \min_{g \in \Hs^{\vec{\tau}}_{(j-1)} (\Ps)} L_{\bar{g}_\Ps (j)} (g, \Ps) + \tau_j (h') \right\}
\]
\begin{clm}\label{clm:something4}
For all $j$, $\Hs_{(j)}^{\vec{\tau}} (\Ps) = \Hs_{(j)}^{\epsvec} (S)$.
\end{clm}
\begin{proof}
We use induction on $j$. For $j=0$, we have $\Hs_{(0)}^{\vec{\tau}} (\Ps) = \Hs_{(0)}^{\epsvec} (S) = \Hs$. For $j \ge 1$, we have
\begin{align*}
    h' \in \Hs_{(j)}^{\vec{\tau}} (\Ps) &\Longleftrightarrow h' \in \Hs_{(j-1)}^{\vec{\tau}}(\Ps), \ L_{\bar{h'}_\Ps(j)} (h, \Ps) \le \min_{g \in \Hs^{\vec{\tau}}_{(j-1)} (\Ps)} L_{\bar{g}_\Ps (j)} (g, \Ps) + \tau_j (h') \\
    &\Longleftrightarrow h' \in \Hs_{(j-1)}^{\epsvec} (S), \ L_{\bar{h'}_\Ps(j)} (h, \Ps) \le \min_{g \in \Hs^{\epsvec}_{(j-1)} (S)} L_{\bar{g}_\Ps (j)} (g, \Ps) + \tau_j (h') \\
    &\Longleftrightarrow h' \in \Hs_{(j-1)}^{\epsvec} (S), \ L_{\bar{h'}
    _S (j)} (h', S) \le \min_{g \in \Hs^{\epsvec}_{(j-1)} (S)} L_{\bar{g}_S (j)} (g, S) + \epsilon_j (h') \\
    &\Longleftrightarrow h' \in \Hs_{(j)}^{{\epsvec}}(S)
\end{align*}
where the second line follows from the induction assumption ($\Hs_{(j-1)}^{\vec{\tau}}(\Ps) = \Hs_{(j-1)}^{\vec{\epsvec}}(S)$) and the third line follows from the definition of $\tau_j$. This establishes our claim.
\end{proof}
We have that for all $j \le \ell$, the model $h$ satisfies
\begin{align*}
    L_{\bar{h}_\Ps (j)} (h, \Ps) &= L_{\bar{h}_S(j)} (h, S) + \nu_j^1 (h) \\
    &\le \min_{g \in \Hs^{\epsvec}_{(j-1)} (S)} L_{\bar{g}_S (j)} (g, S) + \epsilon_j (h) + \alpha + \nu_j^1 (h) \\
    &= \min_{g \in \Hs^{\epsvec}_{(j-1)} (S)} L_{\bar{g}_\Ps (j)} (g, \Ps) + \nu_j^2 + \epsilon_j (h) + \alpha + \nu_j^1 (h) \\
    &= \min_{g \in \Hs^{\vec{\tau}}_{(j-1)} (\Ps)} L_{\bar{g}_\Ps (j)} (g, \Ps) + \tau_j (h) + \alpha
\end{align*}
where the first inequality follows from Equation~\eqref{eq:something3}. The third line follows from the definition of $\nu_j^2$. The last equality follows from Claim~\ref{clm:something4} and the fact that $\tau_j (h) = \epsilon_j (h) + \nu_j^1 (h) + \nu_j^2$. The proof is complete by the uniform convergence bound provided in the theorem statement and the fact that if two vector of group errors (in our case, one computed on the data set $S$ and another on the distribution $\Ps$) are close component-wise, then their sorted versions are also close component-wise (See Fact~\ref{fact:order}). Therefore, with probability at least $1-\delta$ over the random draws of the data set $S$, we have $\max_{h' \in \Hs} \vert \nu_j^1 (h') \vert \le \beta(\delta)$ and $\vert \nu_j^2 \vert \le \beta(\delta)$, and hence for all $j \le \ell$,
\begin{align*}
\Vert \tau \Vert_\infty &= \max_{1 \le j \le \ell} \left\{ \max_{h' \in \Hs} \tau_j (h') \right\} \\
&\le \max_{1 \le j \le \ell} \left\{ \max_{h' \in \Hs} \epsilon_j (h') \right\} + \max_{1 \le j \le \ell} \left\{ \max_{h' \in \Hs} \vert \nu_j^1 (h') \vert + \vert \nu_j^2 \vert \right\}\\
&\le \alpha + 2 \beta(\delta)
\end{align*}
\end{proof}

Same as before, we can now instantiate the above theorem in a classification setting where we have standard VC-type uniform convergence bound.

\begin{cor}[Generalization for Lexifairness: Classification Setting]
Suppose $\Hs$ is a class of binary classifiers with VC dimension $d_\Hs$ and let $L_z( p ) = \E_{h \sim p} \left[ L_z (h )\right]$ for any randomized model $p \in \Delta \Hs$ where $L_z (h) = \1 \left\{ h(x) \neq y \right\}$ is the zero-one loss. We have that for every $\Ps$, every data set $S \equiv \{ G_k \}_k$ of size $n$ sampled $i.i.d.$ from $\Ps$, if a model $p \in \Delta \Hs$ satisfies $(\ell, \alpha)$-lexicographic fairness with respect to $S$, then with probability at least $1-\delta$ it also satisfies $(\ell, 2 \alpha)$-lexicographic fairness with respect to $\Ps$ provided that
\[
\min_{1 \le k \le K} \left\vert G_k \right\vert = \Omega \left( \frac{ d_\Hs \log \left( n \right) + \log \left( K / \delta \right) }{\alpha^2} \right)
\]
\end{cor}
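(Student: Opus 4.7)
The plan is to obtain this corollary by specializing Theorem~\ref{thm:convex-generalization} to the classification setting, with $\beta(\delta)$ supplied by the standard VC uniform convergence bound (Theorem~\ref{thm:generalization}). Concretely, I first invoke Theorem~\ref{thm:generalization} to conclude that for any $\delta>0$, with probability at least $1-\delta$ over the draw of $S\sim\Ps^n$,
\[
\max_{p\in\Delta\Hs,\, k\in[K]} \bigl|L_k(p,S)-L_k(p,\Ps)\bigr| \;\le\; \beta(\delta) \;=\; O\!\left(\sqrt{\frac{d_\Hs\log n + \log(K/\delta)}{n_{\min}}}\right),
\]
where $n_{\min}=\min_k |G_k|$. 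The only subtlety to note is that Theorem~\ref{thm:generalization} is stated for a deterministic VC class, whereas here the ``model class'' is $\Delta\Hs$; but since $L_k(p,\cdot)=\E_{h\sim p}[L_k(h,\cdot)]$, the generalization gap for any randomized $p$ is dominated by the corresponding gap for some $h\in\Hs$ via the triangle inequality and linearity of expectation, so the deterministic VC bound lifts to $\Delta\Hs$ with no change in rate (this is exactly the argument given in the proof of Theorem~\ref{thm:generalization}).

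Next, I feed this $\beta(\delta)$ into Theorem~\ref{thm:convex-generalization}. That theorem asserts that if $p$ is $(\ell,\alpha)$-convex lexifair on $S$, then, on the good event (probability $\ge 1-\delta$), it is $(\ell,\alpha')$-convex lexifair on $\Ps$ with $\alpha'=\alpha+2\ell\beta(\delta)$. To upgrade the conclusion to $(\ell,2\alpha)$-convex lexifairness, it suffices to enforce $2\ell\beta(\delta)\le\alpha$, i.e.\ $\beta(\delta)\le\alpha/(2\ell)$. Squaring and inverting the expression for $\beta(\delta)$ gives
\[
n_{\min}\;=\;\Omega\!\left(\frac{\ell^{2}\bigl(d_\Hs\log n+\log(K/\delta)\bigr)}{\alpha^{2}}\right),
\]
which matches the hypothesis of the corollary exactly.

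There is no serious obstacle here: the corollary is really a bookkeeping exercise of (i) confirming that the randomized-classifier VC bound gives the correct $\beta(\delta)$, and (ii) inverting $\beta(\delta)\le \alpha/(2\ell)$ for $n_{\min}$. The only mildly delicate point is tracking the factor of $\ell$ correctly --- it appears because each of the $\ell$ recursive levels in the convex lexifair definition contributes an additive $2\beta(\delta)$ slack to the norm of the tolerance vector $\vec{\tau}$ (as seen in the final display of the proof of Theorem~\ref{thm:convex-generalization}), giving $\alpha+2\ell\beta(\delta)$ rather than $\alpha+2\beta(\delta)$. Once this factor is in place, the sample-size expression drops out immediately.
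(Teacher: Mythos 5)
There is a genuine gap: you have proved the wrong corollary. The statement at hand is the classification-setting corollary for \emph{plain} lexicographic fairness (Definition~\ref{def:lexifair}), which appears in Appendix~\ref{sec:lexifair-generalization}; note that its hypothesis and conclusion speak of ``$(\ell,\alpha)$-lexicographic fairness,'' not convex lexicographic fairness, and its sample-size requirement has \emph{no} $\ell^2$ factor. Your argument instead routes everything through Theorem~\ref{thm:convex-generalization}, which is a statement about \emph{convex} lexifairness (Definition~\ref{def:convexlexifair}). This is not a cosmetic mismatch: the paper explicitly says it does not know the precise relationship between approximate lexifairness and approximate convex lexifairness when $\alpha>0$, so you cannot convert the corollary's hypothesis ($(\ell,\alpha)$-lexifair on $S$) into the hypothesis of Theorem~\ref{thm:convex-generalization}, nor convert that theorem's conclusion back into the $(\ell,2\alpha)$-lexifairness on $\Ps$ that you need. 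Moreover, even if the two notions were interchangeable, your derivation requires $n_{\min} = \Omega\bigl(\ell^2(d_\Hs\log n + \log(K/\delta))/\alpha^2\bigr)$, whereas the corollary only assumes $n_{\min} = \Omega\bigl((d_\Hs\log n + \log(K/\delta))/\alpha^2\bigr)$; under the corollary's actual hypothesis your bound only gives $\alpha' = \alpha + 2\ell\beta(\delta) \approx (1+\ell)\alpha$, not $2\alpha$. So your closing claim that the expression ``matches the hypothesis of the corollary exactly'' is false --- it matches the \emph{other} corollary, the one stated immediately after Theorem~\ref{thm:convex-generalization} in Section~\ref{sec:generalization}.

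The correct route, which is what the paper does, is to invoke the separate generalization theorem for Definition~\ref{def:lexifair} proved in Appendix~\ref{sec:lexifair-generalization}: if $p$ is $(\ell,\alpha)$-lexifair on $S$, then with probability $1-\delta$ it is $(\ell,\alpha + 2\beta(\delta))$-lexifair on $\Ps$, with no dependence on $\ell$ in the slack. The factor of $\ell$ (or $j$) you describe as ``mildly delicate bookkeeping'' is precisely the difference between the two definitions: in the convex definition, level $j$ constrains a \emph{sum} of $j$ group errors, so each in-sample/out-of-sample swap costs $j\beta(\delta)$, whereas in the plain lexifair definition level $j$ constrains a single quantity --- the $j$'th highest group error --- and Fact~\ref{fact:order} (componentwise-close vectors have componentwise-close sorted versions) bounds both $\vert\nu_j^1(h')\vert$ and $\vert\nu_j^2\vert$ by $\beta(\delta)$ rather than $j\beta(\delta)$. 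Plugging the VC bound of Theorem~\ref{thm:generalization} (which, as you correctly note, lifts to $\Delta\Hs$ by linearity of expectation) into that theorem and enforcing $2\beta(\delta)\le\alpha$ yields exactly the stated sample-size condition without the $\ell^2$ factor.
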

\end{document}